\documentclass[letterpaper,11pt]{article}

\usepackage[singlespacing]{setspace}
\usepackage[margin=1in]{geometry}

\usepackage[T1]{fontenc}
\usepackage[utf8]{inputenc}

\usepackage{natbib}

\usepackage[utf8]{inputenc} %
\usepackage[T1]{fontenc}    %
\usepackage{hyperref}       %
\hypersetup{
    colorlinks,
    breaklinks,
    linkcolor = blue,
    citecolor = blue,
    urlcolor  = blue,
}
\usepackage{url}            %
\usepackage{amsfonts}       %
\usepackage{nicefrac}       %
\usepackage{microtype}      %
\usepackage{amsmath,bm}
\usepackage{amsthm}
\usepackage[algo2e, ruled, vlined]{algorithm2e}
\usepackage{algorithmicx}
\usepackage{algorithm}
\usepackage[noend]{algcompatible}
\usepackage{bbm}      
\usepackage{mathtools}  \usepackage{matlab-prettifier}
\usepackage{amsopn}
\usepackage{amssymb}
\usepackage{graphicx}
\usepackage{xcolor}
\usepackage{footnote}
\usepackage{makecell}
\makesavenoteenv{tabular}
\makesavenoteenv{table}

\usepackage{booktabs}

\usepackage{xcolor}
\usepackage{soul}
\usepackage{changepage}

\definecolor{Green}{rgb}{0.13, 0.65, 0.3}
\definecolor{Amber}{rgb}{0.3, 0.5, 1.0}
\usepackage[]{color-edits}
\addauthor{HL}{Green}
\addauthor{TJ}{Amber}
\usepackage{comment}

\usepackage{minitoc}

\newcommand{\Alg}{\textbf{Alg}}

\sethlcolor{gray}

\newcommand{\mbD}{\mathbb{D}}

\newcommand{\Ind}[1]{ \field{I}{\left\{{#1}\right\}} }

\newcommand{\norm}[1]{\left\|{#1}\right\|}

\newcommand{\wtilC}{\widetilde{C}}

\newcommand{\naturalnum}{\mathbb{N}}

\newcommand{\bi}{\begin{itemize}}
\newcommand{\ei}{\end{itemize}}

\theoremstyle{theorem} 
\newtheorem{theorem}{Theorem}
\newtheorem{lemma}[theorem]{Lemma}

\newtheorem{corollary}[theorem]{Corollary}

\newtheorem{definition}[theorem]{Definition}
\newtheorem{remark}[theorem]{Remark}

\newtheorem{fact}[theorem]{Fact}
\newtheorem{example}[theorem]{Example}

\usepackage{amssymb}%
\usepackage{pifont}%

\allowdisplaybreaks

\DeclareMathOperator*{\argmin}{\arg\!\min}

\usepackage{amsmath}
\usepackage{amssymb}
\usepackage{graphicx}
\usepackage{mathtools}
\usepackage{enumerate}
\usepackage{enumitem}
\usepackage{footnote}
\usepackage{float}
\usepackage{xspace}
\usepackage{multirow}
\usepackage{nicefrac}
\usepackage{wrapfig}
\usepackage{framed}
\usepackage{url}
\usepackage{tikz}
\usetikzlibrary{arrows,chains,matrix,positioning,scopes}

\newcommand{\calS}{{\mathcal{S}}}

\newcommand{\calK}{{\mathcal{K}}}
\newcommand{\calE}{{\mathcal{E}}}

\newcommand{\calT}{{\mathcal{T}}}

\newcommand{\calF}{{\mathcal{F}}}
\newcommand{\calV}{{\mathcal{V}}}

\newcommand{\calG}{{\mathcal{G}}}

\newcommand{\Conv}{\texttt{Conv}}

\newcommand{\field}[1]{\mathbb{#1}}

\newcommand{\E}{\field{E}}

\renewcommand{\P}{\field{P}}

\newcommand{\Reg}{{\text{\rm Reg}}}

\newcommand{\order}{\ensuremath{\mathcal{O}}}
\newcommand{\otil}{\ensuremath{\widetilde{\mathcal{O}}}}

\newcommand{\TV}{\texttt{TV}}

\newcommand{\ENR}{\textsc{Enr}}
\newcommand{\NR}{\textsc{Nr}}
\newcommand{\ExtR}{\textsc{Reg}}

\newcommand{\rbr}[1]{\left(#1\right)}

\newcommand{\sbr}[1]{\left[#1\right]}

\newcommand{\cbr}[1]{\left\{#1\right\}}

\newcommand{\abr}[1]{\left|#1\right|}
\newcommand\numberthis{\addtocounter{equation}{1}\tag{\theequation}}

\DeclareFontFamily{OMX}{MnSymbolE}{}
\DeclareFontShape{OMX}{MnSymbolE}{m}{n}{
    <-6>  MnSymbolE5
   <6-7>  MnSymbolE6
   <7-8>  MnSymbolE7
   <8-9>  MnSymbolE8
   <9-10> MnSymbolE9
  <10-12> MnSymbolE10
  <12->   MnSymbolE12}{}
\DeclareSymbolFont{mnlargesymbols}{OMX}{MnSymbolE}{m}{n}
\SetSymbolFont{mnlargesymbols}{bold}{OMX}{MnSymbolE}{b}{n}
\DeclareMathDelimiter{\llangle}{\mathopen}{mnlargesymbols}{'164}{mnlargesymbols}{'164}
\DeclareMathDelimiter{\rrangle}{\mathclose}{mnlargesymbols}{'171}{mnlargesymbols}{'171}

\usepackage{prettyref}
\newcommand{\pref}[1]{\prettyref{#1}}

\newcommand{\savehyperref}[2]{\texorpdfstring{\hyperref[#1]{#2}}{#2}}
\newrefformat{eq}{\savehyperref{#1}{Eq.~\eqref{#1}}}
\newrefformat{eqn}{\savehyperref{#1}{Equation~\eqref{#1}}}
\newrefformat{lem}{\savehyperref{#1}{Lemma~\ref*{#1}}}
\newrefformat{def}{\savehyperref{#1}{Definition~\ref*{#1}}}
\newrefformat{line}{\savehyperref{#1}{line~\ref*{#1}}}
\newrefformat{thm}{\savehyperref{#1}{Theorem~\ref*{#1}}}
\newrefformat{corr}{\savehyperref{#1}{Corollary~\ref*{#1}}}
\newrefformat{cor}{\savehyperref{#1}{Corollary~\ref*{#1}}}
\newrefformat{col}{\savehyperref{#1}{Corollary~\ref*{#1}}}
\newrefformat{sec}{\savehyperref{#1}{Section~\ref*{#1}}}
\newrefformat{app}{\savehyperref{#1}{Appendix~\ref*{#1}}}
\newrefformat{assum}{\savehyperref{#1}{Assumption~\ref*{#1}}}
\newrefformat{ex}{\savehyperref{#1}{Example~\ref*{#1}}}
\newrefformat{fig}{\savehyperref{#1}{Figure~\ref*{#1}}}
\newrefformat{tab}{\savehyperref{#1}{Table~\ref*{#1}}}
\newrefformat{alg}{\savehyperref{#1}{Algorithm~\ref*{#1}}}
\newrefformat{rem}{\savehyperref{#1}{Remark~\ref*{#1}}}
\newrefformat{conj}{\savehyperref{#1}{Conjecture~\ref*{#1}}}
\newrefformat{prop}{\savehyperref{#1}{Proposition~\ref*{#1}}}
\newrefformat{proto}{\savehyperref{#1}{Protocol~\ref*{#1}}}
\newrefformat{prob}{\savehyperref{#1}{Problem~\ref*{#1}}}
\newrefformat{claim}{\savehyperref{#1}{Claim~\ref*{#1}}}
\newrefformat{que}{\savehyperref{#1}{Question~\ref*{#1}}}
\newrefformat{obs}{\savehyperref{#1}{Observation~\ref*{#1}}}

\title{Online Learning for Uninformed Markov Games: \\ Empirical Nash-Value Regret and Non-Stationarity Adaptation}

\author{%
Junyan Liu\thanks{University of Washington. Email: \texttt{junyanl1@cs.washington.edu}.}
\and
Haipeng Luo\thanks{University of Southern California. Email: \texttt{haipengl@usc.edu}.}
\and
Zihan Zhang\thanks{Hong Kong University of Science and Technology. Email: \texttt{zihanz@ust.hk}.}
\and
Lillian J. Ratliff\thanks{University of Washington. Email: \texttt{ratliffl@uw.edu}.
}
}

\date{}

\begin{document}

\doparttoc
\faketableofcontents %

\maketitle

\begin{abstract}
We study online learning in two-player uninformed Markov games, where the opponent’s actions and policies are unobserved. In this setting, \citet{tian2021online} show that achieving no-external-regret is impossible without incurring an exponential dependence on the episode length $H$. 
They then turn to the weaker notion of Nash-value regret and propose a V-learning algorithm with regret $\otil(K^{2/3})$ after $K$ episodes.
However, their algorithm and guarantee do not adapt to the difficulty of the problem:
even in the case where the opponent follows a fixed policy and thus $\otil(\sqrt{K})$ external regret is well-known to be achievable,
their result is still the \textit{worse} rate $\otil(K^{2/3})$ on a \textit{weaker} metric.

In this work, we fully address both limitations.
First, we introduce \textit{empirical Nash-value regret}, 
a new regret notion that is strictly stronger than Nash-value regret and naturally reduces to external regret when the opponent follows a fixed policy.
Moreover, under this new metric, we propose a parameter-free algorithm that achieves an $\otil \big(\min\{\sqrt{K} + (CK)^{1/3}, \sqrt{LK}\}\big)$ regret bound, where $C$ quantifies the ``variance'' of the opponent’s policies and $L$ denotes the number of policy switches (both at most $\order(K)$). 
Therefore, our results not only recover the two extremes---$\otil(\sqrt{K})$ external regret when the opponent is fixed and $\otil(K^{2/3})$ Nash-value regret in the worst case---but also smoothly interpolate between these extremes by automatically adapting to the opponent's non-stationarity.
We achieve so by first providing a new analysis of the epoch-based V-learning algorithm by \cite{mao2022improving}, establishing an $\otil(\eta C + \sqrt{K/\eta})$ regret bound, where $\eta$ is the epoch incremental factor.
Next, we show how to adaptively restart this algorithm with an appropriate $\eta$ in response to the potential non-stationarity of the opponent, eventually achieving our final results.
\end{abstract}

\section{Introduction}

Multi-agent reinforcement learning (MARL), often modeled as a Markov game (MG), provides a general framework for studying sequential decision-making problems involving multiple strategic agents whose actions jointly influence a shared environment.
Recent advances have demonstrated the empirical success of MARL in domains with complex strategic interactions, including the game of Go \citep{silver2016mastering,silver2017mastering}, Poker \citep{brown2019superhuman}, large-scale video games \citep{vinyals2019grandmaster}, and autonomous driving \citep{shalev2016safe,zhou2021smarts}.

A substantial body of prior work studies MGs in the self-play setting, where all players follow the same learning algorithm to optimize their joint behavior, typically with the goal of minimizing the number of episodes required to identify a good joint policy \citep{bai2020provable,bai2020near,liu2021sharp,mao2022improving,jin2024v}. While this formulation has led to significant algorithmic progress, it abstracts away challenges that arise in many practical multi-agent systems. In such settings, an agent may repeatedly interact with opponents whose learning rules, objectives, or update schedules are unknown and not aligned with its own. This motivates the study of learning Markov games with \emph{arbitrary opponents} \citep{xie2020learning,tian2021online}, where an agent must balance exploration of unknown environments with strategic decision-making against opponents that may follow arbitrary, potentially history-dependent policies.

Depending on the observation model, MGs with arbitrary opponents can be divided into the informed setting, where the opponent’s actions are observable, and the uninformed setting, where the opponent’s actions are unobserved. While near-optimal regret bounds have been established for the informed setting \citep{xie2020learning,tian2021online}, 
learning in uninformed MGs is significantly more challenging, since the lack of access to the opponent’s actions prevents explicitly learning the transition model or maintaining a table of state-action value ($Q$-value table).
Indeed, \citet{tian2021online} show that an exponential dependence on the episode length $H$ is unavoidable when performance is measured by the standard external regret.
As a remedy, \citet{tian2021online} consider a weaker performance measure that compares the algorithm's total reward to the Nash value of the MG, known as Nash-value regret.
They then show that a variant of the V-learning algorithm \citep{bai2020near} with an appropriate choice of parameters achieves an $\otil(K^{2/3})$ Nash-value regret bound after $K$ episodes.
This remains the best result in this setting, with the current best lower bound being $\Omega(\sqrt{K})$.

However, while the tightness of their result in the worst case remains unknown, it is certainly suboptimal in special cases. 
For example, when the opponent follows a fixed policy, then the learner is simply facing a fixed environment, in which case $\Theta(\sqrt{K})$ \textit{external} regret is well-known~\citep{azar2017minimax};
on the other hand, the result of \citet{tian2021online} for this case not only is about the weaker Nash-value regret (see \pref{rem:discussion_reg}), but also has a worse rate of $\otil(K^{2/3})$.
\citet{tian2021online} leave it as an open question whether this limitation is fundamental or merely an artifact of the analysis,
but more generally, this begs the following natural questions:
\begin{center}
\emph{
1. Is there a better regret notion that naturally interpolates between standard external regret and Nash-value regret as the opponent's non-stationarity increases?}\\
\emph{
2. Correspondingly, for such adaptive regret notions, are there efficient algorithms whose regret  smoothly interpolates between $\otil(\sqrt{K})$ and $\otil(K^{2/3})$ as the opponent's non-stationarity increases?}
\end{center}

\paragraph{Contributions.}
In this paper, we provide affirmative answers to both questions. %
Specifically, our main contributions are summarized as follows.
\begin{itemize}[leftmargin=*]
    \item For two-player uninformed MGs, we introduce a new regret notion termed \emph{empirical Nash-value regret} (\ENR{}), which is the Nash value for a game where at each state the opponent is restricted to the policies that they have played over the $K$ episodes.
    Consequently, \ENR{} not only is strictly stronger than the Nash-value regret (\NR{}) considered in prior work,
    but also reduces to the standard external regret 
    when the opponent follows a fixed policy. %

    \item Towards achieving our second goal, we start by introducing a novel analysis of the epoch-based V-learning algorithm of \citet{mao2022improving}, which turns out to be more manageable than the original V-learning algorithm~\citep{bai2020near, tian2021online}.
    Note that \citet{mao2022improving} analyze this algorithm under the self-play setting, which is very different from our analysis that deals with an arbitrary opponent.
    Specifically, we establish an $\otil(\eta C+\sqrt{K/\eta})$ regret bound under the new metric \ENR{}, where $\eta\in(0,1/H]$ is the epoch incremental factor and $C$ is a certain non-stationarity measure that quantifies the ``variance'' of the opponent’s policies (see \pref{eq:def_C_epoch_Vol} for its formal definition). When the opponent uses a single fixed policy, we have $C=0$, and thus choosing $\eta=1/H$ yields an $\otil(\sqrt{K})$ external regret bound. This result is the first to show that V-learning–type algorithms can achieve an $\otil(\sqrt{K})$ external regret bound in a stationary environment, answering the open question raised by \citet{tian2021online}. 
    More generally, setting $\eta = \min\{1/H, K^{1/3}C^{-2/3}\}$ leads to a bound of order $\otil(\sqrt{K}+(CK)^{1/3})$.
    The issue is of course that this tuning requires the knowledge of $C$.
    
    \item To address the limitation that epoch-based V-learning cannot automatically adapt to unknown non-stationarity, we further propose a meta-algorithm that repeatedly restarts epoch-based V-learning in response to potential non-stationarity in the environment. In addition, motivated by the observation that the environment is nearly stationary when the opponent switches policies infrequently (yet $C$ can be as large as $\Omega(T)$ in this case), we further equip the meta-algorithm with a mechanism to detect such changes. As a result, the final (parameter-free) algorithm achieves a regret bound of $\otil(\min\{\sqrt{K}+(CK)^{1/3},\sqrt{LK}\})$ for \ENR{}, where $L$ denotes the total number of policy switches by the opponent. 
    This shows that our algorithm automatically adapts to both non-stationarity measures ($C$ and $L$) and enjoys a bound on $\ENR{}$ that smoothly interpolates between $\order(\sqrt{K})$ and $\otil(K^{2/3})$ as $C$ and $L$ increase, completely resolving our second question.
\end{itemize}

\paragraph{Related work.} Markov games, also known as stochastic games \citep{shapley1953stochastic}, are a fundamental framework in MARL. Early work primarily established asymptotic convergence to Nash equilibrium under the assumption that the transition dynamics and rewards are known \citep{littman2001friend,hu2003nash,hansen2013strategy}. More recently, a growing literature has developed non-asymptotic guarantees for MGs without additional structural assumptions \citep{wei2017online,sidford2020solving,bai2020provable,xie2020learning,liu2021sharp,tian2021online,mao2022improving}. In the self-play setting, where all players deploy the same (or symmetric) learning algorithms, \citet{sidford2020solving,bai2020provable,liu2021sharp,mao2022improving} study the sample complexity of computing an $\epsilon$-approximate Nash equilibrium.

Our work is most closely related to \citet{wei2017online,xie2020learning,tian2021online}, which study Markov games with arbitrary opponents. In particular, \citet{wei2017online,xie2020learning} consider settings in which the opponent’s actions are observable, and propose algorithms that achieve $\sqrt{K}$-order regret bounds. In contrast, \citet{tian2021online} study the more challenging uninformed setting, where the opponent’s actions are unobserved. They show that external regret admits a lower bound of $\Omega\big(\min{\sqrt{2^{H}K},K}\big)$, which motivates the study of a weaker regret notion, \NR{}. Building on \citet{bai2020near}, they propose a V-learning–type algorithm that achieves $\otil(K^{2/3})$ regret under \NR{}.
Given the impossibility result for external regret in the uninformed setting, \citet{liu2022learning} investigate what additional assumptions are necessary to recover $\sqrt{K}$-order external regret bounds. 

In the single-agent RL literature, a line of work also studies how to adapt to the non-stationarity of a sequence of changing Markov Decision Processes (MDPs).
For example, \citet{wei2021non} studied the strong notion of dynamic regret (that competes to the best policy of each MDP) and proposed a black-box reduction approach to adapt to unknown non-stationarity.
\citet{jin2023no} focused on external regret and developed an algorithm that adapts to the unknown non-stationarity of the transitions (while allowing rewards to be arbitrary).
Since playing a Markov game with an arbitrary opponent can be viewed as a single agent interacting with a sequence of changing MDPs, one can apply those algorithms to our problem. 
However, unlike our results, the resulting bounds are all necessarily linear in $K$ in the worst case since the regret notions they consider are stronger.

Finally, we point out that a recent work by
\citet{appel2025regret} obtains a $\otil(\sqrt{K})$ 
guarantee on $\NR{}$ for \textit{turn-based} Markov games,
which we emphasize does \textit{not} translate to a same bound for our problem.
This is because in their definition of $\NR{}$, the Nash value is with respect to a game where at each state the min-player can make a decision based on the max-player's current action.
This makes their Nash value smaller than ours and thus their regret measure weaker than those considered by \citet{tian2021online} and our work.
To the best of our knowledge, whether the worst-case $\otil(K^{2/3})$ rate for our setting can be improved remains open.

\section{Preliminaries}

We consider an episodic two-player Markov game (MG) with finite state and action spaces.\footnote{Here, we focus on two players only for simplicity. 
From the learner's perspective, an $m$-player general-sum MG can be written as an equivalent two-player MG by grouping the other $m-1$ players into a single opponent with a joint action. See \citet[Section 5]{tian2021online} for more details.} 
Let $\Delta(X)$ denote the set of probability distributions over a finite set $X$. Such an MG is specified by a tuple $(\mathcal S, \mathcal A, \mathcal B, P, r, H)$, where
\begin{itemize}%
    \item $H$ is the number of steps in an episode;
    \item $S= \cup_{h \in [H+1]} S_h$ is the state space, where there is a single terminal state in $S_{H+1}$;
    \item $A = \cup_{h \in [H]} A_h$ and $B = \cup_{h \in [H]} B_h$ are the action space for the max-player (learner) and the min-player (opponent) respectively;
    \item $P$ is a collection of unknown transition functions $\cbr{P_h: S_h \times A_h \times B_h \to \Delta(S_{h+1}) }_{h \in [H]}$.
    \item $r$ is the collection of reward functions $\cbr{r_h: S_h \times A_h \times B_h \to [0,1] }_{h \in [H]}$.
\end{itemize}

In each episode $k \in [K]$, the Markov game starts from an adversarially chosen initial state $s_1^k \in \mathcal S_1$. At the beginning of this episode, each player commits to a policy that may depend on the entire history of the previous $k-1$ episodes. We denote our policy by $\mu^k = \{\mu_h^k\}_{h \in [H]}$ and the opponent’s policy by $\nu^k = \{\nu_h^k\}_{h \in [H]}$, where $\mu_h^k : \mathcal S_h \to \Delta(\mathcal A_h)$ and $\nu_h^k : \mathcal S_h \to \Delta(\mathcal B_h)$ are policies for step $h$. Neither player is allowed to modify its policy within an episode.
At each step $h \in [H]$, both players observe the current state $s_h^k$ and simultaneously select actions $a_h^k \in \mathcal A_h$ drawn from $\mu_h^k(s_h^k)$ and $b_h^k \in \mathcal B_h$ drawn from $\nu_h^k(s_h^k)$, respectively. They then observe reward $r_h^k:=r_h(s_h^k,a_h^k,b_h^k)$, after which the environment transitions to the next state according to $s^k_{h+1} \sim P_h(\cdot |s_h^k,a_h^k,b_h^k)$.
Importantly, at each step $h \in [H]$, the opponent’s action $b_h^k$ and policy $\nu_h^k$ are \emph{unobservable} to the learner. 
On the other hand, the opponent can observe the learner's actions and is allowed to choose policies arbitrarily and adaptively as a function of the interaction history, knowing the MG and learner's algorithm (but not their randomness) ahead of time.
However, for ease of exposition, we sometimes restrict our attention to an oblivious opponent whose policies cannot depend on the learner's past actions, and defer the details for a fully adaptive opponent to the appendix.

For a policy pair $(\mu,\nu)$, step $h\in [H]$, state $s \in S_h$, and actions $a \in A_h,b \in B_h$, we define the standard state value function and $Q$-value function as follows:
\begin{align*}
    V^{\mu,\nu}_h(s) &= \E_{\mu,\nu} \sbr{ \sum_{h'=h}^H r_{h'}(s_{h'},a_{h'},b_{h'}) \mid s_h=s }, \\
    Q^{\mu,\nu}_h(s,a,b) &= \E_{\mu,\nu} \sbr{ \sum_{h'=h}^H r_{h'}(s_{h'},a_{h'},b_{h'}) \mid s_h=s ,a_h=a,b_h=b},
\end{align*}
where $\{(s_{h'},a_{h'},b_{h'})\}_{h'\geq h}$ is the random trajectory generated by following $\mu$ and $\nu$ starting from $s_h=s$ for $V^{\mu,\nu}_h(s)$ or $s_h=s, a_h=a, b_h=b$ for $Q^{\mu,\nu}_h(s,a,b)$.

For shorthand, we define operators
\begin{align*}
    P_hV(s,a,b) = \E_{s' \sim P_h(\cdot|s,a,b)}[V(s')], \quad \text{and}\quad\mbD_{\mu,\nu}[Q] (s) = \E_{a \sim \mu(\cdot |s) ,b \sim \nu (\cdot|s)} \sbr{Q(s,a,b)}.
\end{align*}

Then, we have $V^{\mu,\nu}_h(s) =\mbD_{\mu_h,\nu_h}[Q^{\mu,\nu}](s)$ and $Q^{\mu,\nu}_h(s,a,b) = (r_h +P_hV_{h+1}^{\mu,\nu})(s,a,b)$.
Central to this new regret notion is the following definition of \textit{empirical state Nash values}:
for each $(h,s) \in [H] \times S_h$, recursively define
\begin{equation} \label{eq:enr_benchmark_def}
V_{h}^*(s) = \max_{\mu \in \Delta(A_h)} \min_{\nu \in \cbr{\nu^k_h(s) }_{k \in [K]} } \mbD_{\mu,\nu} \sbr{r_h +P_hV^*_{h+1} }(s),
\end{equation}
with $V_{H+1}^*(s)=0$ for all $s \in S_{H+1}$. 
It is ``empirical'' since in this definition we restrict the min-player to choose from the $K$ policies that are used by the opponent over the $K$ episodes.
If we were to relax this restriction and allow the min-player to choose any $\nu \in \Delta(B_h)$, then $V_{h}^*(s)$ became the exact and standard state Nash value.
With this definition, our proposed \textit{empirical Nash-value regret} ($\ENR$) is defined as
\begin{equation}\label{eq:ENR}
    \ENR_K = \sum_{k=1}^K \rbr{V^{*}_1(s_1^k) -V^{\mu^k,\nu^k}_1(s_1^k)},
\end{equation}
which compares the cumulative empirical Nash-values (at initial states) and the total expected reward of the learner over $K$ episodes.

In contrast, the Nash-value regret (denoted as $\NR{}_K$) considered by~\citet{tian2021online} replaces the empirical Nash values in \pref{eq:ENR} by the actual Nash values mentioned earlier.
On the other hand, the standard external regret is defined as $\ExtR_K = \max_{\mu} \sum_{k=1}^K \rbr{V^{\mu, \nu^k}_1(s_1^k) -V^{\mu^k,\nu^k}_1(s_1^k)}$,
comparing the total reward achieved by the best fixed policy and that of the learner.
From these definitions, the following fact is straightforward.

\begin{fact}
Empirical Nash-value regret is always stronger than Nash-value regret:
$\ENR_K \geq \NR_K$. Moreover, when the opponent uses a fixed policy, that is, $\nu_1=\cdots=\nu_K$, empirical Nash-value regret recovers external regret: $\ENR_K = \ExtR_K$.
\end{fact}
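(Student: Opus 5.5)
The plan is to prove the two claims separately. Both follow from a backward induction over $h$ on the recursion \pref{eq:enr_benchmark_def}, using that the operator $\mbD_{\mu,\nu}[r_h+P_hV](s)$ is monotone in $V$.

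\textbf{First claim ($\ENR_K\ge\NR_K$).} Let $V^{\mathrm{N}}_h$ denote the standard state Nash values. They satisfy the same recursion as \pref{eq:enr_benchmark_def}, except that the inner minimum ranges over all $\nu\in\Delta(B_h)$, and $V^{\mathrm N}_{H+1}=0$. I will show $V^*_h(s)\ge V^{\mathrm N}_h(s)$ for every $h$ and $s\in S_h$ by backward induction. The base case $h=H+1$ is trivial since both are zero. For the inductive step, assume $V^*_{h+1}\ge V^{\mathrm N}_{h+1}$ pointwise. Since $P_h(\cdot|s,a,b)$ is a probability distribution, $P_hV^*_{h+1}\ge P_hV^{\mathrm N}_{h+1}$ pointwise. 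Hence, for every pair $(\mu,\nu)$,
\[
\mbD_{\mu,\nu}[r_h+P_hV^*_{h+1}](s)\ge \mbD_{\mu,\nu}[r_h+P_hV^{\mathrm N}_{h+1}](s).
\]
Taking the minimum over the finite set $\{\nu^k_h(s)\}_{k\in[K]}\subseteq\Delta(B_h)$ gives a value at least the minimum of the right-hand side over all of $\Delta(B_h)$. Taking the maximum over $\mu$ then yields $V^*_h(s)\ge V^{\mathrm N}_h(s)$. The two regrets share the term $-V^{\mu^k,\nu^k}_1(s_1^k)$, so applying this at $h=1$, $s=s_1^k$ and summing over $k$ gives $\ENR_K\ge\NR_K$ term by term.

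\textbf{Second claim (fixed opponent).} Suppose $\nu^1=\cdots=\nu^K=\nu$. Then the restricted set in \pref{eq:enr_benchmark_def} is the singleton $\{\nu_h(s)\}$, so the recursion becomes
\[
V^*_h(s)=\max_{\mu\in\Delta(A_h)}\mbD_{\mu,\nu_h}[r_h+P_hV^*_{h+1}](s).
\]
This is exactly the Bellman optimality equation of the single-agent MDP obtained by marginalizing out $b\sim\nu_h(s)$. I would establish two facts, each by backward induction.

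First, $V^{\mu,\nu}_h\le V^*_h$ for every policy $\mu$. This uses $V^{\mu,\nu}_h(s)=\mbD_{\mu_h,\nu_h}[r_h+P_hV^{\mu,\nu}_{h+1}](s)$ together with the same monotonicity as in the first claim.

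Second, define a policy $\mu^\star$ by letting $\mu^\star_h(s)$ be a maximizer in the displayed recursion, which exists by compactness. Then $V^{\mu^\star,\nu}_h=V^*_h$ at every state simultaneously.

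Together these give $\max_\mu V^{\mu,\nu}_1(s)=V^*_1(s)$ for all $s$, attained by the single policy $\mu^\star$.

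\textbf{Main subtlety.} The point needing care is that $\ExtR_K$ takes one maximum over $\mu$ outside the sum, while $\ENR_K$ in effect places a maximum inside each term. Because $\mu^\star$ is optimal for all initial states at once, we get
\[
\max_\mu\sum_k V^{\mu,\nu}_1(s_1^k)=\sum_kV^{\mu^\star,\nu}_1(s_1^k)=\sum_kV^*_1(s_1^k).
\]
Subtracting $\sum_kV^{\mu^k,\nu}_1(s_1^k)$ from both sides gives $\ExtR_K=\ENR_K$.
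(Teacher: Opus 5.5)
Your proof is correct. It is exactly the routine check the paper means when it calls this fact ``straightforward'' from the definitions; the paper gives no written proof.

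For the first claim, you use monotonicity of the recursion \pref{eq:enr_benchmark_def} when the min-player's feasible set grows from $\{\nu^k_h(s)\}_{k\in[K]}$ to $\Delta(B_h)$. For the second, you use the Bellman optimality equation of the induced MDP. You also handle the one point that needs care: a single Markov policy $\mu^\star$ is optimal from every initial state at once, so the maximum outside the sum in $\ExtR_K$ equals the sum of per-episode benchmarks $V^*_1(s_1^k)$.
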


Consequently, the subsequent bounds we derived for $\ENR_K$ are also upper bounds on $\NR_K$, and they also become a bound on $\ExtR_K$ when the opponent uses a fixed policy.
More generally, $\ENR_K$ decreases as the diversity of the opponent's polices increases.

\textbf{Other notations.} For points $x_1,\ldots,x_n$, we use $\Conv(x_1,\ldots,x_n)$ to denote their convex hull. For two probability distributions $P,Q$ on the same measurable space, $\TV(P,Q)=\sup_{A} |P(A)-Q(A)|$ stands for the total variation distance. Define $\log^+(x)=\max\{\log(x),0\}$.
We also use $\calF_{h}^k$ to denote the history before step $h$ in episode $k$.

\section{Base Algorithm: Epoch V-learning and Analysis}
\label{sec:epoch_Vol}

As mentioned, our starting point is the epoch V-learning algorithm proposed by \citet{mao2022improving},
which simplifies the algorithmic design of the original V-learning methods (e.g., \citealp{bai2020near,jin2024v}). 
It turns out that it also allows easier analysis for $\ENR_K$ as we show below.
To introduce our novel analysis, 
we first review their algorithm (whose pseudocode is provided in \pref{alg:epoch_V_ol})
and point out a slight change that is critical for our purpose.

\setcounter{AlgoLine}{0}
\begin{algorithm}[t]
\DontPrintSemicolon
\caption{Epoch V-learning}
\label{alg:epoch_V_ol}
\textbf{Input}: total episodes $K$, confidence $\delta \in (0,1)$, epoch incremental factor $\eta>0$.

\textbf{Initialize}: $V^k_{H+1}\gets 0$ for all $k$; $V^1_h(s)\gets H-h+1,N_0(h,s)\gets 1,N_1(h,s)\gets 0, \calE(h,s) \gets \{1\},\calK_{1}(h,s)\gets\emptyset$, bandit subroutine $\textsc{AdvBandit}_{h,s}$ for all $h,s$; $\pi_h^1(a|s)\gets 1/|A_h|$ for all $s,a,h$.

\For{episode $k=1,2,\ldots,K$}{

\For{step $h=1,2,\ldots,H$}{

Receive state $s_h^k$. Set $\tau \gets |\calE(h,s_h^k)|$ to be the current epoch index for $(h,s_h^k)$ pair.

Take action $a^k_h \sim \pi_h^k(\cdot|s_h^k)$; observe reward $r_h^k$ and next state $s_{h+1}^k$.

Update $\calK_{\tau}(h,s_{h}^k) \gets \calK_{\tau}(h,s_{h}^k)  \cup \{k\}$ and
$N_{\tau}(h,s_{h}^k) \gets |\calK_{\tau}(h,s_{h}^k)|$.

\If(\hfill $\triangleright$ entering a new epoch){$N_{\tau}(h,s_{h}^k)=\lceil   (1+\eta)N_{\tau-1}(h,s_{h}^k) \rceil$}{
Update $\calE(h,s_h^k) \gets \calE(h,s_h^k) \cup \{\tau+1\}$.

Update $ V_h^{k+1}(s_h^k) \gets L_{\tau}(h,s_h^k)$ where $L_{\tau}(h,s)$ is defined in \pref{eq:non_increasing_update}.

Set $N_{\tau+1}(h,s_h^k) \gets 0$, $\calK_{\tau+1} (h,s_h^k) \gets \emptyset$, and $\pi^{k+1}_h(\cdot|s_h^k) \gets 1/|A_h|$.

Restart $\textsc{AdvBandit}_{h,s}$.

}\Else{
Set $V_h^{k+1}(s_h^k)\gets V_h^{k}(s_h^k)$.

Update $\pi_h^{k+1}(\cdot|s^k_h) \gets \textsc{AdvBandit}_{h,s} \big(  a_h^k,\frac{1}{H}(r_h^k+V^{k}_{h+1}(s_{h+1}^k)) \big) $. 
}

Set $V_h^{k+1}(s)\gets V_h^{k}(s)$ and
$\pi^{k+1}_h(\cdot|s) \gets \pi^{k}_h(\cdot|s)$ for all $s \in S_h \backslash s_h^k$.

}

}    
\end{algorithm}

\paragraph{Epoch schedule.}
At a high level, for each step-state pair $(h,s) \in [H] \times S_h$, \pref{alg:epoch_V_ol} partitions the set of all episodes where this pair is visited into epochs of geometrically increasing lengths,
and within each epoch, a new adversarial bandit subroutine is used to update the policy for this $(h,s)$ pair.
More concretely, $\calE(h,s)$ records the set of epoch indices for $(h,s)$ pair, $\calK_{\tau}(h,s)$ records the set of episodes in which $(h,s)$ is visited during epoch $\tau$, 
and $N_{\tau}(h,s)=|\calK_{\tau}(h,s)|$ is the visitation count during that epoch.
Given an epoch incremental factor $\eta >0$, if $N_{\tau}(h,s) =\lceil  (1+\eta)N_{\tau-1}(h,s) \rceil$, 
that is, the visitation count for $(h,s)$ pair increases by a factor of $1+\eta$,
then a new epoch for this pair is created.
We point out that~\citet{mao2022improving} always fix $\eta$ to be $1/H$,
but it is important for us to dynamically tune $\eta$ on the fly.

\paragraph{Optimistic Nash-value estimate.}
At the start of an epoch for $(h,s)$, a new adversarial bandit subroutine is initialized.
During the epoch, it is updated whenever state $s_h^k=s$ is visited, using the observed reward $r_h^k$ plus a \textit{bonus} 
$V^k_{h+1}(s_{h+1}^k)$, which, as we show in the analysis, serves as an optimistic estimate of the empirical Nash value $V^*_{h+1}(s_{h+1}^k)$.
Note that for each $(h,s)$, its optimistic estimate $V^k_h(s)$ stays the same within an epoch and is updated only when an epoch for $(h,s)$ ends.
Concretely, if the $\tau$-th epoch for pair $(h,s_h^k)$ ends at episode $k$, then we update $V^{k+1}_h(s_h^k) \gets  L_{\tau}(h,s_h^k)$ where 
$L_\tau(h,s)$ is the sum of the average ``reward plus bonus'' for state $s$ during the current epoch and a confidence width $\beta_{N_{\tau}(h,s)}$ term, truncated to $H-h+1$:
\begin{align} \label{eq:non_increasing_update}
   L_{\tau}(h,s) = \min \cbr{H-h+1,\frac{\sum_{j \in \calK_{\tau}(h,s)  } \rbr{r_h^j +V^{j}_{h+1}(s_{h+1}^j)}}{ N_{\tau}(h,s)}   +\beta_{N_{\tau}(h,s)}},
\end{align}
for all $\tau \geq 1$ (and $L_{0}(h,s)=H-h+1$ for all $h,s$).
Here, for any $n \in \mathbb{Z}_+$ and some known logarithmic factors $\iota>0$ (that depends on the choice of adversarial bandit subroutine), 
$\beta_n$ is a confidence width defined as
\begin{align} \label{eq:def_iota}
   \beta_n =    \sqrt{ \iota/n }\quad  \text{where}\quad \iota =\text{poly}(H,|A|,\log(K|S||A|/\delta)).
\end{align}

\paragraph{Adversarial bandit subroutine.} The adversarial bandit subroutine for step-state pair $(h,s)$ is denoted by $\textsc{AdvBandit}_{h,s}$. If $(h,s)$ is visited in episode $k$ during epoch $\tau$, $\textsc{AdvBandit}_{h,s}$ updates its model with the played action $a_h^k$ and a normalized value $\frac{1}{H}(r_h^k +V^{k}_{h+1}(s_{h+1}^k)) \in [0,1]$. Then, $\textsc{AdvBandit}_{h,s}$ outputs a policy $\pi^{k+1}_h(\cdot|s_h^k) \in \Delta(A_h)$. 
In this case, the external regret of $\textsc{AdvBandit}_{h,s} $ till the $n$-th visit of $(h,s)$ in epoch $\tau$ is defined as
\begin{equation} \label{eq:ext_reg_adv_bandit}
    \Reg^{\tau}_{h,s}(n) :=  \max_{\mu \in \Delta(A_h)} \frac{1}{H} \sum_{i=1 }^n  \rbr{\mbD_{\mu,\nu^{t_i}_h} \sbr{ r_h +P_h V_{h+1}^{t_i} }(s) -  \mbD_{\mu^{t_i}_h,\nu^{t_i}_h} \sbr{r_h +P_h V_{h+1}^{t_i} }(s)},
\end{equation}
where $t_i$ is the episode corresponding to the $i$-th visit of $(h,s)$ pair in epoch $\tau$.
As the visitation counter $N_{\tau}(h,s)$ may not reach $\lceil  (1+\eta)N_{\tau-1}(h,s) \rceil$ when \pref{alg:epoch_V_ol} ends, we require $\textsc{AdvBandit}_{h,s}$ to satisfy a high-probability anytime regret guarantee for some function $\xi$, polynomial in all arguments, and confidence parameter $\delta' \in (0,1)$:
\begin{equation} \label{eq:require_adv_bandit_reg}
\P \rbr{\forall n \in \naturalnum: \Reg^{\tau}_{h,s}(n) \leq    \xi \rbr{\log(1/\delta'),n,|A_h|}  } \geq 1-\delta'.
\end{equation}

Indeed, most existing adversarial bandit algorithms under the Follow-the-Regularized-Leader (FTRL) framework with implicit exploration (IX) estimators~\citep{neu2015explore} and doubling trick satisfy this requirement.
For completeness, we explicitly spell out the following example.

\begin{example}\citep[Theorem 2]{lecture13}\label{ex:adv_subroutine}
FTRL with $1/2$-Tsallis entropy, IX estimator, and doubling trick satisfies \pref{eq:require_adv_bandit_reg} with $\xi \rbr{\log(1/\delta'),n,|A_h|}=\order (\sqrt{|A_h|n \log(1/\delta')})$ and $\xi \rbr{\log(1/\delta'),n,|A_h|}=\order (\sqrt{|A_h|n \log(|A_h|/\delta')})$  for an oblivious environment and an adaptive environment, respectively. Here, an extra $|A_h|$ in the logarithmic term for the adaptive environment is due to an additional union bound. With such a bandit subroutine, we use $\iota=\Theta(H^2|A|\log(HK|A||S|/\delta))$ in \pref{alg:epoch_V_ol}.
\end{example}

\subsection{Main Results for Epoch V-Learning}
Now, we present a new result for epoch V-learning.
Note again that~\citep{mao2022improving} analyze this algorithm under the self-play setting, which is rather different from the analysis herein for the new regret metric $\ENR{}$ under an arbitrary opponent.
To this end, we introduce a non-stationarity measure capturing the ``variance'' of the opponent's policies, defined as
\begin{align} \label{eq:def_C_epoch_Vol}
  C=  \sum_{h=1}^H \sum_{k=1}^K \TV \rbr{\nu^k_h(s_h^k),\nu^*_h(s_h^k)  },
\end{align}
where $ \nu^{*}_{h}(s) \in \argmin_{  \nu \in \Conv \rbr{\nu^1_h(s),\ldots,\nu^K_h(s) }  }  \max_{\mu \in \Delta(A_h)} \mbD_{\mu ,\nu} \sbr{r_h +P_h V^*_{h+1} }(s)$
is the minimax policy for the opponent when restricted to play a mixture of empirical polices $\nu^1_h(s),\ldots,\nu^K_h(s)$.
In other words, $C$ measures the cumulative difference between the opponent's policy and a fixed minimax policy.
It is clear that in the worst case,  we have $C = \order(HK)$,
while in the best case when $\nu_h^k$ stays the same over all episodes, we have $C=0$.

We are now ready to present our main result.
For simplicity, we restrict our attention to oblivious opponents, but similar results hold for adaptive opponents too; see \pref{app:reg_bound_evol_adaptive}.

\begin{theorem} %
\label{thm:reg_bound_evol}
Suppose that the opponent is oblivious and $K \geq H|S|$. If we run \pref{alg:epoch_V_ol} with $\eta \in \sbr{|S|/K, 1/H}$ and the adversarial bandit subroutine is instantiated with \pref{ex:adv_subroutine} (so $\iota=\Theta(H^2|A|\log(HK|A||S|/\delta))$), 
then with probability at least $1-\delta$, the estimate holds:
\begin{align*}
\ENR_K \leq \sum_{k=1}^K \rbr{V^k_1(s_1^k)-V^{\mu^k,\nu^k}_1(s_1^k)} \leq \order \rbr{ \eta H^3C
+ H\sqrt{ \frac{\iota |S|K\log(K)}{\eta}  }   }.
\end{align*}
\end{theorem}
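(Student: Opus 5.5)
The plan has two parts, one per inequality: \emph{optimism}, which gives the first inequality, and a \emph{recursive regret decomposition} across steps $h$, which gives the second. Throughout I condition on a good event $\mathcal G$ of probability $1-\delta$. On $\mathcal G$, every bandit subroutine satisfies \pref{eq:require_adv_bandit_reg} with $\delta'=\delta/(2HK|S|)$, via \pref{ex:adv_subroutine}. In addition, Azuma--Hoeffding holds for the martingale differences $r_h^j+V^j_{h+1}(s^j_{h+1})-\mbD_{\mu^j_h,\nu^j_h}[r_h+P_hV^j_{h+1}](s_h^j)$, summed over each epoch of each $(h,s)$ pair. The number of epochs per pair is $\order(\log(K)/\eta)$, and the union bound over them is absorbed into $\iota$.

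\textbf{Step 1 (optimism).} I will show by backward induction on $h$, and forward induction on $k$, that $V^k_h(s)\ge V^*_h(s)$ for all $k,h,s$. Setting $h=1$ then gives $\ENR_K\le\sum_k(V^k_1-V^{\mu^k,\nu^k}_1)(s_1^k)$.

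Take an epoch $\tau$ of $(h,s)$ with visits $t_1,\dots,t_n$. By concentration and the bandit guarantee, $L_\tau(h,s)$ is at least
\[
\max_{\mu}\tfrac1n\sum_i\mbD_{\mu,\nu^{t_i}_h}[r_h+P_hV^{t_i}_{h+1}](s)-\tfrac{H}{n}\xi(\cdot)-\order(H\sqrt{\iota'/n})+\beta_n .
\]
The choice of $\iota$ makes $\beta_n$ dominate the two negative terms. Next, the induction hypothesis $V^{t_i}_{h+1}\ge V^*_{h+1}$ lets me replace $V^{t_i}_{h+1}$ by $V^*_{h+1}$. Finally I use bilinearity. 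The average $\bar\nu=\frac1n\sum_i\nu^{t_i}_h(s)$ lies in $\Conv(\nu^1_h(s),\dots,\nu^K_h(s))$, and a linear function has the same minimum over this convex hull as over its vertices. So $\max_\mu\mbD_{\mu,\bar\nu}[r_h+P_hV^*_{h+1}](s)\ge V^*_h(s)$ by \pref{eq:enr_benchmark_def}, which closes the induction. This is the place where the ``empirical'' benchmark is exactly what the epoch-averaged bandit guarantee controls.

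\textbf{Step 2 (recursion).} Let $\delta^k_h:=(V^k_h-V^{\mu^k,\nu^k}_h)(s^k_h)$. Consider episodes $k$ lying in epoch $\tau$ of $(h,s)$. For these, $V^k_h(s)=L_{\tau-1}(h,s)$, and $N_\tau\le\lceil(1+\eta)N_{\tau-1}\rceil$. Hence summing $V^k_h$ over epoch $\tau$ is at most $(1+\eta)$ times the sum over epoch $\tau-1$ of $r^j_h+V^j_{h+1}(s^j_{h+1})$, plus $N_\tau\beta_{N_{\tau-1}}$.

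I then subtract $\sum_{k\in\calK_\tau}V^{\mu^k,\nu^k}_h(s)$. Writing both pieces through $\mbD$ produces three kinds of terms.
\begin{itemize}
\item The recursive term $(1+\eta)\sum_j\delta^j_{h+1}$, up to martingale noise.
\item Bandit regrets, which are of the same order as $\beta$.
\item A \emph{mismatch} term between $(1+\eta)\sum_{j\in\calK_{\tau-1}}\mbD_{\mu^j,\nu^j}[Q_h](s)$ and $\sum_{k\in\calK_\tau}\mbD_{\mu^k,\nu^k}[Q_h](s)$, with $Q_h=r_h+P_hV^*_{h+1}$. Differences $V^j_{h+1}-V^*_{h+1}$ are pushed into the recursion.
\end{itemize}

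To control the mismatch I compare everything to $\nu^*_h$. In epoch $\tau-1$, $\mbD_{\mu^j,\nu^j}[Q_h]\le\max_\mu\mbD_{\mu,\nu^*_h}[Q_h]+H\,\TV(\nu^j_h,\nu^*_h)$. In epoch $\tau$, the no-regret bandit gives $\sum_k\mbD_{\mu^k,\nu^k}[Q_h]\ge N_\tau\max_\mu\mbD_{\mu,\nu^*_h}[Q_h]-H\sum_k\TV(\nu^k_h,\nu^*_h)-H\,\Reg$. The leading $\max_\mu$ terms cancel up to the rounding in $N_\tau$, leaving TV sums.

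Unrolling the recursion over $H$ steps loses only $(1+\eta)^H\le e$ because $\eta\le1/H$. The $\beta$ terms then sum as $\sum_{h,s,\tau}N_\tau\sqrt{\iota/N_{\tau-1}}$. With $\order(\log K/\eta)$ epochs per pair, Cauchy--Schwarz gives $\order(H\sqrt{\iota|S|K\log K/\eta})$, and the condition $\eta\ge|S|/K$ handles the lower-order terms.

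\textbf{Main obstacle.} The hard part is getting the factor $\eta$ in front of $C$. The cancellation above, done naively, leaves $H\sum\TV$ with coefficient of order one, which yields $H^2C$ rather than $\eta H^3C$. My plan is to exploit the cancellation more carefully: after the exact cancellation of $N_\tau$ against $N_{\tau-1}$ via the $\max_\mu\mbD_{\mu,\nu^*}$ anchor, only an $\eta N_{\tau-1}$-weighted excess remains. I will charge this excess to $\eta\cdot H\sum_{j\in\calK_{\tau-1}}\TV(\nu^j_h,\nu^*_h)$, and let the remaining TV differences telescope across consecutive epochs through the optimism slack.

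If this exact telescoping fails, my fallback is to run the recursion with the comparator $\nu^*$ inside $V^{\mu^k,\nu^*}$ rather than $\nu^k$, and pay the TV cost only once per step $h$ through the $(1+\eta)$-weighted excess. Propagating this cost through $H$ levels and the $H$-scaled values is where I would expect the factor $H^3$ to come from.
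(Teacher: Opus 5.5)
Your Step 1 is correct and is essentially the paper's optimism lemma. The gap is in Step 2, at the point you flag yourself: nothing in the proposal actually produces the factor $\eta$ in front of $C$, and the epoch-by-epoch comparison you set up cannot produce it.

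That scheme also has a second defect you do not flag. The bandit for $(h,s)$ is fed $r_h+V^k_{h+1}(s^k_{h+1})$, so its guarantee is for $r_h+P_hV^k_{h+1}$, not for $Q_h=r_h+P_hV^*_{h+1}$. Your lower bound on $\sum_{k\in\calK_\tau}\mbD_{\mu^k_h,\nu^k_h}[Q_h](s)$ therefore costs an extra $\sum_{k\in\calK_\tau}\mbD_{\mu^k_h,\nu^k_h}[P_h(V^k_{h+1}-V^*_{h+1})](s)$ in every epoch. (No TV term is needed on that side: the average of the $\nu^k_h(s)$ lies in the hull, exactly as in your Step 1.) Add the $(1+\eta)$-weighted $V^j_{h+1}-V^*_{h+1}$ terms coming from epoch $\tau-1$. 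What you really get is then roughly $\sum_k\delta^k_h\le\sum_k\delta^k_{h+1}+(1+\eta)\sum_k(V^k_{h+1}-V^*_{h+1})(s^k_{h+1})$, plus an order-one multiple of the step-$h$ TV sum and lower-order terms. It is not a clean $(1+\eta)\sum_k\delta^k_{h+1}$ recursion.

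The optimism gap $\sum_k(V^k_{h+1}-V^*_{h+1})(s^k_{h+1})$ cannot be folded into $\delta_{h+1}$, because $V^*_{h+1}\ge V^{\mu^k,\nu^k}_{h+1}$ fails in general. The empirical Nash value does not upper-bound the learner's actual value when the opponent plays away from $\nu^*$. This gap is precisely where $C$ lives, and here it carries an order-one coefficient, which is as damaging as the TV term.

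The paper resolves this with two ingredients that your ``telescoping through the optimism slack'' would have to reproduce.

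First, it splits $\delta^k_h=(1)^k_h+(2)^k_h$ with $(1)^k_h=V^k_h(s^k_h)-\mbD_{\mu^k_h,\nu^k_h}[r_h+P_hV^k_{h+1}](s^k_h)$. This way $\delta_{h+1}$ propagates with coefficient exactly $1$, and $\sum_k(1)^k_h\le\sum_{s,\tau}N_\tau(L_{\tau-1}-L_\tau)+\order(\sum_{s,\tau}N_\tau\beta_{N_\tau})$. Abel summation, together with $N_{\tau+1}-N_\tau\le\eta N_\tau+1$ and optimism (to drop the final negative term), bounds the first sum by $\eta\,\Gamma(h)+\order(H|S|\log(K)/\eta)$, where $\Gamma(h)=\sum_{s,\tau}N_\tau(h,s)(L_\tau(h,s)-V^*_h(s))$.

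Second, a separate lemma bounds this optimism gap through its own recursion, $\Gamma(h)\le(1+\eta)\Gamma(h+1)+\order(HC+\sqrt{\iota|S|K\log(K)/\eta}+H|S|\log(K)/\eta)$. Here $C$ enters only via $\mbD_{\mu^t_h,\nu^t_h}[Q_h](s)-\mbD_{\mu^t_h,\nu^*_h}[Q_h](s)\le 2H\,\TV(\nu^t_h(s),\nu^*_h(s))$ and $\mbD_{\mu^t_h,\nu^*_h}[Q_h](s)\le V^*_h(s)$ (minimax over the hull). Unrolling with $\eta\le 1/H$ gives $\Gamma(h)=\order(H^2C+\cdots)$, and the $\eta$ from the Abel step then yields $\eta H^3C$ after summing over $h$.

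Your telescoping intuition points in the right direction; it corresponds to the Abel step. But the argument does not close without a bound on the optimism gap, either this lemma or an equivalent performance-difference argument comparing $V^{\mu^k,\nu^k}_{h+1}$ with $V^*_{h+1}$ through $\nu^*$. Your fallback is too vague to supply it.
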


If $C$ is known, then setting $\eta=\min \big\{\frac{1}{H},\big( \frac{\iota |S|K}{H^4 C^2}\big)^{1/3}\big\}$ achieves $\ENR_K \leq \otil \big(\sqrt{H^3\iota |S| K  }  +  \rbr{ \iota H^5   |S|K   C }^{\frac{1}{3}} \big)$, smoothly interpolating between $\otil(\sqrt{K})$ and $\otil(K^{2/3})$.
We will address the issue that the knowledge of $C$ is required in achieving this bound in~\pref{sec:meta}.
Before that, we discuss the case when $C=0$.

\begin{corollary}[\textbf{External regret under a stationary opponent}]  \label{corr:external_reg_epoch_vol}
If $\nu^1=\cdots=\nu^{K}$, then running \pref{alg:epoch_V_ol} with $\eta=1/H$ and the adversarial bandit subroutine instantiated with \pref{ex:adv_subroutine} ensures that with probability at least $1-\delta$, $\ExtR_K=\ENR_K=\otil(\sqrt{H^5|A||S|K})$.
\end{corollary}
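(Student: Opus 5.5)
The corollary should follow from \pref{thm:reg_bound_evol} once we check the two facts it rests on. First, a stationary opponent gives $C=0$. Second, the choice $\eta=1/H$ is admissible.

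\textbf{Step 1: $C=0$.} If $\nu^1=\cdots=\nu^K=\nu$, then for every $(h,s)$ the set $\Conv(\nu^1_h(s),\ldots,\nu^K_h(s))$ is the singleton $\{\nu_h(s)\}$. Hence the restricted minimax policy is forced to be $\nu^*_h(s)=\nu_h(s)$. Every term $\TV(\nu^k_h(s_h^k),\nu^*_h(s_h^k))$ in \pref{eq:def_C_epoch_Vol} then vanishes, so $C=0$.

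\textbf{Step 2: admissibility and the bound.} With $\eta=1/H$, the theorem requires $1/H\ge |S|/K$, which is exactly the standing assumption $K\ge H|S|$. If $K<H|S|$, the trivial bound $\ENR_K\le HK\le \sqrt{H^3|S|K}$ already gives the claim, so this case costs nothing. A fixed opponent is also oblivious, since the same policy is used regardless of history. So \pref{thm:reg_bound_evol} applies. With probability at least $1-\delta$ it gives
\[
\ENR_K\le \order\Big(0+H\sqrt{\iota|S|K\log(K)\cdot H}\Big)=\order\Big(\sqrt{H^3\iota|S|K\log K}\Big).
\]
Substituting $\iota=\Theta(H^2|A|\log(HK|A||S|/\delta))$ turns this into $\otil(\sqrt{H^5|A||S|K})$.

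\textbf{Step 3: identify $\ENR_K$ with $\ExtR_K$.} This is the stated Fact. It can also be checked directly. When the opponent set at each state is the single policy $\nu_h(s)$, recursion \pref{eq:enr_benchmark_def} becomes the Bellman optimality equation of the single-agent MDP induced by fixing $\nu$. Therefore $V_1^*(s)=\max_\mu V_1^{\mu,\nu}(s)$. The maximizing deterministic policy is simultaneously optimal for every initial state, so summing over $k$ gives $\sum_k V_1^*(s_1^k)=\max_\mu\sum_k V_1^{\mu,\nu}(s_1^k)$, and hence $\ENR_K=\ExtR_K$.

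There is no real obstacle here. The only point needing care is the identification in Step 3: the benchmark $V^*$ is defined statewise, while $\ExtR_K$ uses a single policy maximized across adversarially chosen initial states. The existence of a uniformly optimal policy in the fixed MDP resolves this.
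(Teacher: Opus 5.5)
Your proposal is correct and follows the paper's own route: a fixed opponent gives $C=0$, plugging $\eta=1/H$ into \pref{thm:reg_bound_evol} gives $\order(\sqrt{H^3\iota|S|K\log K})=\otil(\sqrt{H^5|A||S|K})$, and the stated Fact identifies $\ENR_K$ with $\ExtR_K$. You also treat the case $K<H|S|$ with the trivial bound $HK\le\sqrt{H^3|S|K}$, and you justify the Fact explicitly through a uniformly optimal deterministic policy in the MDP induced by $\nu$; both are sound and fill in details the paper leaves implicit.
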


Note that this result also implies that one can run \pref{alg:epoch_V_ol} in the single-agent setting over a fixed MDP and obtain external regret bound $\ExtR_K=\otil(\sqrt{H^5|A||S|K})$,
since this setting can be modeled as a MG with a dummy second player with only one signle action at every state.
This matches the regret bound of the UCB-H algorithm \citep{jin2018q} and is close to the optimal bound $\otil(\sqrt{H^3 |A| |S| K})$~\citep{zhang2024settling}.
To the best of our knowledge, this is first result establishing a $\sqrt{K}$-type regret bound for V-learning algorithms over a fixed MDP, thereby resolving an open problem raised by~\citet{tian2021online}.

\subsection{Proof Sketch of \pref{thm:reg_bound_evol}}

In this subsection, we sketch the proof of \pref{thm:reg_bound_evol} and highlight the novelty of our analysis. %
In what follows, when we refer to a variable (e.g., $\calE(h,s),N_{\tau}(h,s)$, etc.), we refer to their final value after $K$ episodes.

Similar to \citep{tian2021online}, our analysis starts with a standard optimism argument, which shows $V^*_h(s) \leq V^k_h(s)$ for all $k,h,s$ (see \pref{lem:optimism}).
Then, we apply optimism and write the bound in terms of $\delta^k_h :=  (V^{k}_h -V^{\mu^k,\nu^k}_h)(s_h^k)$:
\begin{align*}
    \ENR_K \leq \sum_{k=1}^K \rbr{V^{k}_1(s_1^k) -V^{\mu^k,\nu^k}_1(s_1^k)} =\sum_{k=1}^K \delta_1^k =  \sum_{h=1}^H \sum_{k=1}^K \rbr{\delta_h^k-\delta_{h+1}^k}
\end{align*}
(with $\delta^k_{H+1}=0$ for all $k$). To prove the claimed bound, it suffices to show for each $h$,
\begin{equation} \label{eq:delta_hk_recursion_obj}
\sum_{k=1}^K \delta_{h}^k  \leq \sum_{k=1}^K \delta_{h+1}^k + \order \big( \eta H^2C
+ \sqrt{\iota |S|K\log(K)\eta^{-1}  }  \big).
\end{equation}

The rest of the proof for achieving this bound is novel as far as we know.
Specifically, for each $h,k$, we rewrite $\delta^k_h$ as
\begin{align*}
    \delta^k_h   =  \underbrace{\rbr{V^{k}_h(s_h^k) - \mbD_{\mu^k_h,\nu^k_h} \sbr{r_h+P_h V^{k}_{h+1} }(s_h^k)  }   }_{(1)_{h}^k}+  \underbrace{ \mbD_{\mu^k_h,\nu^k_h} \sbr{P_h (V^{k}_{h+1}-V^{\mu^k,\nu^k}_{h+1}) }(s_h^k) }_{(2)_{h}^k}.
\end{align*}

Since $\sum_{k=1}^K \delta_h^k = \sum_{k=1}^K (1)_h^k+\sum_{k=1}^K (2)_h^k $, we then bound each summation.

\paragraph{Bounding $\sum_k (2)_{h}^k$.} %
Note that $\sum_k \big( \mbD_{\mu^k_h,\nu^k_h} \big[P_h (V^{k}_{h+1}-V^{\mu^k,\nu^k}_{h+1})\big](s_h^k) - \delta^k_{h+1} \big) \leq \otil (H\sqrt{K})$ by Azuma-Hoeffding inequality for martingale difference sequence. Hence, for each $h \in [H]$, we have
\begin{equation} \label{eq:delta_hk_bound2}
    \sum_{k=1}^K (2)_{h}^k \leq \sum_{k=1}^K  \delta^k_{h+1} +\otil (H\sqrt{K}).
\end{equation}

\paragraph{Bounding $\sum_k (1)_{h}^k$.} Considering the objective in \pref{eq:delta_hk_recursion_obj} and the bound on $\sum_k (2)_{h}^k$ in \pref{eq:delta_hk_bound2}, we need only to show that $\sum_k (1)_{h}^k \leq \order \big( \eta H^2C
+ \sqrt{\iota |S|K\log(K)\eta^{-1}  }  \big)$.
Indeed, we have that
\begin{align*}
\sum_{k=1}^K (1)_h^k & =  \sum_{k=1}^K \rbr{V^{k}_h(s_h^k) - \mbD_{\mu^k_h,\nu^k_h} \sbr{r_h+P_h V^{k}_{h+1} }(s_h^k) } \\
& =  \sum_{s \in S_h} \sum_{\tau \in \calE(h,s)} \sum_{k \in K_{\tau}(h,s)} \rbr{V^{k}_h(s) - \mbD_{\mu^k_h,\nu^k_h} \sbr{r_h+P_h V^{k}_{h+1} }(s) } \\
&\leq  \sum_{s \in S_h} \sum_{\tau \in \calE(h,s)}N_{\tau}(h,s)  \rbr{ L_{\tau-1}(h,s) -L_{\tau}(h,s)   } + \order \Big(  \sum_{s \in S_h} \sum_{\tau \in \calE(h,s)} N_{\tau}(h,s) \beta_{N_{\tau}(h,s)} \Big),
\end{align*}
where in the last step, we use the fact $V^k_h(s) =L_{\tau-1}(h,s)$ for all $k \in \calK_{\tau}(h,s)$,
the definition of $L_{\tau}(h,s)$ from \pref{eq:non_increasing_update},
and a standard concentration argument applied to 
$\sum_k \mbD_{\mu^k_h,\nu^k_h} [r_h+P_h V^{k}_{h+1}](s_h^k) - \sum_{k \in \calK_{\tau}(h,s)} (r^k_h+V^k_{h+1}(s_{h+1}^k))$; see
\pref{eq:bound_one_two_terms} for more details. 
The upper bound of $\sum_{s \in S_h} \sum_{\tau \in \calE(h,s)} N_{\tau}(h,s) \beta_{N_{\tau}(h,s)} $ depends on the number of epochs for each $(h,s)$ pair, which is bounded according to the following lemma via a simple calculation.
\begin{lemma} \label{lem:bound_num_epoch}
For any $h \in [H]$ and $s \in S_h$, we have $|\calE(h,s)|\leq \left \lceil \frac{(1+\eta)\log(K)}{\eta} \right \rceil$.
\end{lemma}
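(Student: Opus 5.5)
The plan is to count completed epochs directly from the geometric growth of the epoch lengths, and then to convert $\log(1+\eta)$ into $\eta/(1+\eta)$ using a \emph{strict} inequality. The strictness is what absorbs the initial epoch and yields the ceiling exactly, with no additional $+1$.

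\textbf{Bookkeeping.} Fix $(h,s)$. Initially $\calE(h,s)=\{1\}$, and an index is added to $\calE(h,s)$ exactly when an epoch completes. Hence $|\calE(h,s)| = 1+m$, where $m$ is the number of completed epochs. By the update rule, the $\tau$-th epoch ($\tau\le m$) completes precisely when
\[
N_\tau(h,s)=\lceil (1+\eta)N_{\tau-1}(h,s)\rceil \ge (1+\eta)N_{\tau-1}(h,s).
\]
Since $N_0(h,s)=1$, induction gives $N_\tau(h,s)\ge(1+\eta)^\tau$ for every completed epoch $\tau\le m$. Moreover, $N_m(h,s)$ counts distinct episodes in which $(h,s)$ is visited, so $N_m(h,s)\le K$. (One could instead use the sum over all completed epochs, but the last one alone suffices.)

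\textbf{Bounding $m$.} If $m\ge1$, then $(1+\eta)^m\le K$, so $m\le \log K/\log(1+\eta)$. For $\eta>0$ we have the strict inequality $\log(1+\eta)>\eta/(1+\eta)$. This holds because $f(\eta)=\log(1+\eta)-\eta/(1+\eta)$ satisfies $f(0)=0$ and $f'(\eta)=\eta/(1+\eta)^2>0$. Since $m\ge1$ forces $K\ge 2$ and hence $\log K>0$, we obtain
\[
m \;<\; c:=\frac{(1+\eta)\log K}{\eta}.
\]
For an integer $m$, the condition $m<c$ is equivalent to $m+1\le\lceil c\rceil$. (When $c$ is an integer, $m\le c-1$; otherwise $m\le\lfloor c\rfloor=\lceil c\rceil-1$.) This gives $|\calE(h,s)|=m+1\le\lceil c\rceil$.

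\textbf{Degenerate cases.} If $m=0$, then $|\calE(h,s)|=1$, and $\lceil c\rceil\ge1$ whenever $K\ge2$, since then $c>0$. The only genuinely degenerate case is $K=1$, where $\log K=0$. This is excluded in the regime of interest because $K\ge H|S|$, and it should be noted as a standing assumption $K\ge2$.

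\textbf{Main obstacle.} The only delicate point is obtaining the exact ceiling rather than ``ceiling $+1$''. The fix is to insist on the \emph{strict} inequality $\log(1+\eta)>\eta/(1+\eta)$ together with $\log K>0$. Together these give $m<c$ rather than $m\le c$, and that strict gap absorbs the extra initial epoch.
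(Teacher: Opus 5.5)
Your proof is correct and uses essentially the same ingredients as the paper's: $N_\tau(h,s)\ge(1+\eta)^\tau$ for completed epochs, at most $K$ visits, and $\log(1+\eta)\ge\eta/(1+\eta)$. The difference is how the extra (current) epoch is absorbed. The paper argues that epoch $\lceil\log K/\log(1+\eta)\rceil$ can never be completed, whereas you count completed epochs and use the strict inequality $\log(1+\eta)>\eta/(1+\eta)$ together with $\log K>0$. Your bookkeeping is slightly tighter: the paper's intermediate claim fails in the corner case $K=2$, $\eta=1$, although the final bound still holds there. You also correctly flag that the statement requires $K\ge 2$.
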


Then, we can show that $  \sum_{s \in S_h} \sum_{\tau \in \calE(h,s)} N_{\tau}(h,s) \beta_{N_{\tau}(h,s)} \leq   \order \big( \sqrt{  \iota |S|K \log(K)  \eta^{-1} }   \big)$ by Cauchy–Schwarz inequality. It remains to bound $\sum_{s \in S_h} \sum_{\tau \in \calE(h,s)}N_{\tau}(h,s)  \rbr{ L_{\tau-1}(h,s) -L_{\tau}(h,s)   }$. For each $s \in S_h$, one can show that
\begin{align*}
&  \sum_{\tau \in \calE(h,s)}N_{\tau}(h,s)  \rbr{ L_{\tau-1}(h,s) -L_{\tau}(h,s)   }  \\
&=N_1(h,s)L_0(h,s) + \sum_{\tau=1}^{|\calE(h,s)|-1} L_{\tau}(h,s)  \rbr{ N_{\tau+1}(h,s)-N_{\tau}(h,s) } -N_{|\calE(h,s)|}(h,s)L_{|\calE(h,s)|}(h,s)   \\
&=  N_1(h,s) \rbr{L_0(h,s) -V^*_h(s)} +  \sum_{\tau=1}^{|\calE(h,s)|-1}\rbr{ N_{\tau+1}(h,s)-N_{\tau}(h,s) } \rbr{  L_{\tau}(h,s) - V^*_h(s)} \\
&\quad - \sum_{s \in S_h} N_{|\calE(h,s)|}(h,s) \rbr{ L_{|\calE(h,s)|}(h,s) -V^*_h(s)}  \\
&\leq  N_1(h,s) \rbr{L_0(h,s) - V^*_h(s)} +  \eta \sum_{\tau=1}^{|\calE(h,s)|-1} N_{\tau}(h,s) \rbr{  L_{\tau}(h,s) - V^*_h(s)} + |\calE(h,s)|H \\
&\leq \eta  \sum_{\tau \in \calE(h,s)}N_{\tau}(h,s) \rbr{  L_{\tau}(h,s) -V^*_h(s)} + \order \rbr{\frac{H \log(K)}{\eta}} ,  \numberthis{}  \label{eq:first_term_bound_main_text}
\end{align*}
where the first equality follows from the fact that for any $\{a_i\}_{i=1}^n,\{b_i\}_{i=1}^n$, we have $\sum_{i=1}^n a_i(b_{i-1}-b_i)=a_1b_0 + \sum_{i=1}^{n-1} (a_{i+1}-a_i)b_i -a_nb_n$, the first inequality uses facts  $N_{\tau+1}(h,s) -N_{\tau}(h,s) \leq \eta N_{\tau}(h,s) +1$ and $L_{\tau}(h,s)\geq V^*_h(s)$ by optimism, and the last inequality uses %
\pref{lem:bound_num_epoch} to bound $|\calE(h,s)| \leq \order (\log(K)/\eta )$ for all $(h,s)$.

Now, the following lemma is a key to handle $\sum_{s \in S_h} \sum_{\tau \in \calE(h,s)}N_{\tau}(h,s) \rbr{  L_{\tau}(h,s) -V^*_h(s)}$,
which is also the most technically novel part of our proof.
As discussed in \citep[Remark 4]{tian2021online}, the gap between the optimistic estimate $L_{\tau}(h,s)$ and $V^*_h(s)$ may not diminish. Indeed, the following lemma presents an upper bound that depends on the non-stationarity $C$ and the epoch incremental factor $\eta$.
\begin{lemma} \label{lem:gamma_bound}
Suppose that $\eta \in (0,1/H]$. With probability at least $1-\delta$, for any step $h \in [H]$, 
\begin{align*}
\sum_{s \in S_h} \sum_{\tau \in \calE(h,s)}N_{\tau}(h,s) \rbr{  L_{\tau}(h,s) -V^*_h(s)} \leq \order \rbr{ H^2 C
 +  H\sqrt{ \frac{\iota  |S| K\log(K)}{\eta}  } + \frac{H^2|S|\log(K)}{\eta}   } .
\end{align*}
\end{lemma}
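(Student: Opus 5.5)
The plan is to set up a recursion over steps for the quantity
\[
\Gamma_h := \sum_{s \in S_h} \sum_{\tau \in \calE(h,s)} N_{\tau}(h,s) \rbr{L_{\tau}(h,s)-V^*_h(s)},
\]
which is nonnegative by optimism. I will show that $\Gamma_h$ is bounded by $H C_h$ plus $(1+\eta)\Gamma_{h+1}$ plus lower-order terms, where $C_h:=\sum_k \TV(\nu^k_h(s_h^k),\nu^*_h(s_h^k))$. Unrolling the recursion over $h' \ge h$ and using $(1+\eta)^H\le e$ (valid since $\eta\le 1/H$) then yields the claim. Summing the per-level error terms over at most $H$ levels is what produces the extra factor of $H$ in each term.

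\textbf{Step 1 (one epoch).} Fix $(h,s)$ and an epoch $\tau$. Drop the truncation in \pref{eq:non_increasing_update}, which only lowers $L_\tau$. Then
\[
N_\tau\big(L_\tau - V^*_h(s)\big) \le \sum_{k\in\calK_\tau(h,s)} \big(r_h^k+V^k_{h+1}(s^k_{h+1}) - V^*_h(s)\big) + N_\tau\beta_{N_\tau}.
\]
By Azuma--Hoeffding, I replace the realized sum by $\sum_{k\in\calK_\tau}\mbD_{\mu^k_h,\nu^k_h}[r_h+P_hV^k_{h+1}](s)$, at an $\order(H\sqrt{N_\tau\log})$ cost that is absorbed into $N_\tau\beta_{N_\tau}$. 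I then split $V^k_{h+1}=V^*_{h+1}+(V^k_{h+1}-V^*_{h+1})$.

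\textbf{Step 2 (the $V^*_{h+1}$ part, source of $C$).} I bound
\[
\sum_{k}\mbD_{\mu^k_h,\nu^k_h}[r_h+P_hV^*_{h+1}](s)\le \sum_k \mbD_{\mu^k_h,\nu^*_h}[\cdot](s) + H\sum_k\TV(\nu^k_h(s),\nu^*_h(s)).
\]
The function $\nu\mapsto\mbD_{\mu,\nu}[\cdot]$ is linear, so the min over the finite set $\{\nu^k_h(s)\}_k$ in \pref{eq:enr_benchmark_def} equals the min over its convex hull. The minimax theorem on $\Delta(A_h)\times\Conv(\cdot)$ then gives $\max_\mu\mbD_{\mu,\nu^*_h}[r_h+P_hV^*_{h+1}](s)=V^*_h(s)$. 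Hence each summand is at most $V^*_h(s)$, and this part contributes at most $H C_h$ after summing over all epochs and states. No bandit regret is needed here, since we only need an upper bound on the learner's own value.

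\textbf{Step 3 (the recursive part).} Summing over $s,\tau$, the remaining term equals $\sum_{k=1}^K\mbD_{\mu^k_h,\nu^k_h}[P_h(V^k_{h+1}-V^*_{h+1})](s^k_h)$. This is within $\otil(H\sqrt K)$ of $\sum_k (V^k_{h+1}-V^*_{h+1})(s^k_{h+1})$. Grouping by $(h+1,s')$ and epoch $\tau'$, where $V^k_{h+1}(s')=L_{\tau'-1}(h+1,s')$, this equals
\[
\sum_{s'}\sum_{\tau'}N_{\tau'}\big(L_{\tau'-1}-V^*_{h+1}(s')\big).
\]
Using $N_{\tau'}\le(1+\eta)N_{\tau'-1}+1$ and $N_1=\lceil 1+\eta\rceil\le 2$, together with \pref{lem:bound_num_epoch} for the $+1$ terms (each costing $H$), this is at most $(1+\eta)\Gamma_{h+1}+\order(H|S|\log K/\eta)$. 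Finally, $\sum_{s,\tau}N_\tau\beta_{N_\tau}\le\order(\sqrt{\iota|S|K\log K/\eta})$ by Cauchy--Schwarz and \pref{lem:bound_num_epoch}, exactly as in the main text.

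\textbf{Main obstacle.} The delicate points are two. The first is the index shift: $V^k_{h+1}$ equals the previous epoch's $L_{\tau'-1}$, not $L_{\tau'}$, so I must verify that the reweighting from $N_{\tau'}$ to $N_{\tau'-1}$ costs only a factor $(1+\eta)$ plus per-epoch additive terms. The second is that the union of concentration events over all $(h,s,\tau)$ holds with probability $1-\delta$, which is covered by the choice of $\iota$. If the grouping in Step 3 fails to line up exactly, my fallback is to bound $V^k_{h+1}-V^*_{h+1}\le L_{\tau'-1}-V^*$ directly and absorb the mismatch into the first-epoch $H$ terms.
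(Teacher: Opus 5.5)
Your proposal is correct, and its skeleton matches the paper's: the same $\Gamma_h$ recursion, the same convex-hull/minimax argument turning the $V^*_{h+1}$ part into a TV term, the same regrouping by epochs at step $h+1$ with the $N_{\tau'}\le(1+\eta)N_{\tau'-1}+1$ reweighting, and the same unrolling with $(1+1/H)^H\le e$.

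The real difference is where you concentrate. The paper stays with realized values throughout. It splits $r_h^t+V^t_{h+1}(s^t_{h+1})$ into $r_h^t+V^*_{h+1}(s^t_{h+1})$ and $(V^t_{h+1}-V^*_{h+1})(s^t_{h+1})$, and concentrates only the first piece, per epoch (event $\calG^3$). The second piece regroups into epochs at step $h+1$ as an exact identity, so the recursive step needs nothing beyond optimism.

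You instead pass to conditional expectations using the per-epoch event for $V^t_{h+1}$ (the paper's $\calG^2$) and split there. You then need an additional step-wise Azuma bound to go from $\mbD_{\mu^k_h,\nu^k_h}[P_h(V^k_{h+1}-V^*_{h+1})](s^k_h)$ back to realized values. That step is valid because $V^*_{h+1}$ is a fixed function for an oblivious opponent. You should state this explicitly, since it is exactly where obliviousness enters your argument, just as it enters the paper's through $\calG^3$.

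Your arrangement has two mild advantages:
\begin{itemize}
\item The only concentration involving the non-predictable $V^*_{h+1}$ is one global sum over the $K$ episodes per step, rather than one per $(h,s,\tau)$. So for an adaptive opponent, a discretization union bound would inflate only a lower-order $H\sqrt{K}$ term by $\sqrt{|S|}$. In the paper's adaptive analysis it inflates every $N_\tau\beta_{N_\tau}$ term instead.
\item You track the per-step quantity $C_h$ (with $\sum_h C_h=C$) rather than bounding each level by $C$. This yields $\order(HC)$ in place of $\order(H^2C)$, which still implies the stated bound.
\end{itemize}
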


The proof of \pref{lem:gamma_bound} relies on a recursive argument from step $h$ to $H$, which requires $\eta \leq 1/H$
to bypass an exponential dependence on $H$. We refer readers to \pref{app:supporting_lemma_epoch_vol} for details. 

Summing \pref{eq:first_term_bound_main_text} for all $s \in S_h$ and then applying \pref{lem:gamma_bound}, we obtain
\begin{align} \label{eq:bound_term_onehk_main_text}
   \sum_{k=1}^K (1)_h^k \leq \order \rbr{ \eta H^2 C 
+ H \sqrt{\eta \iota |S|  K\log(K) } +\sqrt{   \iota |S|K\log(K) \eta^{-1} }+ H |S| \log(K) \eta^{-1}  }.
\end{align}

The desired bound in \pref{eq:delta_hk_recursion_obj} is immediate according to \pref{eq:bound_term_onehk_main_text}, \pref{eq:delta_hk_bound2}. Finally, summing \pref{eq:delta_hk_recursion_obj} for all $h \in [H]$, using the fact $\eta \leq 1/H$ to bound $H^2  \sqrt{\eta \iota |S|  K\log(K) } \leq H\sqrt{  \iota |S|K\log(K) \eta^{-1} }$, and using $K\geq H|S|$ to bound $H^2|S|\log(K)/\eta \leq H\sqrt{  \iota |S|K\log(K) \eta^{-1} }$, we obtain the claimed regret bound.

\section{Adapting to Unknown Non-Stationarity: A Meta-Algorithm}\label{sec:meta}

As mentioned, \pref{alg:epoch_V_ol} cannot adapt to the unknown level of policy variance and requires tuning the epoch incremental factor $\eta$ based on $C$. 
In fact, even setting this issue aside, a regret bound solely depends on $C$ is not always satisfactory, because there are simple examples where the opponent is intuitively benign yet $C$ is still $\Omega(K)$---for example, when the opponent switches their policy only once.
In this section, we address both of these limitations simultaneously, leading to our final parameter-free algorithm that automatically adapts to both $C$ and another non-stationarity measure $L := 1+\sum_{k=1}^{K-1} \Ind{ \nu^k \neq \nu^{k+1} }$, that is, the number of policy switches by the opponent (plus one).

\subsection{Algorithm and Main Results}

\setcounter{AlgoLine}{0}
\begin{algorithm}[t]
\DontPrintSemicolon
\caption{Adaptive Epoch V-Learning}
\label{alg:block_alg}
\textbf{Input}: confidence $\delta \in (0,1)$, total episode $K$, absolute constant $c_0\geq 2$ specified in \pref{eq:def_event6}, epoch incremental scheduling $\{\eta^{(b)}_{\ell}\}_{b,\ell}$.

\For{block $b=1,2,\ldots$}{

\For{$\ell=1,2\ldots,2^{2b}+1$}{

Create a new instance of \pref{alg:epoch_V_ol}, denoted by $\Alg_{\ell}^{(b)}$, with input $\Big( K,\frac{\delta}{2K^6},\eta^{(b)}_{\ell} \Big)$.

Initialize $\Phi_{\ell}^{(b)} \gets 0$, $\calT_{\ell}^{(b)} \gets  \emptyset$, and $D \gets 0$.

\While{
$\Phi_{\ell}^{(b)} \leq D$
}{

Let $k$ be the current episode.

Run $\Alg_{\ell}^{(b)}$ to collect rewards $\{r_h^k\}_{h=1}^H$ and construct optimistic estimate $V^k_1(s_1^k)$.

Update $\calT_{\ell}^{(b)} \gets \calT_{\ell}^{(b)} \cup \{k\}$ and $\Phi_{\ell}^{(b)} \gets \sum_{k \in \calT_{\ell}^{(b)}} \rbr{V^k_1(s_1^k) -\sum_{h=1}^H r_h^k} +\sqrt{\iota \abr{\calT^{(b)}_{\ell}}  }$.

If $\ell\leq 2^{2b}$, set $D\gets 3c_0 H  \sqrt{\frac{ \iota |S||\calT_{\ell}^{(b)}|\log(K)}{\eta^{(b)}_{\ell} }}$; otherwise, set $D \gets 4c_0 H  \sqrt{\frac{ \iota |S|K\log(K)}{\eta^{(b)}_{\ell} }}$.

}

}
}    
\end{algorithm}

The proposed meta-algorithm, called Adaptive Epoch V-Learning, is shown in \pref{alg:block_alg}. 
It uses \pref{alg:epoch_V_ol} as a base algorithm and adaptively restarts it with different epoch incremental factors in response to potential environment changes. 
The meta-algorithm runs in blocks $b=1,2,\ldots$, and each block $b$ is further divided into a number of sub-blocks $\ell=1,2,\ldots,2^{2b}+1$. 
For each sub-block $\ell$ in block $b$, \pref{alg:block_alg} creates a new instance of \pref{alg:epoch_V_ol} with input $(K,\delta/(2K^6),\eta^{(b)}_{\ell})$. Here, if the opponent is oblivious, %
then one can set the epoch incremental factor $\eta^{(b)}_{\ell}$ as follows (for adaptive opponents, refer to \pref{thm:Bound_final_adaptive} for the choice of $\eta^{(b)}_{\ell}$):
\begin{equation}\label{eq:eta_b_ell}
    \eta^{(b)}_{\ell} = \begin{cases} 
  \frac{1}{H }, & \ell \leq 2^{2b} , \\
  \max \cbr{\frac{2^{-2b}}{H },\frac{|S|}{K}} ,& \ell  = 2^{2b}+1.
\end{cases}
\end{equation}
When running an instance of \pref{alg:epoch_V_ol}, we monitor a (computable) upper bound $\Phi_{\ell}^{(b)}$ on its $\ENR$,
and when it exceeds a threshold $D$ whose value follows the second term of the regret bound in \pref{thm:reg_bound_evol}, we terminate this sub-block.

\paragraph{High-level ideas.} 
If, for a moment, we only focus on the last sub-block for each block and assume $K$ is large so that $\max \cbr{\frac{2^{-2b}}{H },\frac{|S|}{K}} =\frac{2^{-2b}}{H }$.
Then the incremental factor schedule in \pref{eq:eta_b_ell} is simply performing a standard doubling trick on the unknown non-stationarity $C$, that is:
maintain a guess on $C$, set the incremental factor accordingly, and when $\Phi_{\ell}^{(b)}\leq D$, which we know cannot be true if the guess on $C$ is correct,
we restart the algorithm and double the guess.
This would be enough to obtain an $\otil \big(\sqrt{K} + (CK)^{1/3}\big)$ regret bound without knowing $C$.

To further adapt to $L$, the algorithm additionally introduces the first $2^{b}$ sub-blocks, each serving as a test for a potential policy switch. By \pref{corr:external_reg_epoch_vol}, if the opponent does not change its policy, then the condition $\Phi_{\ell}^{(b)}\leq D$ will hold and the algorithm keeps running. Therefore, the termination of a sub-block implies that at least one policy switch has occurred, leading to an $\otil(\sqrt{LK})$ regret bound.

Finally, since each of the first $2^{b}$ sub-blocks uses $\eta^{(b)}_{\ell}=1/H$ while the last sub-block uses $\eta^{(b)}_{\ell}=2^{-2b}/H$ (for a large $K$), when all $2^{b}$ sub-blocks terminate, the regret incurred by the first $2^{b}$ sub-blocks and that incurred by the last sub-block are of the same order. This justifies taking the minimum of the two bounds and yields the refined regret bound $\otil(\min\{\sqrt{K}+(CK)^{1/3},\sqrt{LK}\})$.

The final result is summarized in the following theorem, where when we refer to \pref{alg:epoch_V_ol}, we implicitly mean running it with \pref{ex:adv_subroutine} as the adversarial bandit subroutine, and hence $\iota$ is again of order $\Theta(H^2|A|\log(HK|A||S|/\delta))$.

\begin{theorem} \label{thm:Bound_final_oblivious}
For an oblivious opponent, running \pref{alg:block_alg} with the choice of $\eta^{(b)}_{\ell}$ specified in \pref{eq:eta_b_ell} guarantees, with probability at least $1-\delta$
\begin{align*}
\ENR_K \leq \otil \rbr{ H^2 |S| +     \min \cbr{  \sqrt{H^3\iota |S| K  }  +  \rbr{ \iota H^5   |S|K   C }^{\frac{1}{3}},  \sqrt{LH^3 \iota |S| K }}    } .
\end{align*}
\end{theorem}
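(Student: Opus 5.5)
Write $\mathcal{I}$ for the set of all sub-blocks (intervals) created by \pref{alg:block_alg}. The plan is to decompose $\ENR_K$ over these intervals and bound each one through its monitored statistic $\Phi$. The first step fixes a high-probability event $\mathcal{G}$, obtained by a union bound over at most $K$ instances with confidence $\delta/(2K^6)$ and over all prefixes. On $\mathcal{G}$, two things hold for every instance $\Alg^{(b)}_\ell$ and every prefix $\calT$ of its run.

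First, \pref{thm:reg_bound_evol}, applied to the restricted game on $\calT$, gives
\[
\sum_{k\in\calT}\bigl(V^k_1(s_1^k)-V^{\mu^k,\nu^k}_1(s_1^k)\bigr)\le c_0\Bigl(\eta H^3 C_{\calT}+H\sqrt{\iota|S||\calT|\log K/\eta}\Bigr),
\]
where $C_\calT$ is the non-stationarity computed only over $\calT$.

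Second, optimism (\pref{lem:optimism}) holds with respect to the \emph{local} empirical Nash value $V^{*,\calT}_1$, and the realized and expected returns are close:
\[
\Bigl|\sum_{k\in\calT}\bigl(\textstyle\sum_h r^k_h-V^{\mu^k,\nu^k}_1(s_1^k)\bigr)\Bigr|\le\sqrt{\iota|\calT|}.
\]

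The key monotonicity observation is that $V^*_1\le V^{*,\calT}_1$. Shrinking the opponent's restricted set from all $K$ policies to those played in $\calT$ can only increase the max--min value, state by state, by backward induction. Hence the per-interval $\ENR$ is at most $\sum_{k\in\calT}(V^k_1-V^{\mu^k,\nu^k}_1)(s_1^k)\le\Phi$, and also $\Phi\le \sum_{k\in\calT}(V^k_1-V^{\mu^k,\nu^k}_1)+2\sqrt{\iota|\calT|}$. Every interval therefore contributes at most $D$ at its final value, plus one extra episode (at most $H$), which produces the additive term $H^2|S|$ and log factors.

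\textbf{The $\sqrt{LK}$ bound.} If the opponent is stationary on a test sub-block ($\ell\le 2^{2b}$), then $C_\calT=0$ and $\eta=1/H$. By \pref{corr:external_reg_epoch_vol}-type reasoning, $\Phi\le c_0H\sqrt{\iota|S||\calT|\log K\cdot H}+2\sqrt{\iota|\calT|}<D$, since $c_0\ge2$. So a test sub-block can terminate only if a switch happened inside it. Likewise, the last sub-block of a block has the larger threshold and always passes when the opponent is stationary.

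Consequently, the number of terminated sub-blocks is at most $L$ plus the number of blocks, which is $O(\log K)$. Let $I$ range over the terminated intervals. Each contributes at most $3c_0H\sqrt{\iota|S||I|H\log K}$, and Cauchy--Schwarz over at most $L+O(\log K)$ intervals gives $\otil(\sqrt{LH^3\iota|S|K})$. The interplay with the last sub-blocks needs more care. A last sub-block with $\eta=2^{-2b}/H$ costs $\otil(2^{b}\sqrt{H^3\iota|S|K})$, and it is reached only after all $2^{2b}$ test sub-blocks have terminated. That forces $L\gtrsim 2^{2b}$, so $2^b\lesssim\sqrt L$ and its cost is also $\otil(\sqrt{LH^3\iota|S|K})$. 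Only the final (unfinished) block matters here, since earlier blocks sum geometrically.

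\textbf{The $(CK)^{1/3}$ bound.} For a last sub-block that terminates, $\Phi>D=4c_0H\sqrt{\iota|S|K\log K/\eta}$. Together with the first-step bound this forces $\eta H^3C_\calT\gtrsim H\sqrt{\iota|S|K\log K/\eta}$, i.e. $2^{-3b}\gtrsim \sqrt{\iota|S|KH^3}/(H^3 C)$. Since $C_\calT\le C$, this yields $2^{b}\lesssim (H^{3}C^2/(\iota|S|K))^{1/6}$, or $\eta$ hits the floor $|S|/K$, where the bound is trivial. The test sub-blocks of block $b$ cost at most $2^{2b}\cdot\otil(\sqrt{H^3\iota|S|K/2^{2b}})$ by Cauchy--Schwarz, which equals the last sub-block's $\otil(2^b\sqrt{H^3\iota|S|K})$. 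Summing geometrically up to the maximal reached block $b^\star$ gives $\otil(\sqrt{H^3\iota|S|K}(1+2^{b^\star}))$. Substituting the bound on $2^{b^\star}$ gives $\otil(\sqrt{H^3\iota|S|K}+(\iota H^5|S|KC)^{1/3})$, up to adjusting exponents. Both bounds hold simultaneously, so the minimum follows.

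The main obstacle is the first step. One must justify applying \pref{thm:reg_bound_evol} to a sub-interval whose length is a random stopping time, with only $|\calT|$ in place of $K$ (so that the condition $K\ge H|S|$ fails for short intervals, which is the source of the $H^2|S|$ term). One must also verify that optimism with respect to the local benchmark, together with $C_\calT\le C$, holds uniformly over all prefixes. I would handle this by re-reading the proof of \pref{thm:reg_bound_evol} as an anytime statement, with $K$ appearing only in logarithms and in the union bound.
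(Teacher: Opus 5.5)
Your outline is essentially the paper's proof: the events $\calG_5,\calG_6$, the bound $\ENR^{(b)}_\ell([K])\le\Phi^{(b)}_\ell$, the fact that a stationary test sub-block stays below its threshold, and the contrapositive showing that a last sub-block can terminate only once $\eta H^3C$ dominates (\pref{lem:num_blocks}) all appear there. Your global Cauchy--Schwarz over terminated test sub-blocks, combined with $2^b\lesssim\sqrt L$ for reached last sub-blocks, just repackages the paper's per-block $\min\{2^{2b},L^{(b)}\}$.

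\textbf{The gap.} Your single quantity $C_\calT$ is asked to do two incompatible jobs.
\begin{itemize}
\item In the $\sqrt{LK}$ part you need $C_\calT=0$ when the opponent is fixed on $\calT$. That holds only if $C_\calT$ is measured against the \emph{local} minimax $\nu^*_{h,\calT}$, i.e.\ benchmark $V^*_{\cdot,\calT}$. Against the global $\nu^*_h$ it is generally nonzero.
\item In the $(CK)^{1/3}$ part you need $C_\calT\le C$. This holds for the global version, since it is then a partial sum of the terms of $C$. It is false for the local version, so it cannot simply be ``verified''.
\end{itemize}
Here is a counterexample. Take $H=1$, one learner action, and opponent actions $\{1,2,3\}$ with rewards $1$, $1-\epsilon/2$, $0$. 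On $\calT$ (length $m$) the opponent plays $e_1$ except once $e_2$. Outside $\calT$ it plays $(1-\epsilon)e_1+\epsilon e_3$. The global minimax is $(1-\epsilon)e_1+\epsilon e_3$, so $C=(m-1)\epsilon+1$. The local minimax over $\Conv(e_1,e_2)$ is $e_2$, so the local quantity is $m-1\gg C$.

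The missing idea, which the paper uses, is that the bound of \pref{thm:reg_bound_evol} on $\sum_{k\in\calT}(V^k_1-V^{\mu^k,\nu^k}_1)(s_1^k)$ holds for every benchmark $V^*_{\cdot,I}$ with $I\supseteq\calT$. Optimism only needs $\calT\subseteq I$, and a union bound over the $O(K^2)$ intervals $I$ is cheap. You may therefore work with $\wtilC^{(b)}_\ell=\min_{I\in\calS(\calT^{(b)}_\ell)}\wtilC^{(b)}_\ell(I)$. Taking $I=\calT$ gives $0$ under stationarity. Taking $I=[K]$ gives partial sums that add up to at most $C$.

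\textbf{Smaller points.}
\begin{itemize}
\item Your algebra in the $C$-part loses a factor of $H$. From $\eta H^3C\gtrsim H\sqrt{\iota|S|K/\eta}$ with $\eta=2^{-2b}/H$, one gets $2^{-3b}\gtrsim\sqrt{\iota|S|K}/(\sqrt H\,C)$, i.e.\ $2^b\lesssim(HC^2/(\iota|S|K))^{1/6}$. This yields exactly $(\iota H^5|S|KC)^{1/3}$, not an $H^6$ term.
\item The short-interval issue you call the main obstacle is mild. Since $\Phi\le H|\calT|+\sqrt{\iota|\calT|}$, a test sub-block cannot exceed $3c_0\sqrt{H^3\iota|S||\calT|\log K}$ until $|\calT|\gtrsim H\iota|S|\log K$. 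At that length the lower-order term $H^2|S|\log(K)/\eta$ in the proof of \pref{thm:reg_bound_evol} is already dominated.
\item In the paper, the additive $H^2|S|$ term accounts for the small-$K$ regime $K<16H^2|S|$. It does not come from the one-episode overshoot, which is absorbed into the thresholds.
\end{itemize}
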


Similar results for an adaptive opponent can be found in \pref{thm:Bound_final_adaptive}.
Once again, we emphasize that our algorithm is completely parameter-free and its regret bound on $\ENR_K$ smoothly interpolates between $\otil(\sqrt{K})$ and $\otil(K^{2/3})$ when the non-stationarity measures $C$ and $L$ increase.
This means that it recovers both the $\ExtR_K=\otil(\sqrt{K})$ external regret bound for a fixed opponent as achieved by standard single-agent RL algorithms and also the $\NR_K=\otil(K^{2/3})$ Nash-value regret bound of \citet{tian2021online} in the worst case.
Beyond the worst case, our bound can be strictly sharper when the opponent is moderately non-stationary. For example, if $C=o(K)$, then our regret is $o(K^{2/3})$, and if the opponent switches policies only logarithmically many times, then our regret scales as $\otil(\sqrt{K})$.

\section{Conclusion and Open Problems}

In this paper, we study online learning in uninformed Markov games, where the learner does not observe the opponent’s actions. We introduced ENR, a new regret notion that is strictly stronger than NR used in \cite{tian2021online} and that reduces to standard external regret when the opponent plays a fixed policy.
Under ENR, we provide a new analysis of epoch-based V-learning and show how its performance depends on a natural measure of opponent's non-stationarity. 
Building on this analysis, we design a parameter-free meta-algorithm that adaptively restarts epoch V-learning and achieves a regret bound that smoothly interpolates between the stationary and worst-case regimes, attaining $\otil(\min\{\sqrt{K}+(CK)^{1/3},\sqrt{LK}\})$ regret bound.
Our results also elicit compelling open questions:
\begin{itemize}[leftmargin=*]
 \item \textbf{Is the worst-case $\otil(K^{2/3})$ bound tight in uninformed MGs?}
There remains a gap between the current $\otil(K^{2/3})$ upper bound and the best known $\Omega(\sqrt{K})$ lower bound.
A trivial baseline is obtained by enumerating all $|A|^{|S|H}$ deterministic policies and running an adversarial bandit algorithm over this class, which yields $\otil(H\sqrt{|A|^{|S|H}K})$ regret.
Therefore, closing the gap requires either a lower bound of order $\Omega(\min\{H\sqrt{|A|^{|S|H}K},K^{2/3}\})$ or a genuinely new algorithmic or analytical idea that improves the $\otil(K^{2/3})$ rate.

\item \textbf{Can $\otil(\sqrt{LK})$ be improved under \ENR{}?}
If one only aims for a $\otil(\sqrt{LK})$ guarantee, the algorithm of \citet{wei2021non} can be applied to obtain such a bound.
However, their result is stated for dynamic regret, which is stronger than \ENR{}.
Since we work with the weaker notion \ENR{}, it is natural to ask whether one can obtain a better dependence on $L$ under \ENR{} while still degrading gracefully to the worst-case $\otil(K^{2/3})$ rate when $L$ is on the order of $K$.

    \item  \textbf{A ``best-of-all-worlds'' guarantee across regret notions.}
The setting we consider generalizes online learning in a sequence of changing MDPs, yet our guarantees are proved only for \ENR{}.
In contrast, \citet{wei2021non} and \citet{jin2023no} study changing MDPs under different regret notions.
They consider dynamic regret and transition adaptive external regret, respectively, and both notions are stronger than \ENR{}.
This raises a natural open question: can we design a single algorithm that, without knowing the regime in advance, attains our sublinear \ENR{} guarantee in the worst case, while also matching the sharper guarantees of \citet{wei2021non} and \citet{jin2023no} under their respective regret notions whenever the interaction reduces to their changing MDP models?

\end{itemize}

\paragraph{Acknowledgment}
HL is supported by NSF award IIS-1943607. LJR and JL are supported in part by NSF award AF-2312775 and CPS-1844729.

\bibliography{refs} %

\newpage
\clearpage
\appendix
\begingroup
\part{Appendix}
\let\clearpage\relax
\parttoc[n]
\endgroup

\newpage

\section{Regret Bound of Epoch V-learning for Oblivious Opponent}

\begin{remark} \label{rem:discussion_reg}
We remark that what \citet{tian2021online} obtain for the case with a fixed opponent is subtle.
Applying their result literally, one indeed only gets an $\otil(K^{2/3})$ bound on the Nash-value regret instead of the external regret.
However, if one applies their result to a different MG, then it in fact does imply an $\otil(K^{2/3})$ bound on the external regret.
To see this, note that since the opponent is fixed, the learner is equivalently facing a fixed MDP, which in turn can be treated as another MG with a dummy min-player with only one action in all states.
It is clear that Nash-value regret for this MG is the external regret of the original MG.

Nevertheless, we argue that this type of reasoning is cumbersome and, more importantly, does not generalize as long as the opponent deviates slightly from the ideal stationary case.
On the contrary, our proposed notion of empirical Nash-value regret is much more convenient and versatile.
\end{remark}

\subsection{Construction of Nice Event $\calG$}

\begin{lemma}[Restatement of \pref{lem:bound_num_epoch}]
For any $h \in [H]$ and $s \in S_h$, $|\calE(h,s)|\leq \left \lceil \frac{(1+\eta)\log(K)}{\eta} \right \rceil$.
\end{lemma}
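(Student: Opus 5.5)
The plan is to lower bound the epoch lengths geometrically and compare their sum against the total number of visits to $(h,s)$, which is at most $K$. Fix $(h,s)$ and let $n=|\calE(h,s)|$, so epochs $1,\dots,n-1$ have been completed and epoch $n$ may be incomplete. By the epoch-creation rule in \pref{alg:epoch_V_ol}, a completed epoch $\tau$ satisfies
\[
N_\tau(h,s)=\lceil (1+\eta)N_{\tau-1}(h,s)\rceil\geq (1+\eta)N_{\tau-1}(h,s),
\]
starting from $N_0(h,s)=1$. Induction then gives $N_\tau(h,s)\geq (1+\eta)^\tau$ for every completed epoch $\tau\le n-1$.

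The visits counted in distinct epochs are distinct episodes, so $N_{n-1}(h,s)\le\sum_{\tau}N_\tau(h,s)\le K$. Combining this with the lower bound gives $(1+\eta)^{n-1}\le K$, that is, $n-1\le \log(K)/\log(1+\eta)$.

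It remains to convert $1/\log(1+\eta)$ into $(1+\eta)/\eta$. This follows from the elementary inequality $\log(1+x)\ge x/(1+x)$ for $x>0$, which one checks by differentiating $\log(1+x)-x/(1+x)$ and observing that the derivative $x/(1+x)^2$ is nonnegative. Hence $n\le 1+(1+\eta)\log(K)/\eta$.

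The only delicate point is the additive $+1$ coming from the possibly unfinished last epoch; the target bound is $\lceil (1+\eta)\log(K)/\eta\rceil$. If $\log(K)/\log(1+\eta)$ is not an integer, then $n-1\le\lfloor\log(K)/\log(1+\eta)\rfloor$ gives $n\le\lceil\log(K)/\log(1+\eta)\rceil\le\lceil(1+\eta)\log(K)/\eta\rceil$. In the integer case there is a little slack in the inequality $\log(1+\eta)\ge\eta/(1+\eta)$, and I would try to use it there. A cleaner alternative would be the sharper bound $N_{n-1}(h,s)\le K-(n-1)$, which holds because the terminating visit of each epoch is counted in that epoch. Failing both, I would accept an $\order(\cdot)$ version, which is all that is used downstream.
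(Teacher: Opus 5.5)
Your argument is correct and is essentially the paper's. Both use the same three ingredients: geometric growth $N_\tau(h,s)\ge(1+\eta)^\tau$ from $N_0(h,s)=1$, comparison with $K$, and $\log(1+\eta)\ge\eta/(1+\eta)$. The difference is only in phrasing: the paper shows directly that epoch $\lceil\log(K)/\log(1+\eta)\rceil$ can never be completed, whereas you bound the last completed epoch.

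Your ``slack'' idea does settle the ceiling issue, with no case split and no need for the $\order(\cdot)$ fallback. For $K\ge 2$ and $\eta>0$ the inequality $\log(1+\eta)>\eta/(1+\eta)$ is strict, so $y:=(1+\eta)\log(K)/\eta > x:=\log(K)/\log(1+\eta)$. Hence $\lceil y\rceil\ge\lfloor x\rfloor+1\ge|\calE(h,s)|$. At $K=1$ the stated bound is $0$, so it fails for any argument, since $|\calE(h,s)|\ge 1$.
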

\begin{proof}
Consider an arbitrary pair $(h,s)$.
We will show that the total number of epochs cannot exceed $\tau=\lceil \log(K)/\log(1+\eta) \rceil$.
Suppose that $(h,s)$ is in epoch $\tau$.
As $N_0(h,s)=1$ for all $(h,s)$, we have $\lceil(1+\eta) N_{\tau-1}(h,s) \rceil \geq (1+\eta) N_{\tau-1}(h,s) \geq \cdots \geq (1+\eta)^{\tau}N_0(h,s)=(1+\eta)^{\tau}$. Thus,
\[
\lceil(1+\eta) N_{\tau-1}(h,s) \rceil \geq  \rbr{1+\eta}^{\tau} \geq  \rbr{1+\eta}^{\log(K)/\log(1+\eta)} = K,
\]
which implies that to end epoch $\tau$, $(h,s)$ should be visited over $K$ times. 
Using the fact that $\log(1+x) \geq \frac{x}{1+x}$ for any $x>-1$ to lower bound $\log(1+\eta) \geq \eta/(1+\eta)$, we complete the proof.
\end{proof}

Given \pref{lem:bound_num_epoch}, we define the upper bound of epoch number as
\begin{equation} \label{eq:def_M_upper_bound_epoch}
    M:= \left \lceil \frac{(1+\eta)\log(K)}{\eta} \right \rceil.
\end{equation}

For any $h \in [H+1]$, we define function $F_{h}:S_{h} \to [0,H]$, and define
\begin{align}
    \epsilon_h^k \rbr{s;F_{h+1} }:= \mbD_{\mu^k_h,\nu^k_h} \sbr{ r_h+P_hF_{h+1} }(s) -\rbr{ r_h^k+F_{h+1}(s_{h+1}^k)}.
\end{align}

We choose $\iota$ as:
\begin{align} \label{eq:exact_choice_iota}
    \iota = \max\{4c^2,8\}H^2|A|\log(8HKM|A||S|/\delta).
\end{align}

Recall that $\calF_{h}^k$ is the history before step $h$ in episode $k$.
We define events
\begin{align}
   \calG^1 &:= \cbr{ \forall h : \abr{ \sum_{k=1}^K \E \sbr{\delta^k_{h} \mid \calF_{h}^k } - \sum_{k=1}^K \delta^k_{h}} \leq 2H \sqrt{2K\log(8H/\delta) } } , \label{eq:def_event1} \\
   \calG^2 &:= \cbr{ \forall h,s,\tau : \abr{ \sum_{t \in \calK_{\tau}(h,s)} \epsilon_h^t \rbr{s;V_{h+1}^t}   } \leq \frac{1}{2} N_{\tau}(h,s) \beta_{N_{\tau}(h,s)} }, \label{eq:def_event2} \\
    \calG^3 &:= \cbr{ \forall h,s,\tau : \abr{ \sum_{t \in \calK_{\tau}(h,s)} \epsilon_h^t \rbr{s;V_{h+1}^*}  } \leq \frac{1}{2} N_{\tau}(h,s) \beta_{N_{\tau}(h,s)} }, \label{eq:def_event3} \\
      \calG^4 &:= \cbr{ \forall h,s,\tau : \Reg^{\tau}_{h,s}(N_{\tau}(h,s))  \leq \frac{1}{2H} N_{\tau}(h,s) \beta_{N_{\tau}(h,s)}  } , \label{eq:def_event4}
\end{align}
where $c >0$ is some absolute constant and $\Reg^{\tau}_{h,s}(N_{\tau}(h,s)) $ is defined in \pref{eq:ext_reg_adv_bandit}.

\begin{definition} \label{def:high_prob_calE}
Let $\calG$ be the event such that $\calG=\calG^1 \cap \calG^2 \cap \calG^3 \cap \calG^4$.
\end{definition}
\begin{lemma}\label{lem:nice_event_epochvol_oblivious}
We have $\P(\calG) \geq 1-\delta$.
\end{lemma}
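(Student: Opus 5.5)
The plan is to bound the failure probability of each of $\calG^1,\ldots,\calG^4$ by $\delta/4$ and conclude by a union bound. Each event is a concentration or regret statement about a martingale or bandit sequence, indexed by $(h,s,\tau)$. For $(h,s)$ fixed there are at most $M$ epochs by \pref{lem:bound_num_epoch} and \pref{eq:def_M_upper_bound_epoch}, and at most $K$ possible counts $N_\tau(h,s)$. The logarithmic factor in $\iota$ from \pref{eq:exact_choice_iota}, namely $\log(8HKM|A||S|/\delta)$, is chosen exactly to pay for union bounds over $h\in[H]$, $s\in S_h$, $\tau\le M$ and the (random) count $n\le K$.

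\textbf{Event $\calG^1$.} For fixed $h$, the sequence $\E[\delta_h^k\mid\calF_h^k]-\delta_h^k$, $k=1,\ldots,K$, is a martingale difference sequence with respect to the episode filtration. Its increments are bounded by $2H$, since $\delta_h^k\in[-H,H]$. Azuma--Hoeffding with confidence $\delta/(4H)$, followed by a union bound over $h$, gives the stated deviation $2H\sqrt{2K\log(8H/\delta)}$.

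\textbf{Events $\calG^2$ and $\calG^3$.} Fix $(h,s,\tau)$ and let $t_1,t_2,\ldots$ be the episodes of successive visits of $(h,s)$ in epoch $\tau$. These are stopping times. The quantity $\epsilon_h^{t_i}(s;V_{h+1}^{t_i})$ has conditional mean zero given the history up to the draw of $a_h^{t_i},b_h^{t_i},s_{h+1}^{t_i}$, because $V_{h+1}^{t_i}$ and $\mu^{t_i}_h,\nu^{t_i}_h$ are determined before step $h$ of episode $t_i$. The opponent is oblivious and $V^{t}_{h+1}$ is fixed at the start of episode $t$, so this measurability holds. For $\calG^3$ the function $V^*_{h+1}$ is deterministic given the opponent's sequence, so the argument is the same. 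Increments are bounded by $2H$.

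To handle the random length $N_\tau(h,s)$, I will apply Azuma--Hoeffding for each fixed $n\le K$ to the first $n$ terms of the stopped sequence (a standard optional skipping argument) and union bound over $n$. This yields, with probability $1-\delta'$, a deviation of $2H\sqrt{2n\log(2/\delta')}$ simultaneously for all $n$. Since $\iota\ge 8H^2\log(\cdot)$, this is at most $\tfrac12 n\beta_n=\tfrac12\sqrt{\iota n}$. A union bound over $h,s,\tau\le M$ with $\delta'=\delta/(8HKM|S|)$ then fits within the chosen logarithm.

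\textbf{Event $\calG^4$.} Each $\textsc{AdvBandit}_{h,s}$ restarted in epoch $\tau$ faces losses $\frac1H(r_h+V^t_{h+1}(s'))$ in $[0,1]$. Taking expectation over $b\sim\nu_h^t$ and $s'\sim P_h$ gives exactly the comparator quantities in \pref{eq:ext_reg_adv_bandit}. I will invoke \pref{ex:adv_subroutine} with the anytime guarantee \pref{eq:require_adv_bandit_reg} and $\delta'=\delta/(4HM|S|)$, which gives $\Reg^\tau_{h,s}(n)\le c\sqrt{|A|n\log(|A|/\delta')}$ for all $n$. The adaptive version is needed here, since $V^t_{h+1}$ depends on past randomness even for an oblivious opponent. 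With $\iota\ge 4c^2H^2|A|\log(\cdot)$ this is at most $\frac{1}{2H}\sqrt{\iota n}$.

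The main obstacle is the measurability and stopping-time bookkeeping. The epoch membership $\calK_\tau(h,s)$ and its length are random and depend on the trajectory. One must therefore argue that the per-visit sequences within an epoch remain martingale differences, and that the bandit's regret guarantee applies to the adaptively generated loss sequence. Once that is settled, the rest is constant-matching against \pref{eq:exact_choice_iota}.
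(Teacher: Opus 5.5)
Your proposal is correct and follows the paper's proof essentially step for step: failure probability $\delta/4$ per event, Azuma--Hoeffding with a union bound over $h$ for $\calG^1$ and over $(h,s,\tau,n)$ for $\calG^2,\calG^3$ (obliviousness is needed only for $\calG^3$), the anytime high-probability adaptive bandit guarantee of \pref{ex:adv_subroutine} with a union bound over $(h,s,\tau)$ for $\calG^4$, and a final union bound. The only slip is in the constants: the increments $\epsilon_h^t$ lie in $[-H,H]$ because both terms lie in $[0,H]$, and using $H$ with $\delta'=\delta/(4HKM|S|)$ (as the paper does) matches \pref{eq:exact_choice_iota} exactly, whereas your $2H$ bound would need $\iota\ge 32H^2\log(\cdot)$ rather than $8H^2\log(\cdot)$.
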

\begin{proof}
For $\calG^1$, one can see that $|\E \sbr{\delta^k_{h} \mid \calF_{h}^k }-\delta^k_{h}| \leq 2H$ for all $k,h$. We apply Azuma–Hoeffding for martingale difference sequence $\{\E \sbr{\delta^k_{h} \mid \calF_{h}^k }-\delta^k_{h}\}_{k=1}^K$ with respect to any fixed step $h \in [H]$, and then use a union bound over all $h \in [H]$ to obtain $\P(\calG^1) \geq 1-\delta/4$.

For $\calG^2$, we rewrite $ \sum_{t \in \calK_{\tau}(h,s)} \epsilon_h^t \rbr{s;V_{h+1}^t}=\sum_{i=1}^{ N_{\tau}(h,s)} \epsilon_h^{t_i} \rbr{s;V_{h+1}^{t_i}}$ where $t_i$ is the episode, corresponding to the $i$-th visit of $(h,s)$ pair in epoch $\tau$. 
We have that $\E [\epsilon_h^{t_i} \rbr{s;V_{h+1}^{t_i}} \mid \calF_{h}^{t_i}]=0$ and $|\epsilon_h^{t_i} \rbr{s;V_{h+1}^{t_i}}| \leq H$. Consider fixed $(h,s)$ and epoch $\tau$, and we further fix a $N_{\tau}(h,s) \in [K]$. Applying Azuma–Hoeffding gives that with probability at least $1-\delta/(4HKM|S|)$,
\[
\abr{\sum_{i=1}^{ N_{\tau}(h,s) } \epsilon_h^{t_i} \rbr{s;V_{h+1}^{t_i}}} \leq H\sqrt{2N_{\tau}(h,s) \log(8KH|S|M/\delta)} \leq \frac{1}{2} N_{\tau}(h,s) \beta_{N_{\tau}(h,s)} .
\]

Using a union bound over all $(h,s)$, $\tau$, $N_{\tau}(h,s)$, and following \pref{lem:bound_num_epoch} that the number of epochs of any $(h,s)$ pair is at most $M$, we have $\P(\calG^2) \geq 1-\delta/4$.

The argument for proving $\P(\calG^3) \geq 1-\delta/4$ is analogous to that of $\calG^2$. One caveat here is that we can apply Azuma-Hoeffding inequality because the opponent is assumed to be oblivious.

Then, we show that $\P(\calG^4) \geq 1-\delta/4$. For any $(h,s)$ pair, \pref{alg:epoch_V_ol} runs a new instance of FTRL with $1/2$-Tsallis, IX estimator, and doubling trick in each epoch. By \citep[Theorem 2]{lecture13}, running this algorithm ensures that with probability at least $1-\delta/(4H|S|M)$, for any $N_{\tau}(h,s) \in \naturalnum$
\[
\Reg^{\tau}_{h,s}(N_{\tau}(h,s))\leq c\sqrt{|A_h|N_{\tau}(h,s) \log(HM|A_h||S|/\delta) } \leq \frac{1}{2H} N_{\tau}(h,s) \beta_{N_{\tau}(h,s)} ,
\]
where $c>0$ is some absolute constant.
Using a union bound over all $h,s,\tau$ gives the claimed result.

Finally, using a union bound over $\calG^1,\calG^2,\calG^3,\calG^4$ completes the proof.
\end{proof}

\subsection{Proof of \pref{thm:reg_bound_evol}}

The proof of \pref{thm:reg_bound_evol} conditions on event $\calG$ defined in \pref{def:high_prob_calE}, which occurs with probability at least $1-\delta$.
By the optimism in \pref{lem:optimism} and the definition $\delta^k_h =  (V^{k}_h -V^{\mu^k,\nu^k}_h)(s_h^k)$, we write
\begin{align*}
 \ENR_K \leq   \sum_{k=1}^K \rbr{ V^{k}_1(s_1^k) -V^{\mu^k,\nu^k}_1(s_1^k)  } = \sum_{k=1}^K  \delta^k_1.
\end{align*}

To bound $\delta^k_1$, we analyze every $\delta^k_h$.
Consider any fixed episode $k$ and step $h$. By subtracting and adding $\mbD_{\mu^k_h,\nu^k_h} \sbr{r_h+P_h V^{k}_{h+1} }(s_h^k) $, we write
\begin{align*}
    \delta^k_h   &=V^{k}_h(s_h^k)  - \mbD_{\mu^k_h,\nu^k_h} \sbr{r_h + P_h V^{\mu^k,\nu^k}_{h+1} }(s_h^k) \\
    & =   \underbrace{\rbr{V^{k}_h(s_h^k) - \mbD_{\mu^k_h,\nu^k_h} \sbr{r_h+P_h V^{k}_{h+1} }(s_h^k)  }   }_{(1)_{h}^k}+  \underbrace{ \mbD_{\mu^k_h,\nu^k_h} \sbr{P_h (V^{k}_{h+1}-V^{\mu^k,\nu^k}_{h+1}) }(s_h^k) }_{(2)_{h}^k}.
\end{align*}

With this decomposition, for any step $h \in [H]$,
\begin{align} \label{eq:decompose_delta_oneplustwo}
    \sum_{k=1}^K \delta_h^k = \sum_{k=1}^K (1)_h^k+\sum_{k=1}^K (2)_h^k .
\end{align}

\textbf{Bounding $ \sum_{k=1}^K (2)_{h}^k$.}
Recall that $\calF_{h}^k$ is the history before step $h$ in episode $k$.
We have
\begin{align*}
\E \sbr{\delta^k_{h+1}  \mid \calF_{h+1}^k }= \E \sbr{ (V^{k}_{h+1} -V^{\mu^k,\nu^k}_{h+1})(s_{h+1}^k) \mid \calF_{h+1}^k } =\mbD_{\mu^k_h,\nu^k_h} \sbr{P_h (V^{k}_{h+1}-V^{\mu^k,\nu^k}_{h+1})  }(s_h^k).
\end{align*}

Then, $(2)_{h}^k$ can be bounded by
\begin{align*}
    (2)_{h}^k &= \delta^k_{h+1}+  \mbD_{\mu^k_h,\nu^k_h} \sbr{P_h (V^{k}_{h+1}-V^{\mu^k,\nu^k}_{h+1})  }(s_h^k) - \delta^k_{h+1} 
     = \delta^k_{h+1}+ \E \sbr{\delta^k_{h+1} \mid \calF_{h+1}^k } - \delta^k_{h+1} .
\end{align*}

We use \pref{eq:def_event1} of $\calG$ to bound $\sum_{k=1}^K \rbr{\E \sbr{\delta^k_{h+1} \mid \calF_{h+1}^k } - \delta^k_{h+1}}$ to obtain
\begin{align} \label{eq:bound_term_twohk}
    \sum_{k=1}^K   (2)_{h}^k \leq 2H \sqrt{2K\log(8H/\delta) } + \sum_{k=1}^K  \delta^k_{h+1}.
\end{align}

\textbf{Bounding $ \sum_{k=1}^K (1)_{h}^k$.} 
By the definition of $L_{\tau}(h,s)$ in \pref{eq:non_increasing_update}, we have $V^k_h(s)=L_{\tau-1}(h,s)$ for all $k \in K_{\tau}(h,s)$.
For any step $h \in [H]$, one can write
\begin{align*}
\sum_{k=1}^K (1)_h^k & =  \sum_{k=1}^K \rbr{V^{k}_h(s_h^k) - \mbD_{\mu^k_h,\nu^k_h} \sbr{r_h+P_h V^{k}_{h+1} }(s_h^k) } \\
& =  \sum_{s \in S_h} \sum_{\tau \in \calE(h,s)} \sum_{t \in K_{\tau}(h,s)} \rbr{V^{t}_h(s) - \mbD_{\mu^t_h,\nu^t_h} \sbr{r_h+P_h V^{t}_{h+1} }(s) } \\
& \leq  \sum_{s \in S_h} \sum_{\tau \in \calE(h,s)} \sum_{t \in K_{\tau}(h,s)} \rbr{V^{t}_h(s) - \rbr{r_h^t +V^t_{h+1}(s_{h+1}^t)  }  }  + \frac{1}{2} \sum_{s \in S_h} \sum_{\tau \in \calE(h,s)} N_{\tau}(h,s) \beta_{N_{\tau}(h,s)} \\
& =  \sum_{s \in S_h} \sum_{\tau \in \calE(h,s)} \sum_{t \in K_{\tau}(h,s)} \rbr{L_{\tau-1}(h,s) - \rbr{r_h^t +V^t_{h+1}(s_{h+1}^t)  }  }  + \frac{1}{2} \sum_{s \in S_h} \sum_{\tau \in \calE(h,s)} N_{\tau}(h,s) \beta_{N_{\tau}(h,s)} \\
& \leq  \sum_{s \in S_h} \sum_{\tau \in \calE(h,s)}N_{\tau}(h,s)  \rbr{ L_{\tau-1}(h,s) -L_{\tau}(h,s)   } + \frac{3}{2}  \sum_{s \in S_h} \sum_{\tau \in \calE(h,s)} N_{\tau}(h,s) \beta_{N_{\tau}(h,s)} , \numberthis{} \label{eq:bound_one_two_terms}
\end{align*}
where the first inequality uses \pref{eq:def_event2} of $\calG$, the last equality holds since for any $t \in K_{\tau}(h,s)$, we have $V^{t}_h(s) = L_{\tau-1}(h,s)$, and the last inequality holds due to:
\begin{align*}
 &\sum_{t \in K_{\tau}(h,s)}    \rbr{r_h^t +V^t_{h+1}(s_{h+1}^t)  } \\
 &=N_{\tau}(h,s) \rbr{\frac{1}{N_{\tau}(h,s)}\sum_{t \in K_{\tau}(h,s)}    \rbr{r_h^t + V^t_{h+1}(s_{h+1}^t)  } +\beta_{N_{\tau}(h,s)} } - N_{\tau}(h,s) \beta_{N_{\tau}(h,s)} \\
  &\geq N_{\tau}(h,s) L_{\tau}(h,s) - N_{\tau}(h,s) \beta_{N_{\tau}(h,s)}.
\end{align*}

We use \pref{lem:Bounds_N_beta} to bound the second term of \pref{eq:bound_one_two_terms} as:
\begin{equation} \label{eq:second_term_bound}
    \sum_{s \in S_h} \sum_{\tau \in \calE(h,s)} N_{\tau}(h,s) \beta_{N_{\tau}(h,s)} \leq   \order \rbr{  \sqrt{ \frac{ \iota |S|K \log(K) }{\eta} }    }.
\end{equation}

Then, we turn to bound the first term of \pref{eq:bound_one_two_terms} as:
\begin{align*}
&  \sum_{s \in S_h} \sum_{\tau \in \calE(h,s)}N_{\tau}(h,s)  \rbr{ L_{\tau-1}(h,s) -L_{\tau}(h,s)   }  \\
&\overset{(a)}{=}  \sum_{s \in S_h}  \rbr{N_1(h,s)L_0(h,s) + \sum_{\tau=1}^{|\calE(h,s)|-1} L_{\tau}(h,s)  \rbr{ N_{\tau+1}(h,s)-N_{\tau}(h,s) } -N_{|\calE(h,s)|}(h,s)L_{|\calE(h,s)|}(h,s)  } \\
&=  \sum_{s \in S_h}  \rbr{N_1(h,s) \rbr{L_0(h,s) -V^*_h(s)} +  \sum_{\tau=1}^{|\calE(h,s)|-1}\rbr{ N_{\tau+1}(h,s)-N_{\tau}(h,s) } \rbr{  L_{\tau}(h,s) - V^*_h(s)} }\\
&\quad - \sum_{s \in S_h} N_{|\calE(h,s)|}(h,s) \rbr{ L_{|\calE(h,s)|}(h,s) -V^*_h(s)}  \\
&\overset{(b)}{\leq}   \sum_{s \in S_h}  \rbr{N_1(h,s) \rbr{L_0(h,s) - V^*_h(s)} +  \eta \sum_{\tau=1}^{|\calE(h,s)|-1} N_{\tau}(h,s) \rbr{  L_{\tau}(h,s) - V^*_h(s)} + |\calE(h,s)|H }\\
&\quad - \sum_{s \in S_h} N_{|\calE(h,s)|}(h,s) \rbr{ L_{|\calE(h,s)|}(h,s) -V^*_h(s)}  \\
&=  \sum_{s \in S_h}  \rbr{N_1(h,s) \rbr{L_0(h,s) -V^*_h(s)} + \eta \sum_{\tau \in \calE(h,s)}N_{\tau}(h,s) \rbr{  L_{\tau}(h,s) - V^*_h(s)} + |\calE(h,s)|H  }\\
&\quad - (1+\eta)\sum_{s \in S_h} N_{|\calE(h,s)|}(h,s) \rbr{ L_{|\calE(h,s)|}(h,s) -V^*_h(s)}  \\
&\overset{(c)}{\leq} H |S| N_1(h,s)+ \eta \sum_{s \in S_h}  \sum_{\tau \in \calE(h,s)}N_{\tau}(h,s) \rbr{  L_{\tau}(h,s) -V^*_h(s)} + \order \rbr{\frac{H|S| \log(K)}{\eta}}  \\
&\leq  \order \rbr{\eta  H^2 C
+ H \sqrt{\eta \iota |S|  K\log(K)  } +\frac{H |S| \log(K)}{\eta} }  ,\numberthis{}  \label{eq:first_term_bound}
\end{align*}
where step $(a)$ follows from the fact that for any $\{a_i\}_{i=1}^n,\{b_i\}_{i=1}^n$, we have $\sum_{i=1}^n a_i(b_{i-1}-b_i)=a_1b_0 + \sum_{i=1}^{n-1} (a_{i+1}-a_i)b_i -a_nb_n$, step $(b)$ uses facts that $N_{\tau+1}(h,s) -N_{\tau}(h,s) \leq \eta N_{\tau}(h,s) +1$ and $L_{\tau}(h,s)\geq V^*_h(s)$ by \pref{lem:optimism}, and step $(c)$ uses again $L_{\tau}(h,s) \geq V^{*}_h(s)$ and \pref{lem:bound_num_epoch} to bound $|\calE(h,s)| \leq \order (\log(K)/\eta )$ for all $(h,s)$, and the last inequality invokes \pref{lem:gamma_bound} and uses $\eta \leq 1/H$ to bound $H^2|S|\log(K) \leq H|S|\log(K)/\eta$.

Plugging \pref{eq:first_term_bound} and \pref{eq:second_term_bound} into \pref{eq:bound_one_two_terms}, we have that for any step $h \in [H]$
\begin{align} \label{eq:bound_term_onehk}
   \sum_{k=1}^K (1)_h^k \leq \order \rbr{ \eta H^2 C 
+ H \sqrt{\eta \iota |S|  K\log(K) } +\sqrt{ \frac{ \iota |S|K\log(K)}{\eta}  }+\frac{ H |S| \log(K)}{\eta}  }.
\end{align}

\textbf{Putting together.} Plugging \pref{eq:bound_term_onehk} and \pref{eq:bound_term_twohk} into \pref{eq:decompose_delta_oneplustwo}, we have that for any step $h \in [H]$
\begin{align*}
    &\sum_{k=1}^K \delta^k_h \leq \sum_{k=1}^K \delta^k_{h+1}  + \order \rbr{  \eta H^2 C 
+ H  \sqrt{\eta \iota |S|  K\log(K) } +\sqrt{ \frac{ \iota |S|K\log(K)}{\eta}  }+\frac{ H |S| \log(K)}{\eta}  }.
\end{align*}

Summing over all $h \in [H]$, using the fact $\delta_{H+1}^k=0$ for any $k$, and rearranging, we have 
\begin{align*}
\ENR_K \leq \sum_{k=1}^K \delta_1^k &\leq \order \rbr{ \eta H^3C
 + H^2 \sqrt{\eta \iota  |S| K\log(K) } +H\sqrt{ \frac{\iota |S|K\log(K)}{\eta}  } +\frac{H^2|S| \log(K)}{\eta} }\\
& \leq \order \rbr{ \eta H^3C 
+ H\sqrt{ \frac{\iota |S|K\log(K)}{\eta}  } +\frac{H^2|S| \log(K)}{\eta} },
\end{align*}
where the second inequality uses the fact $\eta \leq 1/H$ to bound $H^2  \sqrt{\eta \iota |S|  K\log(K) } \leq H\sqrt{ \frac{\iota |S|K\log(K)}{\eta}  }$.

Furthermore, if one assumes $K \geq H|S|$ and constrains $\eta \in \sbr{\frac{|S|}{K} , \frac{1}{H }}$, then, $\frac{H^2|S| \log(K)}{\eta} \leq H\sqrt{ \frac{\iota |S|K\log(K)}{\eta}  }$, which gives
\begin{align*}
\ENR_K \leq \sum_{k=1}^K \rbr{V^k_1(s_1^k)-V^{\mu^k,\nu^k}_1(s_1^k)} \leq \order \rbr{ \eta H^3C
+ H\sqrt{ \frac{\iota |S|K\log(K)}{\eta}  }   }.
\end{align*}

\subsection{Supporting Lemmas}
\label{app:supporting_lemma_epoch_vol}

\begin{lemma}[Optimism]\label{lem:optimism}
Suppose that $\calG$ holds where $\calG$ is defined in \pref{def:high_prob_calE}.
For all $h \in [H+1]$, we have $V_h^{*}(s) \leq V_h^k(s)$ for all  $k\in [K],s \in S_h$. 
\end{lemma}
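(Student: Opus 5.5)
We prove the claim by backward induction on $h$, from $h=H+1$ down to $h=1$, working on the event $\calG$. The base case is immediate, since $V^k_{H+1}\equiv 0 = V^*_{H+1}$. For the inductive step, fix $h\le H$ and assume $V^*_{h+1}(s')\le V^j_{h+1}(s')$ for all $j\in[K]$ and $s'\in S_{h+1}$. Fix $s\in S_h$. The value $V^k_h(s)$ equals $L_{\tau-1}(h,s)$ for the current epoch $\tau$. We have $L_0(h,s)=H-h+1\ge V^*_h(s)$ trivially, because $V^*_h(s)$ is a max-min of expected sums of at most $H-h+1$ rewards in $[0,1]$. So it suffices to show $L_\tau(h,s)\ge V^*_h(s)$ for every completed epoch $\tau\ge1$. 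If the truncation at $H-h+1$ is active, this is again trivial. Otherwise we must show
\[
\frac{1}{N}\sum_{j\in\calK_\tau(h,s)}\big(r_h^j+V^j_{h+1}(s_{h+1}^j)\big)+\beta_N\ \ge\ V^*_h(s),\qquad N:=N_\tau(h,s).
\]

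\textbf{Step 1 (concentration).} Event $\calG^2$ lets us replace the empirical sum by its conditional expectation at a cost of $\frac12 N\beta_N$:
\[
\sum_{j}\big(r_h^j+V^j_{h+1}(s^j_{h+1})\big)\ \ge\ \sum_j \mbD_{\mu^j_h,\nu^j_h}\big[r_h+P_hV^j_{h+1}\big](s)-\tfrac12 N\beta_N .
\]

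\textbf{Step 2 (bandit regret).} Event $\calG^4$ bounds $H\cdot\Reg^\tau_{h,s}(N)\le \frac12 N\beta_N$. By \pref{eq:ext_reg_adv_bandit}, this gives, for every $\mu\in\Delta(A_h)$,
\[
\sum_j \mbD_{\mu^j_h,\nu^j_h}\big[r_h+P_hV^j_{h+1}\big](s)\ \ge\ \sum_j \mbD_{\mu,\nu^j_h}\big[r_h+P_hV^j_{h+1}\big](s)-\tfrac12 N\beta_N .
\]
One must check that the played policy $\mu^j_h(\cdot|s)$ is exactly the bandit output $\pi^j_h(\cdot|s)$ for the current epoch, and that the restart at an epoch boundary keeps the regret accounting within a single epoch. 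Both hold by the construction of \pref{alg:epoch_V_ol}.

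\textbf{Step 3 (induction and the empirical benchmark).} By the induction hypothesis and monotonicity of $P_h$, each term satisfies $\mbD_{\mu,\nu^j_h}[r_h+P_hV^j_{h+1}](s)\ge \mbD_{\mu,\nu^j_h}[r_h+P_hV^*_{h+1}](s)$. Now choose $\mu$ to be the maximizer in \pref{eq:enr_benchmark_def}. Each $\nu^j_h(s)$ belongs to the set $\{\nu^k_h(s)\}_{k\in[K]}$, so
\[
\mbD_{\mu,\nu^j_h}\big[r_h+P_hV^*_{h+1}\big](s)\ \ge\ \min_{\nu\in\{\nu^k_h(s)\}_k}\mbD_{\mu,\nu}\big[r_h+P_hV^*_{h+1}\big](s)=V^*_h(s).
\]
This step is precisely where the empirical restriction of the benchmark is used. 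Summing over $j$ and combining Steps 1--3 gives $\sum_j(r_h^j+V^j_{h+1}(s^j_{h+1}))\ge N V^*_h(s)-N\beta_N$, which is the required inequality.

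The main obstacle is Step 2 as stated. The comparator term contains $V^j_{h+1}$, which changes across $j$ within the epoch, so the bandit is fed adaptively chosen losses. The regret in \pref{eq:ext_reg_adv_bandit} is, however, defined with conditional-expectation losses, and $\calG^4$ is assumed to already control it. Strictly speaking, Steps 1 and 2 therefore use $\calG^2$ and $\calG^4$ as given, and $\calG^3$ is not needed here; it is needed only later, in \pref{lem:gamma_bound}. One should also confirm that $\calG^2$ and $\calG^4$ hold simultaneously for all epochs, including epochs that are incomplete at time $K$. Such epochs never update $V$, so optimism only requires completed epochs, and the claim follows for all $k$.
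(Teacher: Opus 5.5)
Your proof is correct and is essentially the paper's argument: backward induction on $h$, with four ingredients. These are $\calG^2$ to pass from the empirical sum of $r_h^t+V^t_{h+1}(s^t_{h+1})$ to conditional expectations, $\calG^4$ to compare $\mu^t_h$ against a fixed maximizer $\mu$, the induction hypothesis to replace $V^t_{h+1}$ by $V^*_{h+1}$, and the membership $\nu^t_h(s)\in\{\nu^k_h(s)\}_{k\in[K]}$ to lower bound by $V^*_h(s)$; the paper runs the same chain in the opposite direction, starting from $V^*_h(s)$. You are also right that $\calG^3$ plays no role here, since the paper's proof uses only $\calG^2$ and $\calG^4$.
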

\begin{proof}
Here, we use backward induction on $h$ to show that for all $h$, $V^*_h  \leq V^{k}_h$ holds in entry-wise for all episode $k$. For the base case $h=H+1$, $V^*_h  \leq V^{k}_h$ holds for all $k \in [K]$ by the definitions that $V^*_{H+1}(s)=V^{k}_{H+1}(s)=0$ for all state $s$ and episode $k$. Suppose that for step $h+1$, we have $V^*_{h+1}  \leq V^{k}_{h+1}$ for all $k \in [K]$, and then we show that at step $h$, $V^*_h  \leq V^{k}_h$ holds for all $k \in [K]$. 

Let $\tau(k,h,s)$ be the epoch that $(h,s)$ pair lies at episode $k$. 
Consider any fixed episode $k \in [K]$, and any fixed state $s \in S_h$. Notice that if $\tau(k,h,s)=1$, then $V^k_h(s)=H-h+1$, directly implying $V^*_h(s) \leq V^k_h(s)$. In the following, we consider $\tau(k,h,s)>1$ and write
\begin{align*}
V^*_h(s) &= \max_{\mu  \in \Delta(A_h)} \min_{\nu \in \{\nu^1_h(s),\ldots,\nu^K_h(s)\}  } \mbD_{\mu,\nu} \sbr{ r_h +P_hV_{h+1}^{*} }(s) \\
&=\frac{1}{N_{\tau(k,h,s)-1}(h,s)} \max_{\mu \in \Delta(A_h)} \sum_{t \in \calK_{\tau(k,h,s)-1} (h,s) }  \min_{\nu \in \{\nu^1_h(s),\ldots,\nu^K_h(s)\}} \mbD_{\mu,\nu} \sbr{ r_h +P_hV_{h+1}^{*} }(s) \\
&\leq \frac{1}{N_{\tau(k,h,s)-1}(h,s)} \max_{\mu \in \Delta(A_h)}\sum_{t \in \calK_{\tau(k,h,s)-1} (h,s) }    \mbD_{\mu,\nu^{t}_h} \sbr{ r_h +P_hV_{h+1}^{*} }(s) \\
&\leq \frac{1}{N_{\tau(k,h,s)-1}(h,s)}  \max_{\mu \in \Delta(A_h)} \sum_{t \in \calK_{\tau(k,h,s)-1} (h,s) }  \mbD_{\mu,\nu^{t}_h} \sbr{ r_h +P_hV_{h+1}^{t} }(s) \\
&\leq \frac{1}{N_{\tau(k,h,s)-1}(h,s)}   \sum_{t \in \calK_{\tau(k,h,s)-1} (h,s) }  \mbD_{\mu^{t}_h,\nu^{t}_h} \sbr{ r_h +P_hV_{h+1}^{t} }(s) +\frac{1}{2} \beta_{N_{\tau(k,h,s)-1}(h,s)} \\
&\leq \frac{1}{N_{\tau(k,h,s)-1}(h,s)} \sum_{t \in \calK_{\tau(k,h,s)-1} (h,s) } \rbr{r_h^t +V_{h+1}^{t}(s_{h+1}^t) }+\beta_{N_{\tau(k,h,s)-1}(h,s)} , \numberthis{} \label{eq:needs_justify}
\end{align*}
where the second inequality uses the induction hypothesis, and the third inequality follows from \pref{eq:def_event4} of event $\calG$, and the last inequality uses \pref{eq:def_event2} of event $\calG$.

Since $V^*_h(s) \leq H-h+1$, combining this and \pref{eq:needs_justify}, we have $V^*_h(s) \leq V^k_h(s)$.
As this argument holds for all episodes $k$ and states $s \in S_h$, the induction is done, and the lemma thus follows.
\end{proof}

We define the variance of opponent's policy in epoch $\tau$ of $(h,s)$ pair as:
\begin{align}
    C_{\tau}(h,s) =  \sum_{t \in K_{\tau}(h,s)} \TV \rbr{\nu^t_h(s),\nu^*_h(s)  }  .
\end{align}

\begin{lemma} \label{lem:Bounds_N_beta}
Suppose that $\eta \in (0,1/H]$.
For any step $h \in [H]$, the following holds.
\begin{align*}
 &  \sum_{s \in S_h} \sum_{\tau \in \calE(h,s)} N_{\tau}(h,s) \beta_{N_{\tau}(h,s)} \leq \order \rbr{  \sqrt{ \frac{  \iota |S|K \log(K) }{\eta} }    }.
\end{align*}
\end{lemma}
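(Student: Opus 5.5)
The plan is to expand $\beta_n=\sqrt{\iota/n}$, which gives
\[
\sum_{s \in S_h}\sum_{\tau\in\calE(h,s)} N_\tau(h,s)\beta_{N_\tau(h,s)} = \sqrt{\iota}\sum_{s\in S_h}\sum_{\tau\in\calE(h,s)}\sqrt{N_\tau(h,s)},
\]
and then to apply Cauchy--Schwarz twice: once over epochs within a fixed $(h,s)$, and once over states in $S_h$.

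\textbf{Step 1: within a fixed pair $(h,s)$.} By Cauchy--Schwarz,
\[
\sum_{\tau\in\calE(h,s)}\sqrt{N_\tau(h,s)} \le \sqrt{|\calE(h,s)|\sum_{\tau} N_\tau(h,s)} .
\]
Since the sets $\calK_\tau(h,s)$ partition the episodes visiting $(h,s)$, we have $\sum_\tau N_\tau(h,s)=n(h,s)$, the total number of visits to $(h,s)$. \pref{lem:bound_num_epoch} gives $|\calE(h,s)|\le M=\lceil (1+\eta)\log(K)/\eta\rceil$. With $\eta\le 1/H\le 1$ this is $M=\order(\log(K)/\eta)$; for $K\ge 3$, $\log(K)/\eta\ge 1$, so the ceiling costs only a constant factor.

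\textbf{Step 2: over states.} Each episode visits exactly one state of $S_h$ at step $h$, so $\sum_{s\in S_h} n(h,s)=K$. A second Cauchy--Schwarz gives
\[
\sum_{s \in S_h}\sqrt{n(h,s)}\le\sqrt{|S|K}.
\]
Combining the two steps yields
\[
\sqrt{\iota}\cdot\sqrt{M}\cdot\sqrt{|S|K}=\order\Big(\sqrt{\iota|S|K\log(K)/\eta}\Big).
\]

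\textbf{Main care point.} The only subtlety is the convention for the final, possibly incomplete epoch, and for epochs with $N_\tau=0$. Such epochs contribute zero, since $N\beta_N=\sqrt{\iota N}$ extends continuously to $0$ at $N=0$. So summing only over epochs with $N_\tau\ge1$ keeps the count at most $M$ and the total visits at most $K$. Otherwise the argument is routine.
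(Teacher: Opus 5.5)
Your proposal is correct and matches the paper's proof: rewrite $N_\tau\beta_{N_\tau}=\sqrt{\iota N_\tau}$, apply Cauchy--Schwarz over epochs using $|\calE(h,s)|\le M$ from \pref{lem:bound_num_epoch}, then apply Cauchy--Schwarz over states using $\sum_{s\in S_h}\sum_{\tau}N_\tau(h,s)\le K$. Your handling of empty final epochs, and your note that $K\ge 3$ is needed so that $M=\order(\log(K)/\eta)$, make explicit small points that the paper leaves implicit.
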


\begin{proof}
We show that
\begin{align*}
&\sum_{s \in S_h} \sum_{\tau \in \calE(h,s)} N_{\tau}(h,s) \beta_{N_{\tau}(h,s)} \\
&=   \sum_{s \in S_h} \sum_{\tau \in \calE(h,s)}  \sqrt{ \iota N_{\tau}(h,s) }\\
 &\leq   \sum_{s \in S_h}  \sqrt{ \iota |\calE(h,s)| \rbr{\sum_{\tau \in \calE(h,s)}  N_{\tau}(h,s) }}\\
 &\leq   \sum_{s \in S_h}  \sqrt{ \iota M \rbr{\sum_{\tau \in \calE(h,s)}  N_{\tau}(h,s) }}\\
 &\leq   \sqrt{ \iota M  |S_h|K}\\
&\leq \order \rbr{  \sqrt{ \frac{\iota |S|K \log(K) }{\eta} }    },
\end{align*}
where the first inequality applies the Cauchy–Schwarz inequality, the second inequality uses \pref{lem:bound_num_epoch} to bound $|\calE(h,s)|\leq M$ where $M$ is given in \pref{eq:def_M_upper_bound_epoch}, the third inequality uses again the Cauchy–Schwarz inequality and bounds $\sum_{s \in S_h} \sum_{\tau \in \calE(h,s)}  N_{\tau}(h,s) \leq K$, and the last inequality follows from the definition of $M$ and simply bounds $|S_h| \leq |S|$.

The proof is thus complete.
\end{proof}

\begin{lemma} \label{lem:L_minus_Vstar}
Suppose that $\calG$ holds where $\calG$ is defined in \pref{def:high_prob_calE}.
For any pair $(h,s)$ and any epoch $\tau$ of this pair, we have
\begin{align*} 
&  \sum_{t \in \calK_{\tau}(h,s)} \rbr{r_h^t + V^{t}_{h+1}(s^t_{h+1}) -V^{*}_h(s) } \\
&\leq    \order \rbr{H C_{\tau}(h,s) +  N_{\tau}(h,s)\beta_{N_{\tau}(h,s)} }   + \sum_{t \in \calK_{\tau}(h,s)} \rbr{ V^{t}_{h+1}(s^t_{h+1}) - V^{*}_{h+1}(s^t_{h+1})  }.
\end{align*}
\end{lemma}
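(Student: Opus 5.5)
We prove the bound epoch by epoch, for a fixed pair $(h,s)$ and epoch $\tau$, working on the event $\calG$. The idea is to insert the expected quantities $\mbD_{\mu^t_h,\nu^t_h}[\cdot](s)$ and then compare against $V^*_h(s)$ through the minimax policy $\nu^*_h(s)$.

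\textbf{Step 1 (replace realized by expected).} Using the definition of $\epsilon_h^t$, we write
\[
r_h^t+V^t_{h+1}(s^t_{h+1})=\mbD_{\mu^t_h,\nu^t_h}\big[r_h+P_hV^t_{h+1}\big](s)-\epsilon_h^t(s;V^t_{h+1}).
\]
Summing over $t\in\calK_\tau(h,s)$, the noise terms contribute at most $\tfrac12 N_\tau\beta_{N_\tau}$ by \pref{eq:def_event2}.

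\textbf{Step 2 (split the expectation).} We decompose
\[
\mbD_{\mu^t_h,\nu^t_h}[r_h+P_hV^t_{h+1}](s)=\mbD_{\mu^t_h,\nu^t_h}[r_h+P_hV^*_{h+1}](s)+\mbD_{\mu^t_h,\nu^t_h}[P_h(V^t_{h+1}-V^*_{h+1})](s).
\]
For the second piece we apply \pref{eq:def_event2} and \pref{eq:def_event3}, i.e., the two concentration events, differenced. This converts it into $\sum_t (V^t_{h+1}-V^*_{h+1})(s^t_{h+1})$ up to $N_\tau\beta_{N_\tau}$. Indeed,
\[
\epsilon_h^t(s;V^t_{h+1})-\epsilon_h^t(s;V^*_{h+1})=\mbD[P_h(V^t-V^*)](s)-(V^t-V^*)(s^t_{h+1}).
\]
This yields exactly the last term of the claimed bound.

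\textbf{Step 3 (main term versus $V^*_h$).} It remains to show
\[
\sum_t \mbD_{\mu^t_h,\nu^t_h}[r_h+P_hV^*_{h+1}](s)\le N_\tau V^*_h(s)+\order(HC_\tau(h,s)).
\]
For each $t$ we swap $\nu^t_h(s)$ for $\nu^*_h(s)$. Since $r_h+P_hV^*_{h+1}\in[0,H]$, this costs at most $H\,\TV(\nu^t_h(s),\nu^*_h(s))$ per visit, which sums to $HC_\tau(h,s)$. We then use
\[
\mbD_{\mu^t_h,\nu^*_h}[r_h+P_hV^*_{h+1}](s)\le\max_\mu\mbD_{\mu,\nu^*_h}[r_h+P_hV^*_{h+1}](s).
\]
The crucial identity is that this max equals $V^*_h(s)$. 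By definition $\nu^*_h(s)$ minimizes the max over the convex hull $\Conv(\nu^1_h(s),\dots,\nu^K_h(s))$. The objective is bilinear in $(\mu,\nu)$ over the convex compact sets $\Delta(A_h)$ and the hull, so von Neumann's minimax theorem gives
\[
\min_{\nu\in\Conv}\max_\mu \mbD_{\mu,\nu}[\cdot](s)=\max_\mu\min_{\nu\in\Conv}\mbD_{\mu,\nu}[\cdot](s).
\]
For fixed $\mu$ the objective is linear in $\nu$, so its minimum over the hull is attained at a vertex $\nu^k_h(s)$. The right-hand side therefore equals $V^*_h(s)$ as defined in \pref{eq:enr_benchmark_def}.

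Combining the three steps gives the lemma with constant factors. The main obstacle is Step 3, specifically justifying that the restricted minimax over the finite set equals the one over its convex hull. We handle this with the linearity argument above, which makes the comparison to the fixed $\nu^*$ incur only the TV cost $HC_\tau(h,s)$.
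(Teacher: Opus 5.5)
Your proposal is correct and follows essentially the same route as the paper: concentration via $\calG^3$ to pass from realized to expected one-step values under $V^*_{h+1}$, then the minimax theorem together with linearity over $\Conv(\nu^1_h(s),\ldots,\nu^K_h(s))$ to get $\max_\mu \mbD_{\mu,\nu^*_h}[r_h+P_hV^*_{h+1}](s)=V^*_h(s)$, and a TV swap of $\nu^t_h(s)$ for $\nu^*_h(s)$ costing $\order(HC_\tau(h,s))$. The only difference is that your Steps 1--2 detour through $\epsilon_h^t(s;V^t_{h+1})$ and $\calG^2$, terms that in fact cancel (bounding them separately is harmless but only inflates the constant), whereas the paper simply adds and subtracts $V^*_{h+1}(s^t_{h+1})$ and uses $\calG^3$ alone; your constant $H$ in the TV step is also slightly tighter than the paper's $2H$.
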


\begin{proof}
We show that for $(h,s)$ pair and any epoch $\tau$
\begin{align*}
& 
 \sum_{t \in \calK_{\tau}(h,s)} \rbr{r_h^t +V^{t}_{h+1}(s^t_{h+1}) - V^{*}_h(s) }   \\
&= 
\sum_{t \in \calK_{\tau}(h,s)} \rbr{r_h^t +V^{*}_{h+1}(s^t_{h+1})-V^{*}_h(s) } + \sum_{t \in \calK_{\tau}(h,s)} \rbr{ V^{t}_{h+1}(s^t_{h+1}) - V^{*}_{h+1}(s^t_{h+1})  }   .
\numberthis{} \label{eq:above_bound_1}
\end{align*}

Then, we have
\begin{align*}
&  \sum_{t \in \calK_{\tau}(h,s)} \rbr{r_h^t +V^{*}_{h+1}(s^t_{h+1})-V^{*}_h(s) } \\
&\leq \sum_{t \in \calK_{\tau}(h,s)} \rbr{\mbD_{\mu^t_h,\nu^t_h} \sbr{ r_h + P_h V^{*}_{h+1}}(s) -V^{*}_h(s)}  + \order \rbr{   N_{\tau}(h,s)\beta_{N_{\tau}(h,s)} } \\
&= \sum_{t \in \calK_{\tau}(h,s)} \rbr{ \mbD_{\mu^t_h,\nu^t_h} \sbr{ r_h + P_h V^{*}_{h+1}}(s) - \mbD_{\mu^*_h,\nu^*_h} \sbr{ r_h + P_h V^{*}_{h+1} }(s)}  + \order \rbr{ N_{\tau}(h,s)\beta_{N_{\tau}(h,s)} } \\
&\leq \sum_{t \in \calK_{\tau}(h,s)} \rbr{ \mbD_{\mu^t_h,\nu^t_h} \sbr{ r_h + P_h V^{*}_{h+1}}(s) - \mbD_{\mu^t_h, \nu^*_h } \sbr{ r_h + P_h V^{*}_{h+1} }(s)} + \order \rbr{  N_{\tau}(h,s)\beta_{N_{\tau}(h,s)} } ,
\end{align*}
where the first inequality uses \pref{eq:def_event3} of $\calG$ and the last inequality follows from the following fact that for any $h,s,t$
\begin{align*}
&\mbD_{\mu^*_h,\nu^*_h} \sbr{ r_h + P_h V^{*}_{h+1} }(s)\\
&=     \max_{\mu \in \Delta(A_h)}  \min_{ \nu \in  \cbr{\nu^1_h(s),\ldots,\nu^K_h(s) }   } \mbD_{\mu,\nu}\sbr{ r_h + P_h V^{*}_{h+1} }(s) \\
&=    \max_{\mu \in \Delta(A_h)} \min_{ \nu \in \Conv \rbr{\nu^1_h(s),\ldots,\nu^K_h(s) }   }   \mbD_{\mu,\nu}  \sbr{ r_h + P_h V^{*}_{h+1} }(s) \\
&=   \min_{ \nu \in \Conv \rbr{\nu^1_h(s),\ldots,\nu^K_h(s) }   } \max_{\mu \in \Delta(A_h)}    \mbD_{\mu,\nu}  \sbr{ r_h + P_h V^{*}_{h+1} }(s) \\
&=  \max_{\mu \in \Delta(A_h)}    \mbD_{\mu,\nu^*_h}  \sbr{ r_h + P_h V^{*}_{h+1} }(s) \\
&\geq     \mbD_{\mu^t_h,\nu^*_h}  \sbr{ r_h + P_h V^{*}_{h+1} }(s),
\end{align*}
where the second equality holds since for any $\mu$, $\mbD_{\mu,\nu}\sbr{ r_h + P_h V^{*}_{h+1} }(s)$ is linear in $\nu$, the third equality follows from the minimax theorem, and the fourth equality uses the definition of $\nu^*_h$.

Let us define
\begin{align*}
      g^t_{h,s}(b) = \E_{a \sim \mu^t_h} \sbr{  r_h(s,a,b) +\E_{s' \sim P_h(\cdot |s,a,b)} \sbr{ V^*_{h+1}(s') }  } \in [0,H].
\end{align*}

Then, we have
\begin{align*}
&\sum_{t \in \calK_{\tau}(h,s)}\rbr{ \mbD_{\mu^t_h,\nu^t_h} \sbr{ r_h + P_h V^{*}_{h+1}}(s) - \mbD_{\mu^t_h,\nu^*_h} \sbr{ r_h + P_h V^{*}_{h+1} }(s)}   \\
&= \sum_{t \in \calK_{\tau}(h,s)}\rbr{  \E_{b \sim \nu^t_h}[  g^t_{h,s}(b) ] - \E_{b \sim \nu^*_h}[  g^t_{h,s}(b) ]   } \\
&\leq 2\sum_{t \in \calK_{\tau}(h,s)}  \norm{g^t_{h,s}(\cdot)}_{\infty} \TV(\nu^t_h(s),\nu^*_h(s)) \\
&\leq 2  H\sum_{t \in \calK_{\tau}(h,s)}  \TV(\nu^t_h(s),\nu^*_h(s))\\
&= 2HC_{\tau}(h,s). \numberthis{} \label{eq:bound_epoch_corruption}
\end{align*}

The claimed result thus follows.
\end{proof}

We prove the following lemma, which states the claim of \pref{lem:gamma_bound} in a slightly different way, i.e., we prove the bound under a nice event $\calG$, which satisfies $\P(\calG) \geq 1-\delta$.

\begin{lemma}\label{lem:gamma_bound_oblivious_app_version}
Suppose that $\calG$ holds where $\calG$ is defined in \pref{def:high_prob_calE} and $\eta \in (0,1/H]$.
For any step $h \in [H]$, we have
\begin{align*}
&\sum_{s \in S_h} \sum_{\tau \in \calE(h,s)}N_{\tau}(h,s) \rbr{  L_{\tau}(h,s) -V^*_h(s)} \leq \order \rbr{ H^2 C
 +  H\sqrt{ \frac{\iota  |S| K\log(K)}{\eta}  } + \frac{H^2|S|\log(K)}{\eta}   } .
\end{align*}
\end{lemma}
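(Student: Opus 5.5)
We will prove the bound by a backward recursion in $h$. Define
\[
\Gamma_h := \sum_{s \in S_h} \sum_{\tau \in \calE(h,s)} N_{\tau}(h,s)\rbr{L_{\tau}(h,s)-V^*_h(s)}.
\]
The goal is a recursion of the form $\Gamma_h \le (1+\order(\eta))\Gamma_{h+1} + \order\big(HC_h + \text{(width terms)}\big)$, where $C_h:=\sum_k \TV(\nu^k_h(s_h^k),\nu^*_h(s_h^k))$, so that $\sum_h C_h = C$. Unrolling over at most $H$ steps with $(1+\order(\eta))^H=\order(1)$, which is exactly where $\eta\le 1/H$ is used, then yields the claimed $\order(H^2C + H\sqrt{\iota|S|K\log K/\eta} + H^2|S|\log K/\eta)$.

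\textbf{Step 1 (one epoch).} Since $L_\tau(h,s)\le$ (epoch average of $r_h^t+V^t_{h+1}(s^t_{h+1})$) $+\beta_{N_\tau}$, and $V^*_h\le H-h+1$ makes the truncation harmless, we get $N_\tau(L_\tau - V^*_h(s)) \le \sum_{t\in\calK_\tau(h,s)}(r_h^t+V^t_{h+1}(s^t_{h+1})-V^*_h(s)) + N_\tau\beta_{N_\tau}$. We then apply \pref{lem:L_minus_Vstar}, sum over $s,\tau$, and use \pref{lem:Bounds_N_beta} for the width terms. This gives
\[
\Gamma_h \le \order\Big(HC_h + \sqrt{\iota|S|K\log K/\eta}\Big) + \sum_{k=1}^K \rbr{V^k_{h+1}-V^*_{h+1}}(s^k_{h+1}).
\]

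\textbf{Step 2 (re-expressing the last sum through $\Gamma_{h+1}$).} This is the main obstacle. Group the episodes $k$ by the pair $(h+1,s')$ with $s'=s^k_{h+1}$ and by the epoch $\tau'$ that $(h+1,s')$ is in at episode $k$. For all such $k$ we have $V^k_{h+1}(s')=L_{\tau'-1}(h+1,s')$, so
\[
\sum_k \rbr{V^k_{h+1}-V^*_{h+1}}(s^k_{h+1}) = \sum_{s'}\sum_{\tau'} N_{\tau'}(h+1,s')\rbr{L_{\tau'-1}(h+1,s')-V^*_{h+1}(s')}.
\]
Here the visits of $(h+1,s')$ are counted at step $h+1$, so the epoch counts match. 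Now shift the index: $N_{\tau'} \le (1+\eta)N_{\tau'-1}+1$. This gives $\sum_{\tau'} N_{\tau'}(L_{\tau'-1}-V^*) \le (1+\eta)\sum_{\tau'}N_{\tau'-1}(L_{\tau'-1}-V^*) + H|\calE|$. The $\tau'=1$ term contributes $N_1 H\le 2H$, since $N_1\le\lceil 1+\eta\rceil$. Nonnegativity of all summands follows from optimism, \pref{lem:optimism}, which lets us add back the final epoch. Together with \pref{lem:bound_num_epoch}, this yields
\[
\sum_k \rbr{V^k_{h+1}-V^*_{h+1}}(s^k_{h+1}) \le (1+\eta)\Gamma_{h+1} + \order\big(H|S|\log K/\eta\big).
\]
The delicate points are the off-by-one in the epoch indexing and the extra $+1$ from the ceiling. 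Each contributes only $\order(H)$ per epoch, which is absorbed into the $H|S|\log K/\eta$ term.

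\textbf{Step 3 (unrolling).} Combining the two steps gives $\Gamma_h \le (1+\eta)\Gamma_{h+1} + \order(HC_h + \sqrt{\iota|S|K\log K/\eta} + H|S|\log K/\eta)$, with $\Gamma_{H+1}=0$. Iterating and using $(1+\eta)^H\le e$ for $\eta\le 1/H$, we get $\Gamma_h \le e\sum_{h'\ge h}(\ldots)$. This is at most $\order(HC + H\sqrt{\iota|S|K\log K/\eta} + H^2|S|\log K/\eta)$, which is within the claimed $H^2C$ bound. All steps hold on $\calG$, via \pref{lem:optimism}, \pref{lem:L_minus_Vstar}, and \pref{lem:Bounds_N_beta}.
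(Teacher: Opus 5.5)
Your proposal is correct and follows the paper's proof essentially step for step. You bound $\Gamma(h)$ via the definition of $L_\tau$, \pref{lem:L_minus_Vstar} and \pref{lem:Bounds_N_beta}; you re-index the leftover sum as a step-$(h+1)$ epoch sum and use $N_{\tau}\le(1+\eta)N_{\tau-1}+1$ with optimism to get $(1+\eta)\Gamma(h+1)+\order(H|S|\log(K)/\eta)$; then you unroll using $\eta\le 1/H$. The only difference is that you track the per-step quantity $C_h$ (with $\sum_h C_h=C$), whereas the paper bounds each step's contribution by the full $HC$. Your bookkeeping therefore gives $\order(HC)$ in place of $\order(H^2C)$, which still implies the stated bound.
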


\begin{proof}
For shorthand, let
\begin{align*}
      \Gamma(h) :=\sum_{s \in S_h} \sum_{\tau \in \calE(h,s)}N_{\tau}(h,s) \rbr{  L_{\tau}(h,s) - V^*_h(s)} .
\end{align*}

We then write
\begin{align*}
\Gamma(h) & =\sum_{s \in S_h} \sum_{\tau \in \calE(h,s)}  N_{\tau}(h,s)   \rbr{L_{\tau}(h,s)  - V^{*}_h(s) }  \\
& \leq \sum_{s \in S_h} \sum_{\tau \in \calE(h,s)}     \sum_{t \in K_{\tau}(h,s)} \rbr{r_h^t +V^t_{h+1}(s_{h+1}^t) - V^{*}_h(s) } +\sum_{s \in S_h} \sum_{\tau \in \calE(h,s)}  N_{\tau}(h,s)  \beta_{N_{\tau}(h,s)}     \\
&\leq   \sum_{s \in S_h}\sum_{\tau \in \calE(h,s)}  \order  \rbr{HC_{\tau}(h,s)+ N_{\tau}(h,s)\beta_{N_{\tau}(h,s)}  } \\
&\quad +  \sum_{s \in S_h}\sum_{\tau \in \calE(h,s)}  \sum_{t \in \calK_{\tau}(h,s)} \rbr{ V^{t}_{h+1}(s^t_{h+1}) - V^{*}_{h+1}(s^t_{h+1})  }  \\
&\leq   \order \rbr{ HC +  \sqrt{ \frac{\iota |S| K\log(K)}{\eta}  }   }   + \sum_{s \in S_h}\sum_{\tau \in \calE(h,s)} \sum_{t \in \calK_{\tau}(h,s)} \rbr{V^{t}_{h+1}(s^t_{h+1}) - V^{*}_{h+1}(s^t_{h+1})  } ,\numberthis{} \label{eq:Gamma_bound_step2}
\end{align*}
where the first inequality uses the definition of $L_{\tau}(h,s)$ in \pref{eq:non_increasing_update}, the second inequality uses \pref{lem:L_minus_Vstar}, and the last inequality uses \pref{lem:Bounds_N_beta}.

Now, we show that
\begin{align*}
& \sum_{s \in S_h}\sum_{\tau \in \calE(h,s)}  \sum_{t \in \calK_{\tau}(h,s)} \rbr{ V^{t}_{h+1}(s^t_{h+1}) - V^{*}_{h+1}(s^t_{h+1})  } \\
&= \sum_{s \in S_{h+1}}\sum_{\tau \in \calE(h+1,s)}  \sum_{t \in \calK_{\tau}(h+1,s)} \rbr{ V^{t}_{h+1}(s) - V^{*}_{h+1}(s)  } \\
&=  \sum_{s \in S_{h+1}}\sum_{\tau \in \calE(h+1,s)} N_{\tau}(h+1,s) \rbr{L_{\tau-1}(h+1,s) - V^{*}_{h+1}(s)  }
 \\
&\leq  \sum_{s \in S_{h+1}}\sum_{\tau \in \calE(h+1,s)} (1+\eta)N_{\tau-1}(h+1,s) \rbr{L_{\tau-1}(h+1,s) - V^{*}_{h+1}(s)  } +\order \rbr{ \frac{H|S|\log(K)}{\eta}}\\
&\leq  (1+\eta)\sum_{s \in S_{h+1}}\sum_{\tau \in \calE(h+1,s)} N_{\tau}(h+1,s) \rbr{L_{\tau}(h+1,s) - V^{*}_{h+1}(s)  } +\order \rbr{ \frac{H|S|\log(K)}{\eta}}
 \\
&= (1+\eta)\Gamma(h+1) +\order \rbr{ \frac{H|S|\log(K)}{\eta}},
\end{align*}
where the first inequality follows from the optimism (see \pref{lem:optimism}) that $L_{\tau}(h,s) \geq V^*_h(s)$ for all $\tau,h,s$, and the fact that $N_{\tau}(h,s) \leq (1+\eta)N_{\tau-1}(h,s)+1$, the second inequality uses $\forall (h,s)$, $N_0(h,s)=1$ and $L_0(h,s)=H-h+1$ to bound $(1+\eta)\sum_{s \in S_{h+1}}N_0(h+1,s)(L_0(h+1,s)-V^*_{h+1}(s)) \leq 2|S|H \leq \order(H|S|\log(K)/\eta)$.

By plugging the above into \pref{eq:Gamma_bound_step2} and using the facts that $\Gamma(H+1)=0$, we have
\begin{align*}
   \Gamma(h) &\leq (1+\eta)\Gamma(h+1) 
+  \order \rbr{HC  +  \sqrt{ \frac{\iota |S| K\log(K)}{\eta}  } + \frac{H|S|\log(K)}{\eta}}\\
 &\leq \rbr{  1+\frac{1}{H } } \Gamma(h+1)+   \order \rbr{HC
+ \sqrt{ \frac{\iota |S| K\log(K)}{\eta}  } + \frac{H|S|\log(K)}{\eta}}\\
 &\leq  \sum_{h'=h}^{H} \rbr{  1+\frac{1}{H } }^{H-h'}  \order \rbr{HC
  + \sqrt{ \frac{\iota |S| K\log(K)}{\eta}  } + \frac{H|S|\log(K)}{\eta}}     \\
&\leq   \order \rbr{H^2C
  + H \sqrt{ \frac{\iota |S| K\log(K)}{\eta}  } + \frac{H^2|S|\log(K)}{\eta} } , 
\end{align*}
where the second inequality bounds $\eta \leq 1/H$ and the last inequality unrolls from $h+1$ to $H+1$.
\end{proof}

\section{Regret Bound of Epoch V-learning for Adaptive Opponent}
\label{app:reg_bound_evol_adaptive}

\subsection{Regret Bound under $\ENR{}_K$}

\begin{theorem}\label{thm:reg_bound_evol_adaptive}
Suppose that the opponent is adaptive and $K \geq H|S|^{3/2}$. If we run \pref{alg:epoch_V_ol} with $\eta \in \sbr{|S|/K, 1/H}$ and the adversarial bandit subroutine is instantiated by \pref{ex:adv_subroutine} (so $\iota=\Theta(H^2|A|\log(HK|A||S|/\delta))$), then with probability at least $1-\delta$,
\begin{align*}
\ENR_K  \leq \sum_{k=1}^K \rbr{V^k_1(s_1^k)-V^{\mu^k,\nu^k}_1(s_1^k)} \leq \order \rbr{ \eta H^3C
+ H^2|S| \sqrt{\eta \iota  K\log(K) }+ H\sqrt{ \frac{\iota |S|K\log(K)}{\eta}  }   } .
\end{align*}

Further, if we constrain $\eta \in \big[ |S|/K, 1/(H\sqrt{|S|}) \big]$, then $\ENR_K \leq   \order \Big(  \eta H^3C
 + H\sqrt{ \frac{\iota |S|K\log(K)}{\eta}  }  \Big) $.
\end{theorem}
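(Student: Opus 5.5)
The plan is to rerun the proof of \pref{thm:reg_bound_evol} and change only the one place where obliviousness was used: the concentration event $\calG^3$. Everything else goes through for an adaptive opponent. $\calG^1$ and $\calG^2$ are martingale bounds whose comparison functions $V^t_{h+1}$ are $\calF_h^t$-measurable. $\calG^4$ holds by the adaptive-environment guarantee of \pref{ex:adv_subroutine}, with the extra $\log|A_h|$ absorbed into $\iota$. Optimism (\pref{lem:optimism}) uses only $\calG^2$ and $\calG^4$.

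The problem with $\calG^3$ is that $V^*_{h+1}$ depends on the whole sequence $\{\nu^k\}_{k\le K}$. Against an adaptive opponent this is a random, future-dependent function, so $\epsilon^t_h(s;V^*_{h+1})$ is no longer a martingale difference. $\calG^3$ is used only inside \pref{lem:L_minus_Vstar}, through the sum $\sum_{t\in\calK_\tau(h,s)} (r_h^t+V^*_{h+1}(s^t_{h+1}) - \mbD_{\mu^t_h,\nu^t_h}[r_h+P_hV^*_{h+1}](s))$.

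I would handle this with a uniform concentration over the value function. Take an $\epsilon$-net of functions $F:S_{h+1}\to[0,H]$ with $\epsilon = 1/K$; its log-cardinality is $O(|S|\log(HK))$. Apply Azuma–Hoeffding to each fixed net element for every $(h,s,\tau,N)$, take a union bound, and then round $V^*_{h+1}$ to its nearest net element. The resulting deviation is $O(H\sqrt{N_\tau(h,s)\,|S|\,\iota})$ instead of $N_\tau\beta_{N_\tau}$, so it carries an extra factor $\sqrt{|S|}$.

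Propagating this term changes \pref{lem:L_minus_Vstar} by an additive $O(\sqrt{|S|}\,N_\tau\beta_{N_\tau})$ per epoch. Summing over $s,\tau$ with the Cauchy–Schwarz argument of \pref{lem:Bounds_N_beta} gives an extra $O(\sqrt{|S|}\sqrt{\iota|S|K\log K/\eta}) = O(|S|\sqrt{\iota K\log K/\eta})$ inside $\Gamma(h)$. The recursion in \pref{lem:gamma_bound_oblivious_app_version} is unchanged, so unrolling it with $\eta\le 1/H$ multiplies this by $H$. In the bound on $\sum_k(1)^k_h$, $\Gamma(h)$ is multiplied by $\eta$, which yields the new term $\eta H|S|\sqrt{\iota K\log K/\eta} = H|S|\sqrt{\eta\iota K\log K}$ at each step $h$. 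Summing over $h\in[H]$ gives the stated $H^2|S|\sqrt{\eta\iota K\log K}$. The remaining terms are exactly as in \pref{thm:reg_bound_evol}. The lower-order term $H^2|S|\log K/\eta$ is absorbed into $H\sqrt{\iota|S|K\log K/\eta}$ using $\eta\ge|S|/K$ and the assumed $K\ge H|S|^{3/2}$, which is at least as strong as before.

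For the second claim, I need $H^2|S|\sqrt{\eta\iota K\log K}\le H\sqrt{\iota|S|K\log K/\eta}$. This is equivalent to $H\sqrt{|S|}\,\eta\le 1$, i.e.\ $\eta\le 1/(H\sqrt{|S|})$. The interval $[|S|/K,\,1/(H\sqrt{|S|})]$ is nonempty precisely because $K\ge H|S|^{3/2}$.

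I expect the covering step to be the main obstacle. $V^*_{h+1}$ is not $\calF^t_h$-measurable, so the union bound must be over a net that does not depend on the data. The approximation error $\|V^*_{h+1}-F\|_\infty\le 1/K$ contributes at most $2N_\tau/K$ per epoch, which is negligible after summing. One must also check that the net size enters only logarithmically, i.e.\ as $|S|\log(HK)$ added to $\iota$.
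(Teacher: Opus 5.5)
Your proposal is correct and follows the paper's proof essentially step for step. The paper likewise replaces $\calG^3$ by a uniform event $\widetilde{\calG}^3$ over the grid class $\calV_{h+1,\Delta}$ with $\Delta=1/K$, which gives deviation $\frac12\sqrt{|S|}\,N_\tau\beta_{N_\tau}$ plus a $2\Delta N_\tau$ rounding cost. It then propagates the extra $\sqrt{|S|}$ through \pref{lem:L_minus_Vstar} and the $\Gamma(h)$ recursion to obtain the $H^2|S|\sqrt{\eta\iota K\log K}$ term, and uses $\eta\le 1/(H\sqrt{|S|})$ for the second claim. One small slip: $\iota$ already contains $H^2$, so the net deviation is $O(\sqrt{|S|\iota N_\tau})=O(\sqrt{|S|}\,N_\tau\beta_{N_\tau})$ rather than $O(H\sqrt{N_\tau|S|\iota})$; the former is what your subsequent propagation actually uses.
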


For the adaptive opponent, the regret bound suffers an additional term $\otil(H^2|S| \sqrt{\eta \iota  K })$. Unlike \pref{thm:Bound_final_oblivious}, which only suffers a $\sqrt{|S|}$ dependence, this term has a linear dependence on $|S|$, and thus choosing $\eta \leq 1/H$ cannot suppress it by $\otil \big (H\sqrt{ \frac{\iota |S|K}{\eta}  } \big)$. Compared to the oblivious opponent case, the worse dependence on $|S|$ arises because $V^*_h(s)$ depends on the entire sequence of $(\nu^1_h(s),\ldots,\nu^K_h(s))$, which prevents us from directly applying martingale-based concentration bounds to control $\big| \sum_{k \in \calK_{\tau}(h,s)}  \big( V^*_h(s^k_{h+1})- \mbD_{\mu^k_h,\nu^k_h}[P_hV_{h+1}^*](s) \big) \big|$. To address this issue, we discretize each $V^*_h(s)$, and applying a union bound over all discretized points yields the extra dependence on $|S|$.

The proof of \pref{thm:reg_bound_evol_adaptive} is largely based on that of \pref{thm:reg_bound_evol}, and the difference comes from discretizing $V^*_h(s)$ to handle the adaptive adversary.
Recall from \pref{eq:def_event3} that $\calG^3$ is the only place relying on the assumption of oblivious adversary.
As $V^*_h(s)$ depends on entire sequence of opponent's polices, if the opponent selects policies adaptively, $V^*_h(s)$ is not predictable given the history. 
As a result, one cannot simply follow \pref{lem:nice_event_epochvol_oblivious} to apply Azuma-Hoeffding's inequality.
One way to handling the future-dependent issue of $V^*_h(s)$ is to discretize $V^*_h(s)$, and then apply a union bound over all of them.

For any $\Delta>0$ and step $h \in [H+1]$, we define a finite class of functions that $\calV_{H+1,\Delta}=\{f:S_{H+1} \to \{0\}\}$ and for any step $h \in [H]$
\begin{align}
    \calV_{h,\Delta} := \cbr{  f: S_h \to \cbr{ 0,\Delta,\ldots,  \left \lfloor \frac{H-h+1}{\Delta}   \right  \rfloor  \Delta,H-h+1 }  }.
\end{align}

With the definition of $\calV_{h,\Delta} $, we then define event $\widetilde{\calG}^3$ as:
\begin{equation} \label{eq:wtil_calG3}
    \widetilde{\calG}^3 := \cbr{ \forall h,s,\tau : \max_{f \in \calV_{h+1,\Delta} } \abr{ \sum_{t \in \calK_{\tau}(h,s)} \epsilon_h^t \rbr{s;f}  } \leq \frac{1}{2} \sqrt{|S|}N_{\tau}(h,s)  \beta_{N_{\tau}(h,s)}  }, 
\end{equation}

\begin{definition} \label{def:high_prob_calE_adaptive}
Let $\widetilde{\calG}$ be the event such that $\calG=\calG^1 \cap \calG^2 \cap \widetilde{\calG}^3 \cap \calG^4$.
\end{definition}
\begin{lemma}\label{lem:nice_event_epochvol_adaptive}
We have $\P(\widetilde{\calG}) \geq 1-\delta$.
\end{lemma}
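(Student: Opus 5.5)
The plan is to copy the proof of \pref{lem:nice_event_epochvol_oblivious}: show that each of $\calG^1$, $\calG^2$, $\widetilde{\calG}^3$, $\calG^4$ holds with probability at least $1-\delta/4$, then take a union bound. For $\calG^1$, $\calG^2$ and $\calG^4$ the earlier arguments go through unchanged against an adaptive opponent.

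\begin{itemize}
\item For $\calG^1$ and $\calG^2$, the summands $\E[\delta^k_h\mid\calF^k_h]-\delta^k_h$ and $\epsilon_h^{t_i}(s;V^{t_i}_{h+1})$ are martingale differences with respect to the natural filtration. This holds even when $\nu^k$ depends on the past, because $\nu^k$ and $V^{t_i}_{h+1}$ are fixed before step $h$ of the episode. Azuma--Hoeffding, with the same union bounds as before, gives the claim.
\item For $\calG^4$, \pref{ex:adv_subroutine} already provides the adaptive-environment guarantee. Its extra $\log|A_h|$ factor is absorbed into $\iota$, which by \pref{eq:exact_choice_iota} already contains $\log(\cdots|A|\cdots)$.
\end{itemize}

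The new work is $\widetilde{\calG}^3$, and the main obstacle is that the sum has to be controlled simultaneously for all $f$ in a finite net rather than for the single random function $V^*_{h+1}$. Fix $h,s,\tau$, a count $n\in[K]$ and a deterministic $f\in\calV_{h+1,\Delta}$. Let $t_i$ be the episode of the $i$-th visit to $(h,s)$ in epoch $\tau$. Because $f$ is fixed in advance, the terms
\[
\epsilon_h^{t_i}(s;f)=\mbD_{\mu^{t_i}_h,\nu^{t_i}_h}[r_h+P_hf](s)-\big(r_h^{t_i}+f(s^{t_i}_{h+1})\big)
\]
form a martingale difference sequence with respect to $\calF^{t_i}_{h+1}$ and are bounded by $H$. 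This holds for an adaptive opponent too, since $\nu^{t_i}_h$ is $\calF^{t_i}_h$-measurable.

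Azuma--Hoeffding then gives, with probability at least $1-\delta'$,
\[
\Big|\sum_{i=1}^n\epsilon_h^{t_i}(s;f)\Big|\le H\sqrt{2n\log(2/\delta')}.
\]
The usual stopping-time or optional-skipping subtlety in indexing by visits is handled exactly as in the oblivious proof.

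Next, union bound over the net. We have $|\calV_{h+1,\Delta}|\le (H/\Delta+2)^{|S|}$, so we take $\delta'=\delta/\big(4HKM|S|(H/\Delta+2)^{|S|}\big)$. The logarithm then becomes
\[
\log(1/\delta')\le |S|\log(H/\Delta+2)+\log(8HKM|S|/\delta).
\]
With $\Delta$ polynomial in $1/K$ (e.g.\ $\Delta=1/K$, which is the value the later proof uses to round $V^*_{h+1}$), this is at most $|S|\cdot\order(\log(HK|S|M/\delta))$.

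So the deviation is at most $H\sqrt{2n|S|\,\order(\log(HKM|S|/\delta))}$. Recall $\beta_n=\sqrt{\iota/n}$, so $N\beta_N=\sqrt{\iota N}$. Because $\iota\ge 8H^2|A|\log(8HKM|A||S|/\delta)$ has a large enough absolute constant (and $\log(H/\Delta+2)$ is dominated by the log term in $\iota$ when $\Delta=1/K$), the deviation is at most $\tfrac12\sqrt{|S|}\,n\beta_n$. The constant in $\iota$ may need a slight enlargement to absorb $\log(HK+2)$, which the $\Theta(\cdot)$ choice permits.

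Summing the failure probabilities over $h,s,\tau\le M$ and $n\le K$ gives $\P(\widetilde{\calG}^3)\ge1-\delta/4$. A final union bound over the four events then yields $\P(\widetilde{\calG})\ge1-\delta$.
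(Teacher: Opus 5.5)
Your proposal is correct and follows the paper's own argument. The paper likewise reuses the bounds for $\calG^1,\calG^2,\calG^4$ and proves $\widetilde{\calG}^3$ by repeating the Azuma--Hoeffding argument for $\calG^2$ with an extra union bound over the net $\calV_{h+1,\Delta}$. You spell out details the paper leaves implicit: why $\calG^1,\calG^2,\calG^4$ survive an adaptive opponent, the net size $(H/\Delta+2)^{|S|}$, and the need to take $\Delta=1/K$ so that $\log(1/\Delta)$ is absorbed into $\iota$ up to a constant. All of these are consistent with the paper.
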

\begin{proof}
As \pref{lem:nice_event_epochvol_oblivious} has shown $\P(\calG^1 \cap \calG^2  \cap \calG^4) \geq 1-3\delta/4$, it suffices to show $\P(\widetilde{\calG}^3) \geq 1-\delta/4$. The argument for proving $\P(\widetilde{\calG}^3) \geq 1-\delta/4$ is analogous to that of $\calG^2$, and the only difference is to apply an additional union bound over all $f \in \calV_{h+1,\Delta}$.
\end{proof}

Then, we show how to modify lemmas that are affected by the discretization step.

\begin{lemma}[Counterpart of \pref{lem:L_minus_Vstar}] \label{lem:L_minus_Vstar_adaptive}
Suppose that $\widetilde{\calG}$ holds where $\widetilde{\calG}$ is defined in \pref{def:high_prob_calE_adaptive} and $\Delta=1/K$.
For any pair $(h,s)$ and any epoch $\tau$ of this pair, we have
\begin{align*} 
&  \sum_{t \in \calK_{\tau}(h,s)} \rbr{r_h^t + V^{t}_{h+1}(s^t_{h+1}) -V^{*}_h(s) } \\
&\leq    \order \rbr{H C_{\tau}(h,s) + \Delta N_{\tau}(h,s)+ \sqrt{|S|} N_{\tau}(h,s)\beta_{N_{\tau}(h,s)} }   + \sum_{t \in \calK_{\tau}(h,s)} \rbr{ V^{t}_{h+1}(s^t_{h+1}) - V^{*}_{h+1}(s^t_{h+1})  }.
\end{align*}
\end{lemma}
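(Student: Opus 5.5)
The plan is to follow the proof of \pref{lem:L_minus_Vstar} line by line. The only change is at the single step where event $\calG^3$ was invoked, since $V^*_{h+1}$ is not predictable against an adaptive opponent. I start from the same decomposition \pref{eq:above_bound_1}. It splits the sum into $\sum_{t}(r_h^t+V^*_{h+1}(s^t_{h+1})-V^*_h(s))$ plus the residual $\sum_t (V^t_{h+1}-V^*_{h+1})(s^t_{h+1})$, and the residual is carried over unchanged.

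To handle the first sum, I discretize $V^*_{h+1}$. Let $f^* \in \calV_{h+1,\Delta}$ be the rounding of $V^*_{h+1}$ down to the grid, i.e.\ $f^*(s')=\lfloor V^*_{h+1}(s')/\Delta\rfloor\Delta$. At the top value $H-h$, which is an admissible grid point, keep $f^*(s') = V^*_{h+1}(s')$. This gives $0\le V^*_{h+1}-f^*\le \Delta$ pointwise, and $f^*$ takes values in the grid. Then write
\[
r_h^t+V^*_{h+1}(s^t_{h+1}) = \bigl(r_h^t+f^*(s^t_{h+1})\bigr) + \bigl(V^*_{h+1}-f^*\bigr)(s^t_{h+1}).
\]
The second piece contributes at most $\Delta$ per episode, hence at most $\Delta N_\tau(h,s)$ in total. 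For the first piece,
\[
r_h^t+f^*(s^t_{h+1}) = \mbD_{\mu^t_h,\nu^t_h}[r_h+P_hf^*](s) - \epsilon^t_h(s;f^*).
\]
Summing over $t\in\calK_\tau(h,s)$, the $\epsilon$ terms are controlled by $\max_{f\in\calV_{h+1,\Delta}}|\sum_t \epsilon^t_h(s;f)|$. By $\widetilde{\calG}^3$ this is at most $\frac12\sqrt{|S|}N_\tau(h,s)\beta_{N_\tau(h,s)}$. The point is that the bound holds uniformly over the grid class, so the dependence of $f^*$ on the future policies is harmless. Finally, since $f^*\le V^*_{h+1}$, I can replace $P_hf^*$ back by $P_hV^*_{h+1}$ in the expectation term at no cost. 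This yields
\[
\sum_t\bigl(r_h^t+V^*_{h+1}(s^t_{h+1})-V^*_h(s)\bigr)\le \sum_t\Bigl(\mbD_{\mu^t_h,\nu^t_h}[r_h+P_hV^*_{h+1}](s)-V^*_h(s)\Bigr)+\Delta N_\tau(h,s)+\tfrac12\sqrt{|S|}N_\tau(h,s)\beta_{N_\tau(h,s)}.
\]

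From there the argument is verbatim as in \pref{lem:L_minus_Vstar}. Minimax over $\Conv(\nu^1_h(s),\ldots,\nu^K_h(s))$ gives $V^*_h(s)\ge \mbD_{\mu^t_h,\nu^*_h}[r_h+P_hV^*_{h+1}](s)$. This step is deterministic and does not depend on the opponent type. The TV bound \pref{eq:bound_epoch_corruption} then gives the $2HC_\tau(h,s)$ term. Collecting terms gives the claim, with $\Delta=1/K$ just fixing the grid size that enters the union bound in \pref{lem:nice_event_epochvol_adaptive}.

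The main obstacle is checking that the rounding goes in the right direction. Rounding down keeps the error one-sided, so the $\Delta$ error appears only once with a positive sign. It also has to be confirmed that the rounded function truly lies in $\calV_{h+1,\Delta}$, including the endpoint $H-h$, so that $\widetilde{\calG}^3$ applies. If one instead rounds to the nearest grid point, the error becomes two-sided, but it is still bounded by $2\Delta N_\tau(h,s)$ after adding the two sides, so the bound is unaffected up to constants.
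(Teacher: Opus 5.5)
Your proposal is correct and follows the paper's proof. It uses the same decomposition, rounds $V^*_{h+1}$ onto the grid class $\calV_{h+1,\Delta}$ so that the uniform bound in $\widetilde{\calG}^3$ absorbs the future-dependence, and then applies the unchanged minimax and TV steps. Your one-sided (downward) rounding only saves the factor $2$ in the $\Delta N_\tau(h,s)$ term that the paper incurs by rounding there and back, which is immaterial under $\order(\cdot)$.
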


\begin{proof}
We show that for $(h,s)$ pair and any epoch $\tau$
\begin{align*}
& 
 \sum_{t \in \calK_{\tau}(h,s)} \rbr{r_h^t +V^{t}_{h+1}(s^t_{h+1}) - V^{*}_h(s) }   \\
&= 
\sum_{t \in \calK_{\tau}(h,s)} \rbr{r_h^t +V^{*}_{h+1}(s^t_{h+1})-V^{*}_h(s) } + \sum_{t \in \calK_{\tau}(h,s)} \rbr{ V^{t}_{h+1}(s^t_{h+1}) - V^{*}_{h+1}(s^t_{h+1})  }   .
\numberthis{} \label{eq:above_bound_1}
\end{align*}

Then, we have
\begin{align*}
&  \sum_{t \in \calK_{\tau}(h,s)} \rbr{r_h^t +V^{*}_{h+1}(s^t_{h+1})-V^{*}_h(s) } \\
&\leq \sum_{t \in \calK_{\tau}(h,s)} \rbr{\mbD_{\mu^t_h,\nu^t_h} \sbr{ r_h + P_h V^{*}_{h+1}}(s) -V^{*}_h(s)} +2\Delta N_{\tau}(h,s) + \order \rbr{  \sqrt{|S|} N_{\tau}(h,s)\beta_{N_{\tau}(h,s)} } \\
&= \sum_{t \in \calK_{\tau}(h,s)} \rbr{ \mbD_{\mu^t_h,\nu^t_h} \sbr{ r_h + P_h V^{*}_{h+1}}(s) - \mbD_{\mu^*_h,\nu^*_h} \sbr{ r_h + P_h V^{*}_{h+1} }(s)} \\
&\quad + 2\Delta N_{\tau}(h,s) + \order \rbr{  \sqrt{|S|} N_{\tau}(h,s)\beta_{N_{\tau}(h,s)} } \\
&\leq \sum_{t \in \calK_{\tau}(h,s)} \rbr{ \mbD_{\mu^t_h,\nu^t_h} \sbr{ r_h + P_h V^{*}_{h+1}}(s) - \mbD_{\mu^t_h, \nu^*_h } \sbr{ r_h + P_h V^{*}_{h+1} }(s)} \\
&\quad + 2\Delta N_{\tau}(h,s) + \order \rbr{  \sqrt{|S|} N_{\tau}(h,s)\beta_{N_{\tau}(h,s)} } \\
& \leq \order \rbr{ HC_{\tau}(h,s) + \Delta N_{\tau}(h,s)  +\sqrt{|S|} N_{\tau}(h,s)\beta_{N_{\tau}(h,s)}  },
\end{align*}
where the first inequality first rounds each $V^{*}_{h+1}(s^t_{h+1})$ to a function $f \in \calV_{h+1,\Delta}$ such that $\norm{f-V^{*}_{h+1}}_{\infty} \leq \Delta$, then uses \pref{eq:wtil_calG3} of $\widetilde{\calG}$ together with the fact that for any step $h$ and $\Delta =1/K$, we have $\log(|\calV_{h,\Delta}|) \leq \order(|S|\log(H/\Delta))$, and finally rounding each $f$ back to $V^{*}_{h+1}(s^t_{h+1})$. Rounding twice incurs extra $2\Delta N_{\tau}(h,s)$ term; the last inequality uses \pref{eq:bound_epoch_corruption}.

The claimed result thus follows.
\end{proof}

\begin{lemma}[Counterpart of \pref{lem:gamma_bound_oblivious_app_version}] \label{lem:gamma_bound_adaptive}
Suppose that $\widetilde{\calG}$ holds where $\widetilde{\calG}$, $\eta \in (0,1/H]$, and $\Delta=1/K$.
For any step $h \in [H]$, we have
\begin{align*}
&\sum_{s \in S_h} \sum_{\tau \in \calE(h,s)}N_{\tau}(h,s) \rbr{  L_{\tau}(h,s) -V^*_h(s)} \\
&\leq \order \rbr{ H^2 C
+ HK\Delta +  H|S| \sqrt{ \frac{\iota  K\log(K)}{\eta}  } + \frac{H^2|S|\log(K)}{\eta}   } .
\end{align*}
\end{lemma}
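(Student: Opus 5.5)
The plan is to repeat the proof of \pref{lem:gamma_bound_oblivious_app_version} essentially line by line, working on the event $\widetilde{\calG}$ and using \pref{lem:L_minus_Vstar_adaptive} wherever the oblivious argument used \pref{lem:L_minus_Vstar}. Optimism (\pref{lem:optimism}) still holds on $\widetilde{\calG}$, because its proof uses only $\calG^2$ and $\calG^4$, which are unchanged. As before, set $\Gamma(h):=\sum_{s\in S_h}\sum_{\tau\in\calE(h,s)}N_\tau(h,s)(L_\tau(h,s)-V^*_h(s))$.

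The first step bounds $\Gamma(h)$ by its one-step expansion. By the definition of $L_\tau(h,s)$ in \pref{eq:non_increasing_update}, $N_\tau(h,s)L_\tau(h,s)\le \sum_{t\in\calK_\tau(h,s)}(r_h^t+V^t_{h+1}(s^t_{h+1}))+N_\tau(h,s)\beta_{N_\tau(h,s)}$. Applying \pref{lem:L_minus_Vstar_adaptive} to each pair $(s,\tau)$ and summing gives
\[
\Gamma(h)\le \order\Big(HC+\Delta K+\sqrt{|S|}\sum_{s\in S_h}\sum_{\tau\in\calE(h,s)}N_\tau(h,s)\beta_{N_\tau(h,s)}\Big)+\sum_{s\in S_h}\sum_{\tau\in\calE(h,s)}\sum_{t\in\calK_\tau(h,s)}\big(V^t_{h+1}(s^t_{h+1})-V^*_{h+1}(s^t_{h+1})\big).
\]
This uses $\sum_{s,\tau}C_\tau(h,s)\le C$ and $\sum_{s,\tau}N_\tau(h,s)\le K$. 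By \pref{lem:Bounds_N_beta}, the $\beta$-sum is $\order(\sqrt{\iota|S|K\log(K)/\eta})$. Multiplied by $\sqrt{|S|}$, it becomes $\order(|S|\sqrt{\iota K\log(K)/\eta})$, which is exactly where the linear $|S|$ dependence comes from.

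The second step handles the last term exactly as in the oblivious proof, because that part is purely deterministic. Regroup the visits by the next-step pair $(h+1,s)$ and use $V^t_{h+1}(s)=L_{\tau-1}(h+1,s)$ for $t\in\calK_\tau(h+1,s)$. Then apply $N_\tau\le(1+\eta)N_{\tau-1}+1$ together with optimism. This bounds the term by $(1+\eta)\Gamma(h+1)+\order(H|S|\log(K)/\eta)$.

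The third step unrolls the recursion $\Gamma(h)\le(1+1/H)\Gamma(h+1)+\order\big(HC+K\Delta+|S|\sqrt{\iota K\log(K)/\eta}+H|S|\log(K)/\eta\big)$ from $h$ to $H+1$, using $\eta\le 1/H$ and $\Gamma(H+1)=0$. Since $(1+1/H)^H\le e$, the per-step additive term is multiplied by $\order(H)$. This yields $\order\big(H^2C+HK\Delta+H|S|\sqrt{\iota K\log(K)/\eta}+H^2|S|\log(K)/\eta\big)$, as claimed.

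I expect no step to be a real obstacle; the main point to check is the bookkeeping. The $\Delta N_\tau(h,s)$ terms must sum to at most $\Delta K$ per step, so they contribute $HK\Delta$ after unrolling. The $\sqrt{|S|}$ factor must be carried correctly through \pref{lem:Bounds_N_beta}.
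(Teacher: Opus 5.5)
Your proposal is correct and matches the paper's proof, which just reruns the argument of \pref{lem:gamma_bound_oblivious_app_version} with the discretized bound of \pref{lem:L_minus_Vstar_adaptive} in place of the oblivious one (the paper's one-line proof cites \pref{lem:L_minus_Vstar}, but the adaptive counterpart you use is the one actually needed). The points you check explicitly are exactly the bookkeeping the paper leaves implicit: optimism survives on $\widetilde{\calG}$ because it uses only $\calG^2$ and $\calG^4$, and per step the $\Delta N_\tau(h,s)$ terms sum to $\Delta K$ while the $\sqrt{|S|}N_\tau(h,s)\beta_{N_\tau(h,s)}$ terms sum to $\order(|S|\sqrt{\iota K\log(K)/\eta})$.
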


\begin{proof}
The proof follows the same argument of \pref{lem:gamma_bound} to use \pref{lem:L_minus_Vstar}. Then, the claimed bound is immediate.
\end{proof}

\begin{proof}[Proof of \pref{thm:reg_bound_evol_adaptive}.] This proof mostly follows the argument of \pref{thm:reg_bound_evol} and diverges from \pref{eq:first_term_bound}. Specifically, we apply \pref{lem:gamma_bound_adaptive} in \pref{eq:first_term_bound} to get
\begin{align*}
&\sum_{s \in S_h} \sum_{\tau \in \calE(h,s)}N_{\tau}(h,s)  \rbr{ L_{\tau-1}(h,s) -L_{\tau}(h,s)   } \\
&\leq  \order \rbr{\eta  H^2 C
+\eta HK\Delta + H |S| \sqrt{\eta \iota  K\log(K)  } +\frac{H |S| \log(K)}{\eta} }\\
&\leq  \order \rbr{\eta  H^2 C
  + H |S| \sqrt{\eta \iota  K\log(K)  } +\frac{H |S| \log(K)}{\eta} } , \numberthis{} \label{eq:first_term_bound_adaptive}
\end{align*}
where the second inequality uses $\Delta=1/K$ and $\eta \leq 1/H$ to bound $\eta HK\Delta$ by a constant.

Plugging \pref{eq:first_term_bound_adaptive} and \pref{eq:second_term_bound} into \pref{eq:bound_one_two_terms}, we have that for any step $h \in [H]$
\begin{align} \label{eq:bound_term_onehk_adaptive}
   \sum_{k=1}^K (1)_h^k \leq \order \rbr{ \eta H^2 C
+ H|S|\sqrt{\eta \iota K\log(K) } +\sqrt{ \frac{ \iota |S|K\log(K)}{\eta}  }+\frac{ H |S| \log(K)}{\eta}  }.
\end{align}

\textbf{Putting together.} Plugging \pref{eq:bound_term_onehk_adaptive} and \pref{eq:bound_term_twohk} into \pref{eq:decompose_delta_oneplustwo}, we have that for any step $h \in [H]$
\begin{align*}
    \sum_{k=1}^K \delta^k_h  &\leq \sum_{k=1}^K \delta^k_{h+1}  + \order \rbr{  \eta H^2 C
+ H|S| \sqrt{\eta \iota K\log(K) } +\sqrt{ \frac{ \iota |S|K\log(K)}{\eta}  }+\frac{ H |S| \log(K)}{\eta}  } .
\end{align*}

Summing over all $h \in [H]$, using the fact $\delta_{H+1}^k=0$ for any $k$, and rearranging, we have 
\begin{align*}
\ENR_K \leq \sum_{k=1}^K \delta_1^k &\leq \order \rbr{ \eta H^3C
+ H^2|S| \sqrt{\eta \iota  K\log(K) } +H\sqrt{ \frac{\iota |S|K\log(K)}{\eta}  } +\frac{H^2|S| \log(K)}{\eta} }\\
& \leq \order \rbr{ \eta H^3C 
+ H\sqrt{ \frac{\iota |S|K\log(K)}{\eta}  } +\frac{H^2|S| \log(K)}{\eta} },
\end{align*}
where the second inequality uses the choice $\eta \leq 1/(H\sqrt{|S|})$ to bound $H^2 |S| \sqrt{\eta \iota  K\log(K) } \leq H\sqrt{ \frac{\iota |S|K\log(K)}{\eta}  }$.

Furthermore, if one assumes $K \geq H|S|^{3/2}$ and constrains $\eta \in \sbr{\frac{|S|}{K} , \frac{1}{H \sqrt{|S|}}}$, then,
\begin{align*}
\ENR_K \leq \sum_{k=1}^K \rbr{V^k_1(s_1^k)-V^{\mu^k,\nu^k}_1(s_1^k)} \leq \order \rbr{ \eta H^3C
+ H\sqrt{ \frac{\iota |S|K\log(K)}{\eta}  }   }.
\end{align*}

The proof is thus complete.
\end{proof}

\subsection{Regret Bound under $\NR{}_K$}
\label{app:reg_bound_evol_NR}

In fact, if one evaluates the regret using $\NR_K$, as in \citep{tian2021online}, then the discretization step can be avoided, and the same analysis as in \pref{thm:Bound_final_oblivious} applies, yielding an identical regret bound.
The result of \pref{alg:epoch_V_ol} under $\NR_K$ is presented in the following.

\begin{theorem} \label{thm:nr_reg_epoch_vol}
Suppose that the opponent is adaptive and $K \geq H|S|$. If we run \pref{alg:epoch_V_ol} with $\eta \in \sbr{|S|/K, 1/H}$ and the adversarial bandit subroutine is instantiated by \pref{ex:adv_subroutine} (so $\iota=\Theta(H^2|A|\log(HK|A||S|/\delta))$), then with probability at least $1-\delta$, $\NR_K \leq \order \Big( \eta H^3C
+ H\sqrt{ \frac{\iota |S|K\log(K)}{\eta}  } \Big)$.
\end{theorem}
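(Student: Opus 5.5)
The plan is to prove the stronger statement $\sum_{k}(V^k_1(s_1^k)-V^{\mu^k,\nu^k}_1(s_1^k))\le \order(\eta H^3C+H\sqrt{\iota|S|K\log(K)/\eta})$ for an adaptive opponent, with $V^*$, $\nu^*_h$ and $C$ exactly as in \pref{eq:enr_benchmark_def} and \pref{eq:def_C_epoch_Vol}, and then conclude via $\NR_K\le\ENR_K$. The latter holds because restricting the min-player to $\{\nu^k_h(s)\}_k$ can only increase the value at every step, so $V^*_1\ge$ the true Nash value. For the stronger statement I will follow the proof of \pref{thm:reg_bound_evol} line by line. The only modification is in the single place where obliviousness was used, namely event $\calG^3$ inside \pref{lem:L_minus_Vstar}. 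Discretization will be replaced by a global, per-next-state concentration that pays $\sqrt{|S|}$ only once per step $h$, rather than once per epoch.

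First I check which ingredients survive adaptivity unchanged.
\begin{itemize}
\item $\calG^1$ and $\calG^2$ only involve $V^t_{h+1}$ and $\delta^k_h$, which are predictable with respect to $\calF^k_h$, so Azuma--Hoeffding still applies.
\item $\calG^4$ holds by the adaptive-environment version of \pref{ex:adv_subroutine}, whose extra $\log|A_h|$ is absorbed in $\iota$.
\end{itemize}
Consequently optimism (\pref{lem:optimism}), \pref{lem:bound_num_epoch}, \pref{lem:Bounds_N_beta}, the bound \pref{eq:bound_term_twohk} on $\sum_k(2)^k_h$, and the Abel-summation bound \pref{eq:first_term_bound} reducing $\sum_k(1)^k_h$ to $\eta\,\Gamma(h)$ all go through verbatim. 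Here $\Gamma(h):=\sum_{s,\tau}N_\tau(h,s)(L_\tau(h,s)-V^*_h(s))$. It therefore suffices to re-establish \pref{lem:gamma_bound_oblivious_app_version}, i.e.\ $\Gamma(h)\le \order(H^2C+H\sqrt{\iota|S|K\log K/\eta}+H^2|S|\log K/\eta)$.

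In the proof of that lemma, the first inequality of \pref{eq:Gamma_bound_step2} and the recursion bounding $\sum_{s,\tau,t}(V^t_{h+1}-V^*_{h+1})(s^t_{h+1})$ by $(1+\eta)\Gamma(h+1)+\order(H|S|\log K/\eta)$ use only optimism. The only change is that I will not apply \pref{lem:L_minus_Vstar} epoch by epoch. Instead I bound the aggregated quantity
\begin{align*}
\sum_{s\in S_h}\sum_{\tau}\sum_{t\in\calK_\tau(h,s)}\big(r^t_h+V^*_{h+1}(s^t_{h+1})-V^*_h(s)\big)=\sum_{k=1}^K\big(r^k_h+V^*_{h+1}(s^k_{h+1})-V^*_h(s^k_h)\big).
\end{align*}
I split each summand as $\big(r^k_h-\mbD_{\mu^k_h,\nu^k_h}[r_h](s^k_h)\big)+\big(V^*_{h+1}(s^k_{h+1})-\mbD_{\mu^k_h,\nu^k_h}[P_hV^*_{h+1}](s^k_h)\big)+\big(\mbD_{\mu^k_h,\nu^k_h}[r_h+P_hV^*_{h+1}](s^k_h)-V^*_h(s^k_h)\big)$.
\begin{itemize}
\item The reward part is a bounded martingale, contributing $\otil(\sqrt K)$.
\item The third part is at most $2H\,\TV(\nu^k_h(s^k_h),\nu^*_h(s^k_h))$, exactly as in \pref{eq:bound_epoch_corruption}. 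That argument uses only the minimax theorem over $\Conv(\nu^1_h(s),\dots,\nu^K_h(s))$ and no probability, so summing gives $2HC_h\le 2HC$.
\end{itemize}

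The main obstacle is the middle part, since $V^*_{h+1}$ depends on the opponent's future policies and is not predictable. My fix is to write it as
\begin{align*}
\sum_{s'\in S_{h+1}}V^*_{h+1}(s')\sum_{k=1}^K\Big(\mathbb{I}\{s^k_{h+1}=s'\}-\mbD_{\mu^k_h,\nu^k_h}[P_h(s'\mid\cdot)](s^k_h)\Big).
\end{align*}
Each inner sum is a martingale with predictable increments, independent of $V^*$. By Freedman's inequality and a union bound over $s'$ and $h$, it is at most $\otil(\sqrt{n_{s'}\log(H|S|K/\delta)}+\log)$, where $n_{s'}$ is the expected visit count. Since $0\le V^*_{h+1}\le H$ and $\sum_{s'}n_{s'}\le K$, Cauchy--Schwarz gives $\otil(H\sqrt{|S|K}+H|S|)$. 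This is dominated by $\sqrt{\iota|S|K\log K/\eta}$, because $\iota\gtrsim H^2$, and by $H|S|\log K/\eta$. Plugging these pieces into the recursion $\Gamma(h)\le(1+1/H)\Gamma(h+1)+\order(HC+\sqrt{\iota|S|K\log K/\eta}+H|S|\log K/\eta)$ and unrolling recovers the oblivious bound on $\Gamma(h)$.

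The remaining steps (summing over $h$, and using $\eta\le 1/H$ and $K\ge H|S|$ with $\eta\ge|S|/K$ to absorb lower-order terms) are identical to the proof of \pref{thm:reg_bound_evol}.
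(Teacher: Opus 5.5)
Your proposal is correct, but it takes a different route from the paper.

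\textbf{The paper's route.} The paper's one-sentence argument swaps the benchmark. Since $\NR_K$ compares against the true Nash value, which is a fixed function of the game and does not depend on the opponent's future policies, the per-epoch event $\calG^3$ in \pref{eq:def_event3} can be established for that fixed value by plain Azuma--Hoeffding, even against an adaptive opponent. The oblivious analysis of \pref{thm:reg_bound_evol} is then rerun verbatim, with no discretization.

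\textbf{Your route.} You keep the empirical benchmark $V^*$ and prove the stronger $\ENR_K$ statement. The key observation is that $\calG^3$ is only ever used after summing over all $(s,\tau)$ at a fixed step $h$. So a single aggregated bound on $\sum_{k}\big(r^k_h+V^*_{h+1}(s^k_{h+1})-\mbD_{\mu^k_h,\nu^k_h}[r_h+P_hV^*_{h+1}](s^k_h)\big)$ suffices. You then handle the non-predictability of $V^*_{h+1}$ by expanding over next states, so that only $|S|$ fixed indicator martingales need to concentrate, via Freedman and then Cauchy--Schwarz. Since the predictable variance $n_{s'}$ is random, Freedman needs the usual peeling over variance levels, at an extra logarithmic cost.

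\textbf{What each buys.} The paper's route is shorter. Read literally, however, it bounds $\mbD_{\mu^t_h,\nu^t_h}[\cdot](s)$ minus the benchmark value through the minimax policy of the unrestricted game. The non-stationarity it produces is therefore total variation to that policy, which can be far larger than the $C$ of \pref{eq:def_C_epoch_Vol}; for example, a fixed suboptimal opponent has $C=0$.

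Your route gives the bound with exactly the stated $C$. As a by-product, it gives an $\ENR_K$ guarantee against adaptive opponents at the oblivious rate. This removes the extra $H^2|S|\sqrt{\eta\iota K\log K}$ term of \pref{thm:reg_bound_evol_adaptive}, along with its restrictions $\eta\le 1/(H\sqrt{|S|})$ and $K\ge H|S|^{3/2}$. The only price is an additive $\otil(H|S|)$ term from Freedman, which is absorbed into the lower-order terms in the same way the paper absorbs $H^2|S|\log(K)/\eta$.
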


\section{Regret Bound of Adaptive Epoch V-learning for Oblivious Opponent}

Before proving our main results, we first introduce some notations.

\subsection{Notations.} 
Given any interval $I \subseteq [K]$, we recursively define the locally empirical state Nash values: for each $(h,s) \in [H] \times S_h$,
\begin{align} 
V_{h,I}^{*} \rbr{s} &= \max_{\mu \in \Delta(A_h)} \min_{\nu \in \cbr{\nu^k_h(s) }_{k \in I } } \mbD_{\mu,\nu} \sbr{r_h +P_hV^*_{h+1,I} }(s), \label{eq:def_Vstar_Interval}
\end{align}
with $V_{H+1,I}^{*}(s)=0$ for all $s \in S_{H+1}$.
Based on this, we further define for any interval $I$
\begin{equation} \label{eq:def_reg_interval}
    \ENR^{(b)}_{\ell}(I)=\sum_{k \in \calT^{(b)}_{\ell}} \rbr{V^*_{1,I} (s_1^k) -V^{\mu^k,\nu^k}_1(s_1^k)}.
\end{equation}

If one chooses $I=[K]$, then $\ENR^{(b)}_{\ell}([K])$ is the local regret in sub-block $\ell$ in block $b$. Note that since each sub-block runs a new instance of epoch V-learning algorithm, we can even directly work on $\ENR^{(b)}_{\ell}(\calT^{(b)}_{\ell})$, a stronger regret metric than $\ENR^{(b)}_{\ell}([K])$, and obtain a regret bound in the same order.
Indeed, from the definition of $V_{h,I}^{*} \rbr{s}$ in \pref{eq:def_Vstar_Interval}, one can see that $V^*_{1,[K]} (s_1^k) \leq V^*_{1,\calT^{(b)}_{\ell}} (s_1^k)$, and thus
\[
\ENR^{(b)}_{\ell}([K])=  \sum_{k \in \calT^{(b)}_{\ell}} \rbr{V^*_{1,[K]}  (s_1^k) -V^{\mu^k,\nu^k}_1(s_1^k)} \leq  \ENR^{(b)}_{\ell}(\calT^{(b)}_{\ell})  .
\]

For each step $h$, we define the minimax policy for the opponent when restricted to play a mixture of empirical polices $\{\nu^k_h(s)\}_{k \in I}$ in the interval $I \subseteq [K]$
\begin{align*}
   \nu^{*}_{h,I}(s) &\in \argmin_{ \nu \in \Conv \rbr{\cbr{\nu^k_h(s) }_{k \in I } }  }   \max_{\mu \in \Delta(A_h)}  \mbD_{\mu,\nu} \sbr{r_h +P_h V^*_{h+1,I} }(s).
\end{align*}

For any contiguous interval $I \subseteq [K]$, we define
\[
\calS(I) := \cbr{ [a,b]: 1\leq a \leq b \leq K, [a,b] \subseteq [K] ,I \subseteq J  }.
\]

Let us define
\begin{itemize}
    \item $B$ as the total number of blocks when $K$ episodes end.
    \item $Z_b$ as the total number of sub-blocks in block $b$ when $K$ episodes end.
    \item $L^{(b)}:=1+ \sum_{\ell=1}^{Z_b}\sum_{k \in \calT^{(b)}_{\ell} \backslash \{K\} } \Ind{ \nu^k \neq \nu^{k+1} }$ be the number of policy switches by the opponent (plus one).
\end{itemize}

As each sub-block runs a new instance of epoch V-learning algorithm, we define another form of non-stationary measure:
\begin{align*}
\wtilC =  \sum_{b=1}^B  \sum_{\ell=1}^{Z_b} \wtilC^{(b)}_{\ell},\quad   \wtilC^{(b)}_{\ell}=  \min_{I \in \calS(\calT^{(b)}_{\ell})} \wtilC^{(b)}_{\ell}(I) ,\quad \wtilC^{(b)}_{\ell}(I):= \sum_{h=1}^H \sum_{k \in \calT^{(b)}_{\ell} } \TV \rbr{\nu^k_h(s_h^k),\nu^{*}_{h,I}(s_h^k)  }.
\end{align*}

Here, $\wtilC$ sums the non-stationarity over all sub-blocks.
As $[K] \in \calS(I)$ for any $I$, we have $\wtilC \leq C$ by the definition.

\subsection{Construction of Nice Event}

Let $c_0 \geq 2$ be some absolute constant.
We define
\begin{align}
\calG_5&:= \cbr{ \forall \text{ contiguous interval } I \subseteq [K] : \abr{ \sum_{k \in I} \rbr{\sum_{h=1}^H r^k_h - V_1^{\mu^k,\nu^k}(s_1^k) }   } \leq \sqrt{ \iota |I| }  } ,  \label{eq:def_event5} \\
\calG_6&:= \left\{  \forall b,\ell : \ENR^{(b)}_{\ell}([K]) \leq \sum_{k \in \calT^{(b)}_{\ell}} \delta^k_1 \leq c_0 \rbr{\eta^{(b)}_{\ell} H^3 \wtilC^{(b)}_{\ell}
+ H\sqrt{ \frac{\iota |S||\calT^{(b)}_{\ell}|\log(K)}{\eta^{(b)}_{\ell}}  } }  \right\} . \label{eq:def_event6}
\end{align}

\begin{definition} \label{def:meta_nice_event_calG}
    Let $\overline{\calG}$ be the nice event $\overline{\calG} = \calG_5 \cap \calG_6$.
\end{definition}

\begin{lemma}
Suppose that $K \geq 16H^2|S|$.
We have $\P(\overline{\calG})\geq 1-\delta$.
\end{lemma}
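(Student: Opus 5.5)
We will show $\P(\calG_5^c)\le \delta/2$ and $\P(\calG_6^c)\le \delta/2$ separately and finish with a union bound.

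\textbf{Event $\calG_5$.} Fix a contiguous interval $I=[a,b]\subseteq[K]$. The sequence $X_k:=\sum_{h=1}^H r_h^k - V_1^{\mu^k,\nu^k}(s_1^k)$, $k\in I$, is a martingale difference sequence with respect to the history up to the start of episode $k$. Indeed, conditioned on that history, $(\mu^k,\nu^k,s_1^k)$ are determined and the expected return equals $V_1^{\mu^k,\nu^k}(s_1^k)$. Each $|X_k|\le H$. Azuma--Hoeffding therefore gives $|\sum_{k\in I}X_k|\le H\sqrt{2|I|\log(2K^2\cdot 4/\delta)}$ with probability at least $1-\delta/(4K^2)$. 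There are at most $K^2$ contiguous intervals, so a union bound makes this hold for all $I$ simultaneously with probability at least $1-\delta/4$. Since $\iota\ge 8H^2\log(8HKM|A||S|/\delta)$ by \pref{eq:exact_choice_iota}, and this exceeds $2H^2\log(8K^2/\delta)$ up to the absorbed constants (using $M\ge K$-independent factors loosely, or enlarging the constant in $\iota$ if needed), we get $H\sqrt{2|I|\log(8K^2/\delta)}\le\sqrt{\iota|I|}$.

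\textbf{Event $\calG_6$.} Each sub-block $(b,\ell)$ runs a fresh instance $\Alg^{(b)}_\ell$ of \pref{alg:epoch_V_ol} with confidence $\delta/(2K^6)$ and incremental factor $\eta^{(b)}_\ell$. We apply \pref{thm:reg_bound_evol} to this instance, restricted to the episodes $\calT^{(b)}_\ell$ and with benchmark values $V^*_{\cdot,I}$ for an interval $I\in\calS(\calT^{(b)}_\ell)$. The theorem's proof only uses optimism (\pref{lem:optimism}), the recursion of \pref{lem:gamma_bound}, and the event $\calG$. All of these go through verbatim when $[K]$ is replaced by $I\supseteq\calT^{(b)}_\ell$ in the definition of the benchmark.
\begin{itemize}
\item Optimism only needs $\nu^t_h(s)$ for $t\in\calT^{(b)}_\ell\subseteq I$ to lie in the min set.
\item The term $C$ is replaced by $\wtilC^{(b)}_\ell(I)$.
\end{itemize}
Taking the minimizing $I$ yields $\sum_{k\in\calT^{(b)}_\ell}\delta_1^k\le c_0\big(\eta H^3\wtilC^{(b)}_\ell+H\sqrt{\iota|S||\calT^{(b)}_\ell|\log K/\eta}\big)$. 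The left inequality $\ENR^{(b)}_\ell([K])\le\sum\delta_1^k$ follows from optimism with respect to $V^*_{1,[K]}\le V^*_{1,I}$.

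\textbf{Two subtleties.} First, the length $|\calT^{(b)}_\ell|$ is a random stopping time rather than a fixed $K$. We handle this by noting that the events in $\calG$ are stated for all epochs and all counts (they are anytime), so the bound holds for every prefix. Concretely, we apply the theorem's derivation with $K$ replaced by the prefix length, where $\log K$ terms remain valid. Second, \pref{thm:reg_bound_evol} requires $\eta\ge |S|/|\calT|$ and $|\calT|\ge H|S|$. Otherwise the extra term $H^2|S|\log K/\eta$ appears, which is why the hypothesis $K\ge16H^2|S|$ is present. For short sub-blocks we will bound the regret trivially by $H|\calT^{(b)}_\ell|$ and check that this is at most $H\sqrt{\iota|S||\calT|\log K/\eta}$ when $|\calT|\lesssim \iota|S|\log K/\eta$. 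In the complementary case the term $H^2|S|\log K/\eta$ is dominated as in the theorem, using $\eta\ge|S|/K$ and $\eta\le 1/H$.

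\textbf{Union bound.} The total number of sub-blocks is at most $K$, since each contains at least one episode. The per-instance failure probability is $\delta/(2K^6)$, so $\P(\calG_6^c)\le\delta/4$, giving $\P(\overline\calG)\ge1-\delta$.

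\textbf{Main obstacle.} We expect the hardest step to be justifying that the bound holds uniformly over the data-dependent stopping length and the data-dependent interval $I$. A fixed-$K$ theorem cannot be applied directly because $V^*_{\cdot,I}$ depends on future opponent policies. For oblivious opponents $I$ ranges over at most $K^2$ fixed intervals, and the $\calG^3$ concentration can be union-bounded over them. This is what the $K^6$ factor in the confidence is meant to absorb.
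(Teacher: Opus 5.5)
Your proposal is correct and follows essentially the paper's route: Azuma--Hoeffding with a union bound over the at most $K^2$ contiguous intervals gives $\calG_5$. For $\calG_6$, you reapply the proof of \pref{thm:reg_bound_evol} to each fresh instance with benchmark $V^*_{\cdot,I}$ for $I\supseteq\calT^{(b)}_\ell$ (optimism survives because $\calT^{(b)}_\ell\subseteq I$), then union-bound over the $\le K^2$ deterministic intervals $I$ (only $\calG^3$ depends on $I$) and over sub-blocks. The differences are minor. The paper handles random sub-block boundaries by also union-bounding over all fixed candidate intervals $\calT^{(b)}_\ell$; your appeal to anytime events needs one extra union bound over prefix lengths, because $\calG^1$ is stated for a fixed horizon. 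Your trivial $H|\calT^{(b)}_\ell|$ bound for short sub-blocks is a useful addition: it covers the lower-order $H^2|S|\log K/\eta$ term, which the paper does not address when it substitutes $|\calT^{(b)}_\ell|$ for $K$ in \pref{thm:reg_bound_evol}.
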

\begin{proof}
 We first prove $\P(\calG_5)\geq 1-\delta/2$. Consider any fixed contiguous interval $I \subseteq [K]$, containing consecutive rounds. Since $\abr{\rbr{\sum_{h=1}^H r^k_h - V_1^{\mu^k,\nu^k}(s_1^k) }} \leq H$ for any episode $k$, applying Azuma-Hoeffding inequality for the interval $I$ ensures with probability at least $1-\delta/(2K^2)$, 
 \[
 \abr{ \sum_{k \in I} \rbr{\sum_{h=1}^H r^k_h - V_1^{\mu^k,\nu^k}(s_1^k) }   } \leq H\sqrt{ 2|I| \log \rbr{ 4K^2/\delta }  } \leq \sqrt{ \iota |I| }.
 \]
 
As $I \subseteq [K]$ is a set of consecutive rounds, there are at most $K(K+1)/2 \leq K^2$ such intervals. Using a union bound over all possible intervals gives that $\P(\calG_5)\geq 1-\delta/2$.

We then show $\P(\calG_6)\geq 1-\delta/2$. Consider any fixed block $b$ and any fixed sub-block $\ell$ in block $b$.
We fix a contiguous interval $\calT^{(b)}_{\ell} \subseteq [K]$, containing consecutive episodes, and further fix an interval $I \in \calS(\calT^{(b)}_{\ell})$ (i.e., $\calT^{(b)}_{\ell} \subseteq I$).
Since \pref{alg:block_alg} runs a new instance of \pref{alg:epoch_V_ol} with input total episode $K$, confidence $\delta/(2K^6)$, and epoch incremental factor $\eta^{(b)}_{\ell} \in [|S|/K,1/H]$, \pref{thm:reg_bound_evol} ensures that with probability at least $1-\delta/(2K^6)$,
\begin{align*}
   \ENR^{(b)}_{\ell}(I) &=\sum_{k \in \calT^{(b)}_{\ell}} \rbr{V^*_{1,I}  (s_1^k) -V^{\mu^k,\nu^k}_1(s_1^k)} \\
    &\leq \sum_{k \in \calT^{(b)}_{\ell}} \rbr{V^k_1(s_1^k) -V^{u^k,v^k}_1(s_1^k)} \leq c_0 \rbr{\eta^{(b)}_{\ell} H^3 \wtilC^{(b)}_{\ell}(I)
+ H\sqrt{ \frac{\iota |S||\calT^{(b)}_{\ell}|\log(K)}{\eta^{(b)}_{\ell}}  }} ,
\end{align*}
where $c_0 >0$ is some absolute constant, and we further constrain $c_0 \geq 2$. We here highlight that when applying \pref{thm:reg_bound_evol}, one just replaces each $V^*_h(s)$ by $V^*_{h,I}(s)$ to obtain the claimed bound since $\calT^{(b)}_{\ell} \subseteq I$ ensures $V^*_{h,I}(s)$ to preserve the optimism.

Based on the fact that $V^*_{1,I}  (s_1^k) \leq V^*_{1,I'} (s_1^k)$ for $I' \subseteq I$, and $[K] \in \calS(\calT^{(b)}_{\ell})$, we have
\begin{equation}
    \ENR^{(b)}_{\ell}([K]) = \sum_{k \in \calT^{(b)}_{\ell}} \rbr{ V^*_{1,[K]}(s_1^k) -V^{\mu^k,\nu^k}_1(s_1^k)} =  \min_{I \in \calS(\calT^{(b)}_{\ell})} \ENR(I).
\end{equation}

Since $|\calS(\calT^{(b)}_{\ell})| \leq K^2$ for any $\calT^{(b)}_{\ell} \subseteq [K]$, using a union bound over all intervals in $\calS(\calT^{(b)}_{\ell})$, we have that with probability at least $1-\delta(2K^4)$
\begin{align*}
    \ENR^{(b)}_{\ell}([K]) &\leq  \min_{I \in \calS(\calT^{(b)}_{\ell})} c_0 \rbr{\eta^{(b)}_{\ell} H^3 \wtilC^{(b)}_{\ell}(I)
+ H\sqrt{ \frac{\iota |S||\calT^{(b)}_{\ell}|\log(K)}{\eta^{(b)}_{\ell}}  }} \\
&=   c_0 \rbr{\eta^{(b)}_{\ell} H^3 \wtilC^{(b)}_{\ell} 
+ H\sqrt{ \frac{\iota |S||\calT^{(b)}_{\ell}|\log(K)}{\eta^{(b)}_{\ell}}  }} .
\end{align*}

As the total number of such interval $\calT^{(b)}_{\ell}$ is also bounded by $K^2$, and the number of blocks and sub-blocks are bounded by $K$, applying a union bound over all $b,\ell,T^{(b)}_{\ell}$ yields $\P(\calG_6)\geq 1-\delta/2$.

Finally, the claimed result follows by using a union bound over $\calG_5,\calG_6$.

\end{proof}

\subsection{Supporting Lemmas}

\begin{lemma} \label{lem:reg_bound_eachblock}
Suppose that $K \geq 16H^2|S|$ and $\overline{\calG}$ holds where $\overline{\calG}$ is given in \pref{def:meta_nice_event_calG}.
There exists an absolute constant $c_0  \geq 2$ given in \pref{eq:def_event6} such that for any block $b$, and any sub-block $\ell$ in block $b$ enjoys the following:
\begin{align*}
    \Phi^{(b)}_{\ell} \leq 2c_0 \rbr{  \eta^{(b)}_{\ell} H^3\wtilC^{(b)}_{\ell}
+ H\sqrt{ \frac{\iota |S| |\calT^{(b)}_{\ell}| \log(K)}{\eta^{(b)}_{\ell}   }  } }.
\end{align*}
\end{lemma}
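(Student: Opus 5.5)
The plan is to split $\Phi^{(b)}_{\ell}$ into the quantity controlled by $\calG_6$ plus a martingale term controlled by $\calG_5$. By definition,
\begin{align*}
\Phi^{(b)}_{\ell} &= \sum_{k \in \calT^{(b)}_{\ell}} \rbr{V^k_1(s_1^k) - V^{\mu^k,\nu^k}_1(s_1^k)} + \sum_{k \in \calT^{(b)}_{\ell}} \rbr{V^{\mu^k,\nu^k}_1(s_1^k) - \sum_{h=1}^H r_h^k} + \sqrt{\iota |\calT^{(b)}_{\ell}|}.
\end{align*}
The first sum is exactly $\sum_{k \in \calT^{(b)}_{\ell}} \delta^k_1$.

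For the first sum, we use $\calG_6$ from \pref{eq:def_event6}. It gives
\[
\sum_{k \in \calT^{(b)}_{\ell}} \delta_1^k \leq c_0\Big(\eta^{(b)}_{\ell} H^3 \wtilC^{(b)}_{\ell} + H\sqrt{\iota |S||\calT^{(b)}_{\ell}|\log(K)/\eta^{(b)}_{\ell}}\Big).
\]
Here $\delta^k_1$ is computed with the optimistic estimates $V^k_1$ of the fresh instance $\Alg^{(b)}_{\ell}$, which is the same $V^k_1(s_1^k)$ appearing in $\Phi^{(b)}_{\ell}$.

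For the second sum, we note that $\calT^{(b)}_{\ell}$ is a contiguous interval of episodes, since sub-blocks are run consecutively. Event $\calG_5$ from \pref{eq:def_event5} therefore bounds its absolute value by $\sqrt{\iota |\calT^{(b)}_{\ell}|}$. Together with the explicit bonus term, the two non-$\calG_6$ pieces contribute at most $2\sqrt{\iota|\calT^{(b)}_{\ell}|}$.

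It remains to absorb $2\sqrt{\iota |\calT^{(b)}_{\ell}|}$ into the extra copy $c_0 H\sqrt{\iota |S||\calT^{(b)}_{\ell}|\log(K)/\eta^{(b)}_{\ell}}$. This holds because $c_0 \geq 2$, $H\geq 1$, $|S|\geq1$, $\eta^{(b)}_{\ell}\leq 1/H\le 1$, and $\log K \geq 1$; the last follows since $K\ge 16H^2|S|\ge 16$. Doubling the constant in front of the whole bracket then gives the claimed $2c_0(\cdots)$.

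The main subtlety is at the sub-block's final episode, where the while-loop condition is violated. $\Phi^{(b)}_{\ell}$ is still the sum over the full $\calT^{(b)}_{\ell}$ including that episode, so the argument applies verbatim. The argument also needs $\calG_6$ to hold for the random interval $\calT^{(b)}_{\ell}$. This is already handled by the union bound over all intervals in the construction of $\overline{\calG}$, so nothing beyond invoking the event is needed.
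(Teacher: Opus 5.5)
Your proposal is correct and is essentially the paper's own proof. Like the paper, you split $\Phi^{(b)}_{\ell}$ into $\sum_{k\in\calT^{(b)}_{\ell}}\delta^k_1$ (bounded via $\calG_6$) plus the reward-versus-value term on the contiguous interval $\calT^{(b)}_{\ell}$ (bounded via $\calG_5$), and then absorb the resulting $2\sqrt{\iota|\calT^{(b)}_{\ell}|}$ into a second copy of the $c_0H\sqrt{\cdot}$ term using $c_0\geq 2$. Your explicit checks that $\log K\geq 1$ and $\eta^{(b)}_{\ell}\leq 1$ spell out the final absorption step that the paper leaves implicit.
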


\begin{proof}
For any block $b$ and any sub-block $\ell$ in this block, one can show
\begin{align*}
    \Phi^{(b)}_{\ell} & =\sum_{k \in \calT^{(b)}_{\ell} } \rbr{V^k_1(s_1^k) -\sum_{h=1}^H r_h^k} +\sqrt{\iota \abr{\calT^{(b)}_{\ell}}  }\\
    &\leq \sum_{k \in \calT^{(b)}_{\ell}} \rbr{V^k_1(s_1^k) -V^{u^k,v^k}_1(s_1^k)}+ 2\sqrt{\iota |\calT^{(b)}_{\ell}|} \\
    &\leq c_0 \rbr{ \eta^{(b)}_{\ell}  H^3 \wtilC^{(b)}_{\ell}
  +H \sqrt{ \frac{\iota |S| |\calT^{(b)}_{\ell}| \log(K)}{ \eta^{(b)}_{\ell}   }  }   }+ 2\sqrt{\iota |\calT^{(b)}_{\ell}|} \\
    &\leq 2c_0\rbr{ \eta^{(b)}_{\ell}  H^3 \wtilC^{(b)}_{\ell}
  +H \sqrt{ \frac{\iota |S| |\calT^{(b)}_{\ell}| \log(K)}{ \eta^{(b)}_{\ell}   }  }   } ,
\end{align*}
where the first inequality uses $\calG_5$ of $\overline{\calG}$ and the definition of $\iota$, and the third inequality follows from $\calG_6$ of $\overline{\calG}$.
\end{proof}

\begin{lemma} \label{lem:num_blocks}
Suppose that $\overline{\calG}$ holds and $K \geq 16H^2|S|$.
We have
\begin{align*}
 B\leq   1+  \log_4^+ \rbr{  4\rbr{ \frac{H(1+\wtilC)^2}{ \iota |S|K\log(K)} }^{1/3}   },
\end{align*}
where $\log_4^+(x)=\max\{\log_4(x),0\}$. Moreover, $\eta^{(b)}_{2^{2b}+1}=\frac{2^{-2b}}{H}$ for all $b \leq B$.
\end{lemma}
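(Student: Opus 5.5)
The proof rests on one observation: a block $b$ can only end if its last sub-block $\ell=2^{2b}+1$ is terminated. That sub-block uses the threshold $D=4c_0H\sqrt{\iota|S|K\log(K)/\eta^{(b)}_{2^{2b}+1}}$, and this threshold does not depend on $|\calT^{(b)}_{\ell}|$. I will compare it with the upper bound on $\Phi^{(b)}_{\ell}$ from \pref{lem:reg_bound_eachblock}, which holds on $\overline{\calG}$. Termination means $\Phi^{(b)}_{\ell}>D$. Since $|\calT^{(b)}_{\ell}|\le K$, the square-root term of \pref{lem:reg_bound_eachblock} is at most $2c_0H\sqrt{\iota|S|K\log(K)/\eta}$, which is half of $D$. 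Termination of the last sub-block of block $b$ therefore forces
\begin{align*}
2c_0\,\eta^{(b)}_{2^{2b}+1}H^3\wtilC^{(b)}_{2^{2b}+1} > 2c_0 H\sqrt{\frac{\iota|S|K\log(K)}{\eta^{(b)}_{2^{2b}+1}}},
\end{align*}
that is, $(\eta^{(b)}_{2^{2b}+1})^{3/2}H^2\wtilC>\sqrt{\iota|S|K\log(K)}$, using $\wtilC^{(b)}_{\ell}\le\wtilC$.

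Next I treat the two cases of \pref{eq:eta_b_ell}. In the first case $\eta^{(b)}_{2^{2b}+1}=2^{-2b}/H$, and the inequality becomes $2^{-3b}H^{1/2}\wtilC>\sqrt{\iota|S|K\log K}$. This gives $4^{b}<(H\wtilC^2/(\iota|S|K\log K))^{1/3}$. Every block $b'\le B-1$ has terminated, so applying this to $b=B-1$ yields $B-1\le\log_4^+(\cdots)$; replacing $\wtilC$ by $1+\wtilC$ and adding the factor $4$ inside the logarithm only gives slack. In the second case $\eta=|S|/K$, and the inequality becomes $(|S|/K)^{3/2}H^2\wtilC>\sqrt{\iota|S|K\log K}$. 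Since $\wtilC\le C\le HK$, this requires $|S|H^3>\sqrt{\iota}\,K\sqrt{\log K}$. This contradicts $K\ge16H^2|S|$, because $\iota\ge 8H^2$. So no block using $\eta=|S|/K$ ever terminates.

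For the ``moreover'' claim I need $2^{-2b}/H\ge|S|/K$ for every $b\le B$, i.e.\ $4^{b}\le K/(H|S|)$. For $b\le B-1$ this follows from the first case, since blocks up to $B-1$ terminated, provided the clipped case has been excluded for them as just argued. For the final block $B$ I use the bound on $B$ itself: $4^{B}\le 16(H(1+\wtilC)^2/(\iota|S|K\log K))^{1/3}$. With $\wtilC\le HK$ and $\iota\ge 8H^2$, this is at most roughly $16(HK/|S|)^{1/3}$. The condition $K\ge16H^2|S|$ then bounds this by $K/(H|S|)$, and checking this is a routine power comparison.

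I expect the main obstacle to be the constants in this last step. I would organize the argument as an induction on $b$: assuming block $b-1$ terminated with the unclipped $\eta$, derive $4^{b-1}$ small, hence $4^{b}\le K/(H|S|)$, hence block $b$ is unclipped. I would also handle $B=1$ separately, since in that case the claim reduces to $4\le K/(H|S|)$.
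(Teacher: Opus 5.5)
Your proof is correct, but it is organized differently from the paper's.

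\textbf{Comparison with the paper.} The paper works with a \emph{sufficient} condition for non-termination. Writing $\Lambda_b=\eta^{(b)}_{2^{2b}+1}$, it uses $K\ge16H^2|S|$ and the monotonicity of $\Lambda_b$ to show that some block $b\le\log_4(K/(|S|H))$ satisfies $\Lambda_bH^3(1+\wtilC)\le H\sqrt{\iota|S|K\log(K)/\Lambda_b}$. It then takes the first such block $\hat b$; its last sub-block can never terminate, so $B\le\hat b$. Finally it bounds $\hat b$ from the failure of this inequality at $\hat b-1$.

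You use the contrapositive, a \emph{necessary} condition for termination, $\eta^{3/2}H^2\wtilC>\sqrt{\iota|S|K\log K}$. You spend the hypothesis $K\ge16H^2|S|$ on showing that clipped blocks never terminate, rather than on locating a crossing block. Applying the condition to block $B-1$ then gives the bound at once. Your route avoids the existence and first-crossing argument and the monotonicity of $\Lambda_b$. It also yields a slightly sharper bound, with $\wtilC$ in place of $1+\wtilC$ and without the factor $4$. The paper's route has the minor advantage of exhibiting an explicit non-terminating block.

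\textbf{Caveat on the ``moreover'' part for the last block.} Do not derive $4^B\le K/(H|S|)$ from the stated bound on $B$. Its slack (the $+1$ and the factor $4$ inside the logarithm) costs a factor $16$ in $4^B$. The inequality $16(HK/|S|)^{1/3}\le K/(H|S|)$ is equivalent to $K\ge 64H^2|S|$. Even if you keep the factor $1/(2\log K)$, this chain of estimates does not close under $K\ge16H^2|S|$ and $\iota\ge8H^2$ alone; for example, it fails for $H=|S|=1$.

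Instead, use the termination of block $b-1$ directly, as your induction suggests. For $2\le b\le B$ you get $4^{b-1}<(H\wtilC^2/(\iota|S|K\log K))^{1/3}\le(HK/(8|S|\log K))^{1/3}$. Hence $4^b<4(HK/(8|S|\log K))^{1/3}$, which is at most $K/(H|S|)$ whenever $K^2\log K\ge 8H^4|S|^2$. That condition is implied by $K\ge16H^2|S|$. For $b=1$ you only need $K\ge4H|S|$. No induction is actually required, since you have already shown that every terminated block is unclipped.
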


\begin{proof}
If $B=1$, then the claimed bound on $B$ holds trivially. Then, we consider the case $B>1$.
It suffices to show that there exists a block such that the last sub-block termination condition will not be met.
From \pref{lem:reg_bound_eachblock}, there exists an absolute constant $c_0 \geq 2$ such that for any block $b$, the last sub-block $\ell=2^{2b}+1$ enjoys the following
\[
\Phi^{(b)}_{\ell}    \leq 2c_0 \rbr{  \eta^{(b)}_{\ell} H^3 (1+\wtilC)
 +H \sqrt{ \frac{\iota |S|K\log(K)}{ \eta^{(b)}_{\ell} }  }   },
\]
where the inequality simply bounds $|\calT^{(b)}_{\ell}| \leq K$ and $\wtilC^{(b)}_{\ell} \leq 1+\wtilC$.

For shorthand, we use $\Lambda_b=\eta^{(b)}_{2^{2b}+1}$ to denote the 
 epoch incremental factor for the last sub-block of block $b$.
Since $\Lambda_b$ is non-increasing w.r.t. $b$ and $\wtilC \leq HK$, there exist at least one blocks $b \leq \log_4 \big(\frac{K}{|S|H} \big)$ such that $\Lambda_{b}  H^3(1+\wtilC) \leq H \sqrt{ \frac{\iota |S|K\log(K)}{ \Lambda_{b}  }  } $. 
Then, the existence can be verified by using the assumption $K \geq 16H^2 |S|$ to show that for $b=\lfloor \log_4 \big(\frac{K}{|S|H} \big) \rfloor$, we have
\[
\Lambda_{b}  H^3(1+\wtilC) =  \frac{H^2(1+\wtilC)}{2^{2b}}  \leq  \frac{2KH^3}{2^{2b}} \leq  8|S|H^4 \leq \frac{1}{2} KH^2 \leq H \sqrt{ \frac{\iota |S|K\log(K)}{ \Lambda_{b}  }  },
\]
where the first equality follows from the fact that $\Lambda_b=\frac{2^{-2b}}{H}$ since $b < \log_4(K/(|S|H))$ implies $2^{-2b}/H \geq |S|/K$, the first inequality bounds $1+\wtilC \leq 2KH$, the second inequality bounds $b=\lfloor \log_4 \big(\frac{K}{|S|H} \big) \rfloor \geq \log_4 \big(\frac{K}{|S|H} \big)-1$, the third inequality uses the assumption $K \geq 16H^2|S| $, and the last inequality follows from facts that $\iota \geq H^2$ and $\Lambda_b \leq H^{-1}4^{-\log_4\big(\frac{K}{|S|H} \big)+1}=4|S|/K$.

Furthermore, $B>1$ also implies that there exist at least one blocks $b \leq \log_4 \big(\frac{K}{|S|H} \big)$ such that $\Lambda_{b} H^3(1+\wtilC) > H \sqrt{ \frac{\iota |S|K\log(K)}{   \Lambda_{b}   }  }$.
Thus, there should exist a block $\hat{b} \leq \log_4 \big(\frac{K}{|S|H} \big)$ such that
\begin{equation} \label{eq:two_sided_ineq}
    \Lambda_{\hat{b}}  H^3(1+\wtilC) \leq H \sqrt{ \frac{\iota |S|K\log(K)}{ \Lambda_{\hat{b}}  }  } \quad \text{and}
\quad  \Lambda_{\hat{b}-1} H^3(1+\wtilC) > H \sqrt{ \frac{\iota |S|K\log(K)}{   \Lambda_{\hat{b}-1}   }  }.
\end{equation}

In such a block $\hat{b}$, the last sub-block $\ell=2^{2\hat{b}}+1$ satisfies
\[
\Phi^{(\hat{b})}_{\ell} \leq 2c_0 \rbr{  \Lambda_{\hat{b}} H^3 \wtilC
+ H\sqrt{ \frac{\iota |S|K\log(K)}{  \Lambda_{\hat{b}}   }  } }  \leq  4c_0 H \sqrt{ \frac{\iota |S|K\log(K)}{  \Lambda_{\hat{b}}  }  }  ,
\]
where the first inequality uses \pref{lem:reg_bound_eachblock}, and the second inequality uses $ \Lambda_{\hat{b}}  H^3(1+\wtilC) \leq H \sqrt{ \frac{\iota |S|K\log(K)}{ \Lambda_{\hat{b}}  }  } $  given in \pref{eq:two_sided_ineq}.
Thus, the block termination condition will never be met in block $\hat{b}$.
Moreover, $\hat{b} \leq \log_4 \big(\frac{K}{|S|H} \big)$ implies that for all $b \leq \hat{b}$, $\Lambda_b= \frac{2^{-2b}}{H} \geq  \frac{|S|}{K}$.
Notice that $\Lambda_{\hat{b}-1} H^3(1+\wtilC) > H \sqrt{ \frac{\iota |S|K\log(K)}{ \Lambda_{\hat{b}-1}   }  } $ gives
\begin{align*}
    \frac{2^{-2(\hat{b}-1)}}{H} =\Lambda_{\hat{b}-1} \geq \rbr{  \frac{\iota |S|K\log(K)}{(1+\wtilC)^2H^4}  }^{1/3}  \Longrightarrow  \hat{b} \leq \log_4 \rbr{  4\rbr{ \frac{H(1+\wtilC)^2}{ \iota |S|K\log(K)} }^{1/3}   }. 
\end{align*}

Combining two cases of $B=1$ and $B>1$, we obtain the claimed bound on $B$.

Finally, as $\Lambda_b=  \frac{2^{-2b}}{H}$ for all $b \leq \log_4( \frac{K}{|S|H})$ and we know the upper bound of $B$, it suffices to show that $\rbr{ \frac{4^3H(1+\wtilC)^2}{ \iota |S|K\log(K)} }^{1/3} \leq K/(|S|H)$ to conclude the proof. One can easily verify that
\begin{align*}
 \frac{4^3 H(1+\wtilC)^2}{ \iota |S|K\log(K)}    \leq \frac{4^4 H^3 K}{\iota|S|} \leq \frac{4^4 H K}{|S|} = \frac{4^4 H K^3}{|S|K^2}  \leq  \frac{ K^3}{|S|^3H^3} ,
\end{align*}
where the first inequality bounds $(1+\wtilC)^2 \leq (2HK)^2$ and $\log(K) \geq 1$, the second inequality bounds $\iota \geq H^2$, and the last inequality uses the assumption that $K \geq 16H^2|S|$.

The proof is thus complete.
\end{proof}

\begin{lemma} \label{lem:fixed_opponent_policy}
Suppose that $\overline{\calG}$ holds and $K \geq 16H^2|S|$.
For any block $b$ and any sub-block $\ell \leq 2^{2b}$ of block $b$, if $v^k=v^{k+1}$ for all $k \in \calT^{(b)}_{\ell}$, then
\begin{align*}
\Phi^{(b)}_{\ell} \leq 2c_0 \sqrt{H^3 \iota |S| \abr{\calT^{(b)}_{\ell}} \log(K)} ,
\end{align*}
where $c_0  \geq 2$ is an absolute constant given in \pref{eq:def_event6}.
\end{lemma}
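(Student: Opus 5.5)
The plan is to specialize \pref{lem:reg_bound_eachblock} to this sub-block and show that the non-stationarity term vanishes. For any block $b$ and sub-block $\ell \leq 2^{2b}$, \pref{eq:eta_b_ell} gives $\eta^{(b)}_{\ell}=1/H$. Plugging this into \pref{lem:reg_bound_eachblock} under $\overline{\calG}$ yields
\begin{align*}
\Phi^{(b)}_{\ell} \leq 2c_0 \rbr{ H^2\wtilC^{(b)}_{\ell} + H\sqrt{\iota |S| |\calT^{(b)}_{\ell}| H \log(K)} } = 2c_0 \rbr{ H^2\wtilC^{(b)}_{\ell} + \sqrt{H^3\iota |S| |\calT^{(b)}_{\ell}| \log(K)} }.
\end{align*}
So it suffices to show $\wtilC^{(b)}_{\ell}=0$ whenever the opponent's policy is constant on $\calT^{(b)}_{\ell}$.

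To prove this, choose the interval $I=\calT^{(b)}_{\ell}$. Since sub-blocks consist of consecutive episodes, $I$ is contiguous, and trivially $\calT^{(b)}_{\ell}\subseteq I$, so $I\in\calS(\calT^{(b)}_{\ell})$. Then $\wtilC^{(b)}_{\ell}\leq \wtilC^{(b)}_{\ell}(I)$ because $\wtilC^{(b)}_{\ell}$ is a minimum over $\calS(\calT^{(b)}_{\ell})$.

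By hypothesis, $\nu^k=\nu$ for all $k\in I$. The only possible care point is the last episode of the sub-block, where the hypothesis $\nu^k=\nu^{k+1}$ reaches slightly beyond $I$; this is harmless, since we only need constancy inside $I$. Consequently, for every $(h,s)$ the set $\Conv\rbr{\{\nu^k_h(s)\}_{k\in I}}$ is the singleton $\{\nu_h(s)\}$. The argmin defining $\nu^*_{h,I}(s)$ must therefore be $\nu_h(s)$, no matter what $V^*_{h+1,I}$ is. Hence every term $\TV\rbr{\nu^k_h(s_h^k),\nu^*_{h,I}(s_h^k)}$ with $k\in I$ is zero, which gives $\wtilC^{(b)}_{\ell}(I)=0$ and therefore $\wtilC^{(b)}_{\ell}=0$.

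Substituting this into the display above gives $\Phi^{(b)}_{\ell}\leq 2c_0\sqrt{H^3\iota|S||\calT^{(b)}_{\ell}|\log(K)}$, which is the claim. I do not expect any real obstacle. The only points that need checking are that $\eta^{(b)}_{\ell}=1/H$ lies in the range $[|S|/K,1/H]$ required in $\calG_6$, which holds because $K\geq 16H^2|S|$, and that the local interval-based measure $\wtilC$, rather than the global $C$, is the quantity appearing in $\calG_6$. This is exactly why $\wtilC$ was defined with the minimum over superintervals: the global $C$ could be large here even though the opponent is fixed within this sub-block.
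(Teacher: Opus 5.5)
Your proposal is correct and matches the paper's proof: set $\eta^{(b)}_{\ell}=1/H$ in \pref{lem:reg_bound_eachblock} and observe that $\wtilC^{(b)}_{\ell}=0$ when the opponent's policy is fixed on the sub-block. Your explicit choice $I=\calT^{(b)}_{\ell}\in\calS(\calT^{(b)}_{\ell})$, which makes the convex hull a singleton, fills in the step that the paper asserts without justification.
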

\begin{proof}
Recall that \pref{alg:block_alg} runs a new instance of \pref{alg:epoch_V_ol} in each sub-block. If the opponent uses a fixed policy in a sub-block, then, $\wtilC^{(b)}_{\ell}=0$. Hence, \pref{lem:reg_bound_eachblock} gives that $\Phi^{(b)}_{\ell} \leq 2c_0  H\sqrt{ \frac{\iota |S||\calT^{(b)}_{\ell}|\log(K)}{\eta^{(b)}_{\ell}}  } =2c_0 \sqrt{H^3 \iota |S| \abr{\calT^{(b)}_{\ell}} \log(K)}$ where the equality holds since for any block $b$, $\eta^{(b)}_{\ell}=H^{-1}$ for all sub-blocks $\ell \leq 2^{2b}$. 
\end{proof}

\begin{lemma} \label{lem:regret_bound_phase_one}
Suppose that $\overline{\calG}$ holds and $K \geq 16H^2|S|$.
For each block $b$, we have
\[
\sum_{\ell=1}^{Z_b} \Phi_{\ell}^{(b)} \leq \order \big( \sqrt{ \min\{2^{2b},L^{(b)}\} H^3 \iota |S| K \log(K) } \big).
\]
\end{lemma}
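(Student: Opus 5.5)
We bound each sub-block's $\Phi$ by its termination threshold, then count how many sub-blocks can occur. The key fact is that every sub-block except possibly the last one in the block was \emph{terminated}.

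\textbf{Per-sub-block bound.} The while-loop in \pref{alg:block_alg} checks $\Phi^{(b)}_\ell\le D$ before each episode. So just before the final episode of sub-block $\ell$, $\Phi^{(b)}_\ell$ was at most the threshold $D$ computed from the previous size $|\calT^{(b)}_\ell|-1$. One additional episode raises $\Phi$ by at most $H+\sqrt{\iota}$. Hence
\[
\Phi^{(b)}_\ell\le 3c_0H\sqrt{\iota|S||\calT^{(b)}_\ell|\log(K)/\eta^{(b)}_\ell}+H+\sqrt{\iota}.
\]
For $\ell\le 2^{2b}$ we have $\eta^{(b)}_\ell=1/H$, so this is $\order(\sqrt{H^3\iota|S||\calT^{(b)}_\ell|\log K})$.

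For the last sub-block $\ell=2^{2b}+1$, the threshold is $4c_0H\sqrt{\iota|S|K\log K/\eta}$ with $\eta\approx 2^{-2b}/H$, which gives $\order(\sqrt{2^{2b}H^3\iota|S|K\log K})$. This is already consistent with the $2^{2b}$ branch of the minimum. In the $L^{(b)}$ branch, the last sub-block is reached only if all $2^{2b}$ earlier sub-blocks terminated. By the counting argument below, that forces $L^{(b)}\gtrsim 2^{2b}$, so the term is also $\order(\sqrt{L^{(b)}H^3\iota|S|K\log K})$.

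If needed, we cap $\eta$ from below via $|S|/K$, using $\eta\ge 2^{-2b}/H$. When $\eta=|S|/K$, the threshold $H\sqrt{\iota K^2\log K}$ could be large. However, \pref{lem:num_blocks} guarantees $\eta^{(b)}_{2^{2b}+1}=2^{-2b}/H$ for every $b\le B$, so this case does not arise.

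\textbf{Counting sub-blocks.} Each $\ell<Z_b$ with $\ell\le 2^{2b}$ ended because $\Phi^{(b)}_\ell$ exceeded its threshold $3c_0\sqrt{H^3\iota|S||\calT^{(b)}_\ell|\log K}$. By \pref{lem:fixed_opponent_policy}, if the opponent had not switched policy inside $\calT^{(b)}_\ell$, we would have $\Phi^{(b)}_\ell\le 2c_0\sqrt{\cdots}<3c_0\sqrt{\cdots}$. So each terminated sub-block contains at least one switch.

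This step needs care because the lemma conditions on $\nu^k=\nu^{k+1}$ within the sub-block, including its last episode. The clean convention is: the terminated sub-block, taken up to its final episode, must have used more than one policy. Each such switch lies strictly inside a distinct sub-block, so the number of terminated sub-blocks is at most $L^{(b)}-1$. Together with $Z_b\le 2^{2b}+1$, this gives $Z_b\le \min\{2^{2b},L^{(b)}\}+1$.

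\textbf{Summation.} By Cauchy--Schwarz over sub-blocks,
\[
\sum_{\ell\le\min(Z_b,2^{2b})}\sqrt{|\calT^{(b)}_\ell|}\le\sqrt{Z_b\sum_\ell|\calT^{(b)}_\ell|}\le\sqrt{(\min\{2^{2b},L^{(b)}\}+1)K}.
\]
The additive $H+\sqrt\iota$ terms contribute $Z_b(H+\sqrt\iota)$. Since $Z_b\le\min\{2^{2b},L^{(b)}\}+1$ and $Z_b\le K$, this is dominated by $\sqrt{Z_b K}\cdot\sqrt{H^3\iota}$. Adding the last-sub-block term finishes the proof.

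\textbf{Main obstacle.} The delicate step is the switch-counting: matching the hypothesis of \pref{lem:fixed_opponent_policy} to the exact time the termination is triggered. This includes the boundary episode at which the switch occurs, and the overshoot caused by the final episode.
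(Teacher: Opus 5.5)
Your proposal is correct and follows essentially the same route as the paper. The paper likewise bounds each sub-block by its termination threshold with $\eta=1/H$ and sums via Cauchy--Schwarz, handles the last sub-block through $\eta^{(b)}_{2^{2b}+1}=2^{-2b}/H$, and gets $Z_b\le L^{(b)}$ from \pref{lem:fixed_opponent_policy}. You are more careful than the paper about the one-episode overshoot $H+\sqrt{\iota}$ at termination, which the paper leaves implicit. Your worry about the cap $|S|/K$ is unnecessary: $\eta^{(b)}_{2^{2b}+1}\ge 2^{-2b}/H$ always holds, so $1/\eta\le 2^{2b}H$ without needing \pref{lem:num_blocks}.
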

\begin{proof}
Consider any block $b$. 
We then consider the following two cases.

\textbf{Case 1: $Z_b  \leq 2^{2b}$.}
In this case, when a sub-block $\ell$ in block $b$ ends, we have $\Phi_{\ell}^{(b)} \leq \order \big(H \sqrt{ \frac{  \iota |S| |\calT_{\ell}| \log(K)}{   \eta^{(b)}_{\ell} } } \big)$.
Since $Z_b  \leq 2^{2b}$ implies that $\eta^{(b)}_{\ell}=1/H$ for all $\ell \leq Z_b  \leq 2^{2b}$, we have
\begin{align*}
    \sum_{\ell=1}^{ Z_b    } \Phi_{\ell}^{(b)} & \leq \order \rbr{  \sum_{\ell=1}^{ Z_b    }  \sqrt{ H^3 \iota |S| |\calT_{\ell}|   \log(K) }  }  \leq \order \rbr{   \sqrt{Z_b H^3 \iota |S| K \log(K) }   } ,
\end{align*}
where the last inequality uses Cauchy–Schwarz inequality and $\sum_{\ell=1}^{ Z_b} |\calT_{\ell}|\leq K$.

\textbf{Case 2: $Z_b  = 2^{2b}+1$.} By the termination condition of sub-block $Z_b=2^{2b}+1$, we have
\begin{align*}
    \Phi_{Z_b}^{(b)} \leq \order \rbr{   H \sqrt{   \frac{ \iota |S|K \log(K)  }{  \eta^{(b)}_{Z_b}  }    }   } =  \order \rbr{  \sqrt{ 2^{2b} H^3    \iota |S|K \log(K)     }   }  \leq  \order \rbr{  \sqrt{ Z_b H^3    \iota |S|K \log(K)     }   } .
\end{align*}
where the equality uses \pref{lem:num_blocks} that $\eta^{(b)}_{Z_b} =2^{-2b}/H$ and the last inequality uses $2^{2b} \leq Z_b$.

Then, one can show that
\begin{align*}
\sum_{\ell=1}^{ Z_b  } \Phi_{\ell}^{(b)} &=\sum_{\ell=1}^{ Z_b -1   } \Phi_{\ell}^{(b)} +\Phi_{Z_b}^{(b)}  \leq  \order \rbr{  \sqrt{ Z_b H^3   \iota |S|K \log(K)     }   },
\end{align*}
where the first summation can be bounded via the same argument used in Case 1.

\textbf{Putting together.} In both cases, we have $    \sum_{\ell=1}^{ Z_b    } \Phi_{\ell}^{(b)}  \leq \order \rbr{   \sqrt{Z_b H^3 \iota |S| K \log(K) }   } $.
If the opponent always uses a fixed policy in a sub-block, then \pref{lem:fixed_opponent_policy} implies that this sub-block never ends. 
In other words, if a sub-block ends, then opponent switches the policy at least once.
Thus, $Z_b \leq L^{(b)}$.
Combining this and the fact that $Z_b \leq \order(2^{2b})$, we obtain the desired bound.
\end{proof}

\begin{proof}[Proof of \pref{thm:Bound_final_oblivious}.]
Assume that $K \geq 16H^2|S|$.
For every block $b$, we have $ \ENR^{(b)}_{\ell}([K]) \leq \Phi_{\ell}^{(b)}$ since 
\begin{align*} 
     \ENR^{(b)}_{\ell}([K]) &\leq \sum_{k \in \calT^{(b)}_{\ell}} \rbr{V^k_1(s_1^k) -V^{\mu^k,\nu^k}_1(s_1^k)} \\
     &\leq \sum_{k \in \calT^{(b)}_{\ell} } \rbr{V^k_1(s_1^k) -\sum_{h=1}^H r_h^k} + \sqrt{\iota \abr{\calT^{(b)}_{\ell}}  } =\Phi^{(b)}_{\ell} , \numberthis{} \label{eq:R_b_ell_smaller_Phi_b_ell}
\end{align*}
where the first inequality uses the optimism of epoch-V-ol, and the second inequality uses $\calG_5$ of $\overline{\calG}$.

Recall that $Z_{b}$ is the total number of sub-blocks in block $b$, and $B$ is the total number of blocks.
The regret can be written as
\begin{align*}
&\ENR_K \\
&= \sum_{b=1}^{B} \sum_{\ell=1}^{Z_b} \ENR^{(b)}_{\ell}([K])\\
 &\leq \sum_{b=1}^{B} \sum_{\ell=1}^{Z_b} \Phi^{(b)}_{\ell} \\
&\leq \order \rbr{ \sum_{b=1}^{B} \sqrt{ \min \cbr{ 2^{2b},L^{(b)} } H^3 \iota |S| K \log(K) }}   \\
& \leq \order \rbr{ \min \cbr{\sum_{b=1}^{B} 2^{b}, \sum_{b=1}^{B} \sqrt{L^{(b)}} }\sqrt{H^3\iota |S|K\log(K)  } } \\
& \leq \order \rbr{ \min \cbr{2^{B}, \sum_{b=1}^{B} \sqrt{L^{(b)}} }\sqrt{H^3\iota |S|K\log(K)  } } \\
& \leq \order \rbr{ \min \cbr{2^{B}, \sqrt{(L+\log(K) \log(K) } }\sqrt{H^3\iota |S|K\log(K)  } } \\
& \leq \order \rbr{   \min \cbr{\sqrt{H^3\iota |S|K\log(K)  } + \rbr{ H^5 \iota |S| CK \log (K) }^{\frac{1}{3}}, \sqrt{   (L+\log(K)H^3 \iota |S|K \log^2 (K) } }    },
\end{align*}
where the first inequality uses \pref{eq:R_b_ell_smaller_Phi_b_ell} to bound $\sum_{b=1}^{B} \sum_{\ell=1}^{Z_b}  \ENR^{(b)}_{\ell}([K])  \leq \sum_{b=1}^{B} \sum_{\ell=1}^{Z_b} \Phi^{(b)}_{\ell}$, the second inequality follows from \pref{lem:regret_bound_phase_one}, the fifth inequality uses Cauchy–Schwarz inequality and the fact that $B \leq \order(\log(K))$ and $\sum_{b=1}^B L^{(b)} \leq \order(B+L) \leq \order(\log(K)+L)$, and the last inequality uses \pref{lem:num_blocks} to bound $B$ together with the fact that $\wtilC \leq C$.

Finally, if $K \leq 16H^2|S|$, one can bound regret by $\order(H^2|S|)$.
The claimed bound thus follows.
\end{proof}

\section{Regret Bound of Adaptive Epoch V-learning for Adaptive Opponent}
\label{app:Bound_final_adaptive}

\begin{theorem} \label{thm:Bound_final_adaptive}
For adaptive adversary, running \pref{alg:block_alg} by choosing $\eta^{(b)}_{\ell}$ as: $\eta^{(b)}_{\ell}=1/(H\sqrt{|S|})$ for all $\ell \leq 2^{2b}$ and $\eta^{(b)}_{\ell}=  \max \big \{2^{-2b}/(H \sqrt{|S|}) ,|S|/K\big \}$ otherwise, with probability at least $1-\delta$
\begin{align*}
\ENR_K \leq \otil \rbr{ H^2 |S|^{3/2}+     \min \cbr{|S|^{3/4} \sqrt{H^3\iota  K  }  +  \rbr{ \iota H^5   |S|K  C }^{\frac{1}{3}}, |S|^{3/4} \sqrt{LH^3 \iota  K }}    } .
\end{align*}

Further, running \pref{alg:block_alg} with the choice of $\eta^{(b)}_{\ell}$ specified in \pref{eq:eta_b_ell} guarantees, with probability at least $1-\delta$
\begin{align*}
\NR_K \leq \otil \rbr{ H^2 |S| +     \min \cbr{  \sqrt{H^3\iota |S| K  }  +  \rbr{ \iota H^5   |S|K   C }^{\frac{1}{3}},  \sqrt{LH^3 \iota |S| K }}    } .
\end{align*}
\end{theorem}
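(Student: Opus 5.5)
The proof will replay the argument for \pref{thm:Bound_final_oblivious}, with two changes. For the $\ENR_K$ part, the base-algorithm guarantee \pref{thm:reg_bound_evol} is replaced by its adaptive counterpart \pref{thm:reg_bound_evol_adaptive}. For the $\NR_K$ part, it is replaced by \pref{thm:nr_reg_epoch_vol}.

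\textbf{First claim ($\ENR_K$).} I would redefine the nice event $\overline{\calG}=\calG_5\cap\calG_6$. $\calG_5$ is unchanged: its Azuma--Hoeffding argument for $\sum_{h} r_h^k - V_1^{\mu^k,\nu^k}(s_1^k)$ is a martingale statement and is valid for adaptive opponents. In $\calG_6$ I would use \pref{thm:reg_bound_evol_adaptive}, instantiated with the local benchmark $V^*_{h,I}$ for each $I\in\calS(\calT^{(b)}_\ell)$, with confidence $\delta/(2K^6)$ and a union bound over $b,\ell,\calT^{(b)}_\ell,I$ exactly as before. This requires $\eta^{(b)}_\ell\in[|S|/K,1/(H\sqrt{|S|})]$, which the new schedule guarantees, and $K\ge H|S|^{3/2}$ (strengthened to $K\gtrsim H^2|S|^{3/2}$ for the block-counting lemma). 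Smaller $K$ is handled trivially and gives the additive $H^2|S|^{3/2}$ term.

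The one subtle point is that the discretization event $\widetilde{\calG}^3$ must also hold for every $V^*_{h+1,I}$. This is automatic, since that event already quantifies over the whole finite class $\calV_{h+1,\Delta}$, independent of $I$.

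With this in place, I would repeat \pref{lem:reg_bound_eachblock}, \pref{lem:num_blocks}, \pref{lem:fixed_opponent_policy} and \pref{lem:regret_bound_phase_one}, replacing $1/H$ by $1/(H\sqrt{|S|})$ and $2^{-2b}/H$ by $2^{-2b}/(H\sqrt{|S|})$. Each $\sqrt{1/\eta}$ then gains a factor $|S|^{1/4}$. Consequently:
\begin{itemize}
\item each test sub-block costs $O\big(\sqrt{H^3\iota|S|^{3/2}|\calT|\log K}\big)$;
\item the thresholds $D$ are rescaled accordingly, so the test still never fires while the opponent is stationary.
\end{itemize}
Summing over blocks gives the $L$-branch $|S|^{3/4}\sqrt{LH^3\iota K}$, since $\sqrt{|S|^{3/2}}=|S|^{3/4}$.

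For the $C$-branch I would redo the block count in \pref{lem:num_blocks}, balancing $\eta H^3C$ against $H\sqrt{\iota|S|K/\eta}$ with $\eta=4^{-b}/(H\sqrt{|S|})$. The last block $\hat b$ then satisfies $2^{\hat b}\lesssim|S|^{-1/4}(HC^2/(\iota|S|K))^{1/6}\cdot 2$, possibly floored at $1$. Plugging into $\sum_b 2^b\sqrt{H^3\iota|S|^{3/2}K}$ gives
\[
|S|^{3/4}\sqrt{H^3\iota K}+(\iota H^5|S|KC)^{1/3}.
\]
The $|S|$ powers cancel in the second term as in the oblivious case; this is the exponent bookkeeping I would check carefully. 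Taking the minimum of the two branches as before finishes the first claim.

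\textbf{Second claim ($\NR_K$).} Here the benchmark is the true Nash value, which does not depend on the opponent's future policies. I would therefore rerun the proof of \pref{thm:Bound_final_oblivious} verbatim with the original schedule \pref{eq:eta_b_ell}, using \pref{thm:nr_reg_epoch_vol} in place of \pref{thm:reg_bound_evol} to build $\calG_6$. No union over $I$ is even needed, because the Nash value is the same on any interval. The non-stationarity measure becomes the analogue of $\wtilC$ with the true minimax policy; it is bounded by the corresponding $C$, as the statement intends.

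\textbf{Main obstacle.} I expect the hardest step to be justifying that \pref{thm:reg_bound_evol_adaptive} and \pref{thm:nr_reg_epoch_vol} hold with localized benchmarks, simultaneously for data-dependent sub-block intervals, under an adaptive opponent. Their start times are stopping times, so I would instead union-bound over all deterministic start episodes. The base instance's randomness is fresh, and the concentration events are martingale-based, so this union bound is valid at the cost of a $\mathrm{poly}(K)$ factor inside $\iota$.
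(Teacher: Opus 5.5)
Your treatment of the $\ENR_K$ claim follows the paper's argument. You rebuild $\calG_6$ from \pref{thm:reg_bound_evol_adaptive} with the local benchmarks $V^*_{h,I}$; you are right that $\widetilde{\calG}^3$ already covers every $V^*_{h+1,I}$. You then replay \pref{lem:num_blocks}, \pref{lem:fixed_opponent_policy} and \pref{lem:regret_bound_phase_one} with $\eta$ scaled by $1/\sqrt{|S|}$. Your exponent bookkeeping for the $C$-branch is correct.

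\textbf{Gap in the $\NR_K$ part: the $\sqrt{LH^3\iota|S|K}$ branch.} You avoid discretization by taking the true Nash value as comparator and drop the union over $I$. The variance term in your version of $\calG_6$ then becomes the total variation to the unrestricted Nash min-policy $\nu^{\mathrm{N}}_h(s)$.
\begin{itemize}
\item Consider an opponent that is stationary on a test sub-block but plays some $\nu\neq\nu^{\mathrm{N}}$. For it this term can be of order $H|\calT^{(b)}_\ell|$.
\item So the step of \pref{lem:fixed_opponent_policy}, ``$\wtilC^{(b)}_\ell=0$, hence $\Phi^{(b)}_\ell\le 2c_0\sqrt{H^3\iota|S||\calT^{(b)}_\ell|\log K}\le D$'', no longer follows. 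The certified bound is linear in $|\calT^{(b)}_\ell|$ and exceeds $D$.
\item You therefore lose the implication that a test sub-block terminates only after a switch, i.e.\ $Z_b\le L^{(b)}$. A verbatim rerun does not produce the $L$-branch.
\end{itemize}
In the paper, the zero-variance property comes precisely from the local empirical benchmark with $I=\calT^{(b)}_\ell$. Under an adaptive opponent, that benchmark is what forces the discretization and the $|S|^{3/4}$ rates; \pref{app:reg_bound_evol_NR} only asserts that the same analysis applies. Relatedly, your variance measure is not the $C$ of \pref{eq:def_C_epoch_Vol} and is not bounded by it: a fixed non-Nash opponent has $C=0$. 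Your $C$-branch is thus a bound in a different quantity, and you should say so explicitly.

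\textbf{A possible repair of the $L$-branch} (at the $\sqrt{|S|}$ rate with $\eta=1/H$):
\begin{itemize}
\item For a test sub-block starting at episode $k_0$, use as comparator the optimal value function $W$ of the MDP obtained by fixing the opponent at $\nu^{k_0}$.
\item $W$ is determined before the sub-block begins. The $\calG^3$-type concentration for $W_{h+1}$ then follows from Azuma--Hoeffding with an additional union bound over $k_0\in[K]$, with no discretization.
\item On the event that no switch has occurred up to the current episode, the induction of \pref{lem:optimism} gives $L_\tau(h,s)\ge W_h(s)$.
\item On the same event, the TV term in \pref{lem:L_minus_Vstar} vanishes, because $W_h(s)=\max_{\mu}\mbD_{\mu,\nu^{k_0}_h}[r_h+P_hW_{h+1}](s)$.
\item Rerunning the proof of \pref{thm:reg_bound_evol} with $W$ in place of $V^*$ then gives the bound of \pref{lem:fixed_opponent_policy} on such stationary stretches, enlarging $c_0$ if needed. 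This restores $\Phi^{(b)}_\ell\le D$ there, and hence $Z_b\le L^{(b)}$.
\item The per-sub-block $\NR$ bound $\NR^{(b)}_\ell\le\Phi^{(b)}_\ell$ still follows from optimism with respect to the Nash value, which holds for adaptive opponents.
\end{itemize}
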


\subsection{Supporting Lemmas}

\textbf{Nice event $\overline{\calG}$.} We use the nice event $\overline{\calG}$ defined in \pref{def:meta_nice_event_calG} for the following analysis. For $\calG_6$ given in \pref{eq:def_event6}, one can again show $\P(\calG_6) \geq 1-\delta/2$ by using \pref{thm:Bound_final_adaptive}.

\begin{lemma} \label{lem:num_blocks_adaptive}
Suppose that $\overline{\calG}$ holds and $K \geq 16H^2|S|^{3/2}$.
We have
\begin{align*}
 B\leq   1+  \log_4^+ \rbr{  4\rbr{ \frac{H(1+\wtilC)^2}{ \iota |S|^{5/2}K\log(K)} }^{1/3}   },
\end{align*}
where $\log_4^+(x)=\max\{\log_4(x),0\}$. Moreover, $\eta^{(b)}_{2^{2b}+1}=\frac{2^{-2b}}{H\sqrt{|S|}}$ for all $b \leq B$.
\end{lemma}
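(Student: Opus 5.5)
The plan is to mirror the proof of \pref{lem:num_blocks}, with every occurrence of $\eta=2^{-2b}/H$ replaced by $\Lambda_b:=\eta^{(b)}_{2^{2b}+1}=\max\{2^{-2b}/(H\sqrt{|S|}),|S|/K\}$. The per-sub-block bound comes from \pref{thm:reg_bound_evol_adaptive}, which is valid because $\eta\le 1/(H\sqrt{|S|})$ and $K\ge 16H^2|S|^{3/2}\ge H|S|^{3/2}$. This gives, exactly as in \pref{lem:reg_bound_eachblock},
\[
\Phi^{(b)}_{\ell}\le 2c_0\Big(\Lambda_b H^3(1+\wtilC)+H\sqrt{\iota|S|K\log(K)/\Lambda_b}\Big)
\]
for the last sub-block, using $|\calT^{(b)}_{\ell}|\le K$ and $\wtilC^{(b)}_{\ell}\le 1+\wtilC$.

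If $B=1$ there is nothing to prove, so assume $B>1$. The goal is to exhibit a block $\hat b$ whose last sub-block never terminates. Termination is avoided once $\Lambda_{\hat b}H^3(1+\wtilC)\le H\sqrt{\iota|S|K\log(K)/\Lambda_{\hat b}}$. In that case $\Phi\le 4c_0H\sqrt{\iota|S|K\log K/\Lambda_{\hat b}}$, which is exactly the threshold $D$.

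To show such a $\hat b$ exists within the regime where $\Lambda_b=2^{-2b}/(H\sqrt{|S|})$, take $b=\lfloor\log_4(K/(|S|^{3/2}H))\rfloor$. Then $2^{-2b}/(H\sqrt{|S|})\ge |S|/K$, so $\Lambda_b$ is the first branch, and moreover $\Lambda_b\le 4|S|/K$. This gives the chain
\[
\Lambda_bH^3(1+\wtilC)\le 2H^4K\Lambda_b\le 8H^4|S|\le \tfrac12KH^2\le H\sqrt{\iota|S|K\log K/\Lambda_b}.
\]
The third inequality uses $K\ge 16H^2|S|$, which follows from the assumption. The last uses $\iota\ge H^2$ and $1/\Lambda_b\ge K/(4|S|)$, giving $\sqrt{\iota|S|K/\Lambda_b}\ge HK/2$.

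The main care point is choosing the cutoff $\log_4(K/(|S|^{3/2}H))$ correctly. I would double-check that it is this factor $|S|^{3/2}$, rather than $|S|$, that keeps the first branch of the max active, since $2^{-2b}\ge H|S|^{3/2}/K$ is exactly what is needed.

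Since $B>1$, block $1$ terminated, so the inequality fails at $b=1$. Take $\hat b$ to be the first block at which it holds; then it fails at $\hat b-1$:
\[
\Lambda_{\hat b-1}H^3(1+\wtilC)>H\sqrt{\iota|S|K\log K/\Lambda_{\hat b-1}}.
\]
Rearranging gives $\Lambda_{\hat b-1}^{3/2}>\sqrt{\iota|S|K\log K}/(H^2(1+\wtilC))$, i.e.
\[
\Lambda_{\hat b-1}>\Big(\frac{\iota|S|K\log K}{H^4(1+\wtilC)^2}\Big)^{1/3}.
\]
Substituting $\Lambda_{\hat b-1}=4^{-(\hat b-1)}/(H\sqrt{|S|})$ yields
\[
4^{\hat b-1}<\Big(\frac{H^4(1+\wtilC)^2}{\iota|S|K\log K\cdot H^3|S|^{3/2}}\Big)^{1/3}=\Big(\frac{H(1+\wtilC)^2}{\iota|S|^{5/2}K\log K}\Big)^{1/3},
\]
which is the stated bound $B\le\hat b\le 1+\log_4^+(\cdots)$.

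Finally, to confirm $\eta^{(b)}_{2^{2b}+1}=2^{-2b}/(H\sqrt{|S|})$ for all $b\le B$, it suffices to check that the bound on $B$ is at most $\log_4(K/(|S|^{3/2}H))$. Using $(1+\wtilC)^2\le 4H^2K^2$, $\iota\ge H^2$ and $\log K\ge1$, this reduces to showing
\[
\frac{4^4HK}{|S|^{5/2}}\le\frac{K^3}{|S|^{9/2}H^3},
\]
i.e. $K^2\ge 4^4H^4|S|^2$. That holds because $K\ge16H^2|S|^{3/2}\ge 16H^2|S|$.
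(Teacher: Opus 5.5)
Your proposal is correct and follows the paper's proof essentially step for step. It uses the same witness block $\lfloor \log_4(K/(|S|^{3/2}H)) \rfloor$ with the same inequality chain, the same choice of $\hat b$ at which the condition flips, and the same rearrangement of $\Lambda_{\hat b-1}$. Your closing check, which reduces to $K^2 \ge 4^4H^4|S|^2$, is cleaner than the paper's, whose final exponent $|S|^{27/8}$ should read $|S|^{9/2}$. That check is in fact redundant: $\hat b \le \lfloor \log_4(K/(|S|^{3/2}H)) \rfloor$ already forces the first branch of the max for every $b \le B$.
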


\begin{proof}
This proof follows a similar argument of \pref{lem:num_blocks} with minor modifications for a different schedule of $\eta^{(b)}_{2^{2b}+1}$ and a different constraint $K \geq 16 H^2|S|^{3/2}$.
If $B=1$, then the claimed bound on $B$ holds trivially. Then, we consider the case $B>1$.
It suffices to show that there exists a block such that the last sub-block termination condition will not be met.
From \pref{lem:reg_bound_eachblock}, there exists an absolute constant $c_0 \geq 2$ such that for any block $b$, the last sub-block $\ell=2^{2b}+1$ enjoys the following
\[
\Phi^{(b)}_{\ell}    \leq 2c_0 \rbr{  \eta^{(b)}_{\ell} H^3 (1+\wtilC)
 +H \sqrt{ \frac{\iota |S|K\log(K)}{ \eta^{(b)}_{\ell} }  }   }.
\]

For shorthand, we use $\Lambda_b=\eta^{(b)}_{2^{2b}+1}$ to denote the 
 epoch incremental factor for the last sub-block of block $b$.
Since $\Lambda_b$ is non-increasing w.r.t. $b$ and $\wtilC \leq HK$, there exist at least one blocks $b \leq \log_4 \big(\frac{K}{|S|^{3/2}H} \big)$ such that $\Lambda_{b}  H^3(1+\wtilC) \leq H \sqrt{ \frac{\iota |S|K\log(K)}{ \Lambda_{b}  }  } $. 
Then, the existence can be verified by using the assumption $K \geq 16H^2 |S|^{3/2}$ to show that for $b=\lfloor \log_4 \big(\frac{K}{|S|^{3/2}H} \big) \rfloor$, we have
\[
\Lambda_{b}  H^3(1+\wtilC) =  \frac{H^2(1+\wtilC)}{2^{2b} \sqrt{|S|} }  \leq  \frac{2KH^3}{2^{2b} \sqrt{|S|} } \leq  8|S| H^4 \leq \frac{1}{2} KH^2 \leq H \sqrt{ \frac{\iota |S|K\log(K)}{ \Lambda_{b}  }  },
\]
where the first equality follows from the fact that $\Lambda_b=\frac{2^{-2b}}{H\sqrt{|S|}}$ since $b < \log_4(K/(|S|^{3/2}H))$ implies $2^{-2b}/(H\sqrt{|S|}) \geq |S|/K$, the first inequality bounds $1+\wtilC \leq 2KH$, the second inequality bounds $b=\lfloor \log_4 \big(\frac{K}{|S|^{3/2}H} \big) \rfloor \geq \log_4 \big(\frac{K}{|S|^{3/2} H} \big)-1$, the third inequality uses the assumption $K \geq 16H^2|S|^{3/2} $, and the last inequality follows from facts that $\iota \geq H^2$ and $\Lambda_b \leq  \frac{1}{H\sqrt{|S|}} 4^{-\log_4\big(\frac{K}{|S|^{3/2}H} \big)+1}=4|S|/K$.

Furthermore, $B>1$ also implies that there exist at least one blocks $b \leq \log_4 \big(\frac{K}{|S|^{3/2}H} \big)$ such that $\Lambda_{b} H^3(1+\wtilC) > H \sqrt{ \frac{\iota |S|K\log(K)}{   \Lambda_{b}   }  }$.
Thus, there should exist a block $\hat{b} \leq \log_4 \big(\frac{K}{|S|^{3/2}H} \big)$ such that
\begin{equation} \label{eq:two_sided_ineq}
    \Lambda_{\hat{b}}  H^3(1+\wtilC) \leq H \sqrt{ \frac{\iota |S|K\log(K)}{ \Lambda_{\hat{b}}  }  } \quad \text{and}
\quad  \Lambda_{\hat{b}-1} H^3(1+\wtilC) > H \sqrt{ \frac{\iota |S|K\log(K)}{   \Lambda_{\hat{b}-1}   }  }.
\end{equation}

In such a block $\hat{b}$, the last sub-block $\ell=2^{2\hat{b}}+1$ satisfies
\[
\Phi^{(\hat{b})}_{\ell} \leq 2c_0 \rbr{  \Lambda_{\hat{b}} H^3 \wtilC
+ H\sqrt{ \frac{\iota |S|K\log(K)}{  \Lambda_{\hat{b}}   }  } }  \leq  4c_0 H \sqrt{ \frac{\iota |S|K\log(K)}{  \Lambda_{\hat{b}}  }  }  ,
\]
where the first inequality uses \pref{lem:reg_bound_eachblock}, and the second inequality uses $ \Lambda_{\hat{b}}  H^3(1+\wtilC) \leq H \sqrt{ \frac{\iota |S|K\log(K)}{ \Lambda_{\hat{b}}  }  } $  given in \pref{eq:two_sided_ineq}.
Thus, the block termination condition will never be met in block $\hat{b}$.
Moreover, $\hat{b} \leq \log_4 \big(\frac{K}{|S|^{3/2}H} \big)$ implies that for all $b \leq \hat{b}$, $\Lambda_b= \frac{2^{-2b}}{H\sqrt{|S|}} \geq  \frac{|S|}{K}$.
Notice that $\Lambda_{\hat{b}-1} H^3(1+\wtilC) > H \sqrt{ \frac{\iota |S|K\log(K)}{ \Lambda_{\hat{b}-1}   }  } $ gives
\begin{align*}
    \frac{2^{-2(\hat{b}-1)}}{H \sqrt{|S|} } =\Lambda_{\hat{b}-1} \geq \rbr{  \frac{\iota |S|K\log(K)}{(1+\wtilC)^2H^4}  }^{1/3}  \Longrightarrow  \hat{b} \leq \log_4 \rbr{  4\rbr{ \frac{H(1+\wtilC)^2}{ \iota |S|^{5/2}K\log(K)} }^{1/3}   }. 
\end{align*}

Combining two cases of $B=1$ and $B>1$, we obtain the claimed bound on $B$.

Finally, as $\Lambda_b=  \frac{2^{-2b}}{H \sqrt{|S|} }$ for all $b \leq \log_4( \frac{K}{|S|^{3/2}H})$ and we know the upper bound of $B$, it suffices to show that $\rbr{ \frac{4^3H(1+\wtilC)^2}{ \iota |S|^{5/2} K\log(K)} }^{1/3} \leq K/(|S|^{3/2}H)$ to conclude the proof. One can easily verify that
\begin{align*}
 \frac{4^3 H(1+\wtilC)^2}{ \iota |S|^{5/2} K\log(K)}    \leq \frac{4^4 H^3 K}{\iota|S|^{5/2}} \leq \frac{4^4 H K}{|S|^{5/2}} = \frac{4^4 H K^3}{|S|^{5/2} K^2}  \leq  \frac{ K^3}{|S|^{27/8}H^3} ,
\end{align*}
where the first inequality bounds $(1+\wtilC)^2 \leq (2HK)^2$ and $\log(K) \geq 1$, the second inequality bounds $\iota \geq H^2$, and the last inequality uses the assumption that $K \geq 16H^2|S|^{3/2}$.

The proof is thus complete.
\end{proof}

\begin{lemma}
Suppose that $\overline{\calG}$ holds and $K \geq 16H^2|S|^{3/2}$.
For any block $b$ and any sub-block $\ell \leq 2^{2b}$ of block $b$, if $\nu^k=\nu^{k+1}$ for all $k \in \calT^{(b)}_{\ell}$, then
\begin{align*}
\Phi^{(b)}_{\ell} \leq 2c_0   |S|^{3/4} \sqrt{H^3 \iota  \abr{\calT^{(b)}_{\ell}} \log(K)} ,
\end{align*}
where $c_0  \geq 2$ is an absolute constant given in \pref{eq:def_event6}.
\end{lemma}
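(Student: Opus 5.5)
This is the adaptive-opponent counterpart of \pref{lem:fixed_opponent_policy}, and I would prove it the same way, replacing \pref{thm:reg_bound_evol} with \pref{thm:reg_bound_evol_adaptive}. The argument has three steps.

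\textbf{Step 1: the per-sub-block bound.} Under $\overline{\calG}$ in the adaptive case, event $\calG_6$ is obtained from \pref{thm:reg_bound_evol_adaptive} rather than \pref{thm:reg_bound_evol}. That theorem applies to each sub-block instance because every $\eta^{(b)}_{\ell}$ lies in $[|S|/K, 1/(H\sqrt{|S|})]$, and $K \ge 16H^2|S|^{3/2} \ge H|S|^{3/2}$. Repeating the argument of \pref{lem:reg_bound_eachblock} gives
\[
\Phi^{(b)}_{\ell} \le 2c_0\rbr{\eta^{(b)}_{\ell} H^3 \wtilC^{(b)}_{\ell} + H\sqrt{\frac{\iota|S||\calT^{(b)}_{\ell}|\log(K)}{\eta^{(b)}_{\ell}}}}.
\]
Concretely, $\Phi^{(b)}_{\ell}$ exceeds $\sum_{k\in\calT^{(b)}_{\ell}}(V^k_1(s_1^k)-V^{\mu^k,\nu^k}_1(s_1^k))$ by at most $2\sqrt{\iota|\calT^{(b)}_{\ell}|}$ using $\calG_5$, and this extra term is absorbed into the constant $2c_0$.

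\textbf{Step 2: the stationarity term vanishes.} If $\nu^k=\nu^{k+1}$ for all $k\in\calT^{(b)}_{\ell}$, the opponent plays a single policy $\bar\nu$ throughout the sub-block. Take $I=\calT^{(b)}_{\ell}\in\calS(\calT^{(b)}_{\ell})$. Then $\Conv(\{\nu^k_h(s)\}_{k\in I})=\{\bar\nu_h(s)\}$, so $\nu^*_{h,I}=\bar\nu_h$ and every total-variation term is zero. Hence $\wtilC^{(b)}_{\ell}\le\wtilC^{(b)}_{\ell}(I)=0$ and the first term drops out.

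\textbf{Step 3: substitute the learning rate.} For $\ell\le 2^{2b}$ the adaptive schedule sets $\eta^{(b)}_{\ell}=1/(H\sqrt{|S|})$, so
\[
H\sqrt{\iota|S||\calT^{(b)}_{\ell}|\log(K)\,H\sqrt{|S|}}=|S|^{3/4}\sqrt{H^3\iota|\calT^{(b)}_{\ell}|\log(K)}.
\]
This gives the claimed bound.

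\textbf{Main obstacle.} The delicate point is Step 1. I need to check that the high-probability event behind $\calG_6$ still holds uniformly over all intervals $I$ with $\calT^{(b)}_{\ell}\subseteq I$ under adaptivity. The oblivious proof replaced $V^*_h$ by $V^*_{h,I}$ and applied a union bound over intervals. In the adaptive case, $V^*_{h,I}$ is future-dependent, so I would rely on the discretized event $\widetilde{\calG}^3$ from \pref{lem:nice_event_epochvol_adaptive}, which holds uniformly over every function in $\calV_{h+1,\Delta}$. That uniformity is exactly what allows any $V^*_{h+1,I}$ to be plugged in, so the union bound over $I$ and the sub-block boundaries goes through as before, at the cost of only logarithmic factors.
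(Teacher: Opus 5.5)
Your proposal is correct and follows the paper's argument: you take the per-sub-block bound of \pref{lem:reg_bound_eachblock} (from $\calG_5$ and $\calG_6$), get $\wtilC^{(b)}_{\ell}=0$ by choosing $I=\calT^{(b)}_{\ell}$, and substitute $\eta^{(b)}_{\ell}=1/(H\sqrt{|S|})$ to obtain the $|S|^{3/4}$ factor. The ``main obstacle'' you raise is why $\calG_6$ holds with high probability against an adaptive opponent; that is already part of the hypothesis that $\overline{\calG}$ holds, so this lemma itself needs no discretization argument.
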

\begin{proof}
This proof follows from a similar argument of \pref{lem:fixed_opponent_policy} with a different choice of $\eta^{(b)}_{\ell}$.
\end{proof}

\begin{lemma} \label{lem:regret_bound_phase_one_adaptive}
Suppose that $\overline{\calG}$ holds and $K \geq 16H^2|S|^{3/2}$. 
For each block $b$, we have
\[
\sum_{\ell=1}^{Z_b} \Phi_{\ell}^{(b)} \leq \order \big( |S|^{3/4} \sqrt{ \min\{2^{2b},L^{(b)}\} H^3 \iota  K \log(K) } \big).
\]
\end{lemma}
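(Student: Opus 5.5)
The plan is to mirror the proof of \pref{lem:regret_bound_phase_one}, replacing each use of the oblivious-case bounds by their adaptive counterparts. Under the adaptive schedule, $\eta^{(b)}_{\ell}=1/(H\sqrt{|S|})$ for $\ell\leq 2^{2b}$. So the common per-sub-block quantity becomes
\[
H\sqrt{\iota |S||\calT^{(b)}_{\ell}|\log(K)/\eta^{(b)}_{\ell}}=|S|^{3/4}\sqrt{H^3\iota |\calT^{(b)}_{\ell}|\log(K)}.
\]
Fix a block $b$ and split into the same two cases on $Z_b$.

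\textbf{Case 1: $Z_b\leq 2^{2b}$.} Every sub-block $\ell\leq Z_b$ is a test sub-block. Its termination rule ensures that $\Phi^{(b)}_{\ell}$ is at most the threshold $D$ from the last step before termination, plus one episode's increment. That increment is at most $H+\sqrt{\iota}$, which is absorbed. Hence $\Phi^{(b)}_{\ell}\leq \order(|S|^{3/4}\sqrt{H^3\iota|\calT^{(b)}_{\ell}|\log(K)})$. Summing over $\ell$ and applying Cauchy--Schwarz with $\sum_\ell|\calT^{(b)}_{\ell}|\leq K$ gives $\order(|S|^{3/4}\sqrt{Z_bH^3\iota K\log(K)})$.

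\textbf{Case 2: $Z_b=2^{2b}+1$.} The first $2^{2b}$ sub-blocks are handled exactly as in Case 1. For the last sub-block, the threshold is $4c_0H\sqrt{\iota|S|K\log(K)/\eta^{(b)}_{Z_b}}$. Here I invoke \pref{lem:num_blocks_adaptive}, which gives $\eta^{(b)}_{Z_b}=2^{-2b}/(H\sqrt{|S|})$ for all $b\leq B$. The threshold therefore equals $\order(|S|^{3/4}\sqrt{2^{2b}H^3\iota K\log(K)})\leq\order(|S|^{3/4}\sqrt{Z_bH^3\iota K\log(K)})$.

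\textbf{Combining the cases.} In both cases $\sum_{\ell}\Phi^{(b)}_{\ell}\leq\order(|S|^{3/4}\sqrt{Z_bH^3\iota K\log(K)})$. Also $Z_b\leq 2^{2b}+1=\order(2^{2b})$.

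The remaining step is $Z_b\leq L^{(b)}$, and I expect this to be the main point needing care. I will use the preceding lemma, the adaptive counterpart of \pref{lem:fixed_opponent_policy}. If the opponent's policy is constant within a test sub-block, then $\wtilC^{(b)}_{\ell}=0$, and by $\calG_6$ (valid here via \pref{thm:reg_bound_evol_adaptive}, since $\eta=1/(H\sqrt{|S|})$ lies in the allowed range) we get $\Phi^{(b)}_{\ell}\leq 2c_0|S|^{3/4}\sqrt{H^3\iota|\calT^{(b)}_{\ell}|\log(K)}$.

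One has to check that this is strictly below the threshold. The threshold is $D=3c_0H\sqrt{\iota|S||\calT^{(b)}_{\ell}|\log(K)/\eta^{(b)}_{\ell}}=3c_0|S|^{3/4}\sqrt{H^3\iota|\calT^{(b)}_{\ell}|\log(K)}$, so the factor $2c_0<3c_0$ gives the margin. Hence a test sub-block can terminate only if it contains a policy switch. The last sub-block, if it exists, may terminate without a switch, but it accounts for only one extra sub-block, and $L^{(b)}$ already includes the $+1$. Thus $Z_b\leq L^{(b)}$, up to an additive constant that is absorbed.

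Putting this together gives $Z_b\leq\order(\min\{2^{2b},L^{(b)}\})$, and substituting into the combined bound yields the claim.
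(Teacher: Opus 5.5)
Your proposal is correct and follows essentially the same route as the paper. The paper just reruns the proof of \pref{lem:regret_bound_phase_one} with $\eta^{(b)}_{\ell}=1/(H\sqrt{|S|})$ for test sub-blocks and $\eta^{(b)}_{Z_b}=2^{-2b}/(H\sqrt{|S|})$ from \pref{lem:num_blocks_adaptive}, bounds each $\Phi^{(b)}_{\ell}$ by its termination threshold, applies Cauchy--Schwarz, and uses the fixed-opponent lemma to get $Z_b\le L^{(b)}$. Your explicit treatment of the one-episode overshoot (at most $H+\sqrt{\iota}$) and of the last non-test sub-block in the count $Z_b\le L^{(b)}$ is slightly more careful than the paper's sketch, without changing the argument.
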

\begin{proof}
This proof follows from a similar argument of \pref{lem:regret_bound_phase_one} with a different choice of $\eta^{(b)}_{\ell}$.
\end{proof}

\subsection{Proof of \pref{thm:Bound_final_adaptive}}

Assume that $K \geq 16H^2|S|^{3/2}$.
Recall that $Z_{b}$ is the total number of sub-blocks in block $b$.
The regret can be written as
\begin{align*}
&\ENR_K \\
&= \sum_{b=1}^{B} \sum_{\ell=1}^{Z_b} R^{(b)}_{\ell}([K])\\
 &\leq \sum_{b=1}^{B} \sum_{\ell=1}^{Z_b} \Phi^{(b)}_{\ell} \\
&\leq \order \rbr{ \sum_{b=1}^{B} |S|^{3/4} \sqrt{ \min \cbr{ 2^{2b},L^{(b)} } H^3 \iota  K \log(K) }}   \\
& \leq \order \rbr{ \min \cbr{\sum_{b=1}^{B} 2^{b}, \sum_{b=1}^{B} \sqrt{L^{(b)}} }  |S|^{3/4} \sqrt{H^3\iota  K\log(K)  } } \\
& \leq \order \rbr{ \min \cbr{2^{B}, \sum_{b=1}^{B} \sqrt{L^{(b)}} }|S|^{3/4} \sqrt{H^3\iota K\log(K)  } } \\
& \leq \order \rbr{ \min \cbr{2^{B}, \sqrt{(L+ \log(K) ) \log(K) } }|S|^{3/4} \sqrt{H^3\iota  K\log(K)  } } \\
& \leq \order \rbr{   \min \cbr{|S|^{\frac{3}{4}  } \sqrt{H^3\iota  K\log(K)  } + \rbr{ H^5 \iota |S| CK \log (K) }^{\frac{1}{3}}, |S|^{\frac{3}{4}} \sqrt{   (L+\log(K))H^3 \iota K \log^2 (K) } }    },
\end{align*}
where the first inequality uses \pref{eq:R_b_ell_smaller_Phi_b_ell} to bound $\sum_{b=1}^{B} \sum_{\ell=1}^{Z_b}  \ENR^{(b)}_{\ell}([K])  \leq \sum_{b=1}^{B} \sum_{\ell=1}^{Z_b} \Phi^{(b)}_{\ell}$, the second inequality follows from \pref{lem:regret_bound_phase_one_adaptive}, the fifth inequality uses Cauchy–Schwarz inequality and the fact that $B \leq \order(\log(K))$ and $\sum_{b=1}^B L^{(b)} \leq \order(B+L) \leq \order(\log(K)+L)$, and the last inequality uses \pref{lem:num_blocks_adaptive} to bound $B$ together with the fact that $\wtilC \leq C$.

Finally, if $K \leq 16H^2|S|^{3/2}$, one can bound regret by $\order(H^2|S|^{3/2})$.
The claimed bound thus follows.

\end{document}